\newcommand{\kf}{k^f}
\newcommand{\ks}{k^{\sigma}}
\newcommand{\kw}{k^w}
\newcommand{\ksum}{\kf + \ks}
\newcommand{\Hk}{\mathcal{H}_k}
\newcommand{\Hkf}{\mathcal{H}_{\kf}}
\newcommand{\Hksum}{\mathcal{H}_{\ksum}}
\newcommand{\Hkw}{\mathcal{H}_{\kw}}
\newcommand{\Hks}{\mathcal{H}_{\ks}}
\newcommand{\fmu}[1][\sigma]{f^\mu_{#1}}
\newcommand{\wmu}[1][\sigma]{w^\mu_{#1}}
\newcommand{\gmu}[1][\sigma]{g^\mu_{#1}}
\newcommand{\thetamu}[1][\sigma]{\theta^\mu_{#1}}
\newcommand{\fopt}{f^\star}
\newcommand{\wopt}{w^\star}
\newcommand{\thetaopt}{\theta^\star}
\newcommand{\ftr}{f^{\mathrm{tr}}}
\newcommand{\wtr}{w^{\mathrm{tr}}}
\newcommand{\thetatr}{\theta^{\mathrm{tr}}}
\newcommand{\ftrint}{f^{\mathrm{tr,int}}}
\newcommand{\wtrint}{w^{\mathrm{tr,int}}}
\newtheorem{theorem}{Theorem}
\newtheorem{lemma}{Lemma}
\newtheorem{corollary}{Corollary}
\newtheorem{proposition}{Proposition}
\theoremstyle{definition}
\newtheorem{assumption}{Assumption}
\theoremstyle{remark}
\newtheorem*{remark}{Remark}
\crefname{theorem}{Theorem}{Theorems}
\crefname{proposition}{Proposition}{Propositions}
\crefname{corollary}{Corollary}{Corollaries}
\crefname{lemma}{Lemma}{Lemmas}
\crefname{assumption}{Assumption}{Assumptions}
\crefname{equation}{Eq.}{Eqs.}
\crefname{section}{Section}{Sections}
\crefname{appendix}{Appendix}{Appendices}
\newcommand*{\addFileDependency}[1]{%
  \typeout{(#1)}
  \@addtofilelist{#1}
  \IfFileExists{#1}{}{\typeout{No file #1.}}
}
\newcommand*{\myexternaldocument}[1]{%
    \externaldocument{#1}%
    \addFileDependency{#1.tex}%
    \addFileDependency{#1.aux}%
}
\title{Optimal kernel regression bounds \\under energy-bounded noise}
\newif\ifabstractonly
\newif\ifchecklist
\author{%
    Amon Lahr \\
    ETH Zurich \\
    \texttt{amlahr@ethz.ch}
    \And
    Johannes Köhler\thanks{Both co-authors contributed equally; their ordering is alphabetical.} \\
    ETH Zurich \\
    \texttt{jkoehle@ethz.ch}
    \And
    Anna Scampicchio$^{*}$ \\
    ETH Zurich \\
    \texttt{ascampicc@ethz.ch}
    \And
    Melanie N. Zeilinger
    \\
  ETH Zurich \\
  \texttt{mzeilinger@ethz.ch}
}
\begin{document}

\maketitle

\begin{abstract}

Non-conservative uncertainty bounds 
are
key for both assessing 
an estimation algorithm's
accuracy
and in view of downstream tasks, such as its deployment in safety-critical contexts. 
In this paper, we 
derive
a
tight, 
non-asymptotic 
uncertainty bound for kernel-based estimation,
which
can also 
handle correlated 
noise sequences. 
Its computation 
relies on
a mild norm-boundedness assumption on the unknown function and the noise,
returning the worst-case function realization within the hypothesis class at an arbitrary query input location.
The value of this
function is shown
to be given
in terms of 
the posterior mean and covariance of a Gaussian process
for an optimal choice of the
measurement 
noise covariance.
By rigorously analyzing the proposed approach and comparing it with other results in the literature, we show its effectiveness in returning tight and easy-to-compute bounds for kernel-based estimates.
\end{abstract}

\ifabstractonly

\else

\setcounter{footnote}{0}

\section{Introduction}
Many problems in machine learning can be phrased in terms %
of estimating an unknown (continuous) function from a finite set of noisy data. A popular, non-parametric technique to perform such a task and return \emph{point-wise} estimates is given by the class of kernel-based methods~\citep{wahba_spline_1990,scholkopf_learning_2001,suykens_least_2002,shawe-taylor_kernel_2004,steinwart_support_2008}. %
Complementing such estimates with 
non-conservative and non-asymptotic uncertainty bounds %
enables
evaluating their reliability, 
for example, 
in view of deploying Bayesian optimization~\citep{berkenkamp_bayesian_2023,sui_stagewise_2018} or model-based reinforcement learning~\citep{kuss_gaussian_2003,chua_deep_2018} to safety-critical systems. 

Classical uncertainty bounds for kernel-based methods 
have been developed in statistical learning theory~\citep{cucker_mathematical_2002,cucker_learning_2007,guo_concentration_2013,lecue_regularization_2017,ziemann_learning_2024}. 
However,  these results are mostly aimed at characterizing the learning rate of the kernel-based algorithm and they tend to be difficult to apply in practice, being overly conservative or even depending on the unknown function to be estimated. %
Another viewpoint is given by 
Gaussian process (GP) regression~\citep{rasmussen_gaussian_2006},
a kernel-based method that 
is naturally endowed
with an uncertainty quantification mechanism.
However, 
closed-form uncertainty bounds are only available
when 
assuming independent and Gaussian-distributed variables~\citep{lederer_uniform_2019}, and their computation in other cases is non-trivial~\citep{gilks_markov_1995}. 
To address this issue, high-probability and non-asymptotic uncertainty bounds have been derived by~\cite{srinivas_information-theoretic_2012,abbasi-yadkori_online_2013,burnaev_efficiency_2014,fiedler_practical_2021,baggio_bayesian_2022,molodchyk_towards_2025}, phrasing the problem as estimation in Reproducing Kernel Hilbert Spaces~(RKHSs)~\citep{aronszajn_theory_1950,berlinet_reproducing_2004}. Yet, these bounds still heavily rely on the (conditional) independence of the noise sequence, which can be hard to satisfy in practice. 
This difficulty can be circumvented by %
leveraging 
an assumed bound on the noise~\citep{maddalena_deterministic_2021,reed_error_2025,scharnhorst_robust_2023} -- however, these results 
tend to be
conservative %
or rely on solving a computationally intensive, constrained optimization problem to evaluate the uncertainty bound.%

\paragraph*{Contribution} In this paper, we propose a novel non-asymptotic uncertainty bound for kernel-based estimation assuming a general bound on the noise energy.
In particular, 
for each query input location,
the proposed bound
\emph{exactly} characterizes the worst-case latent function within the given hypothesis class. 
The obtained uncertainty bound has the same structure as the high-probability uncertainty bounds from GP regression~\citep{srinivas_information-theoretic_2012,abbasi-yadkori_online_2013,fiedler_practical_2021,molodchyk_towards_2025}, 
but with a measurement noise covariance~$\sigma^2$ that depends on the test input location. 
Furthermore, we show that the derived bound recovers results from kernel interpolation~\citep{weinberger_optimal_1959,wendland_scattered_2004} and linear regression~\citep{fogel_system_1979} as special cases. 
Finally, we contrast the proposed robust treatment 
to
existing bounds for GP regression.

\paragraph*{Notation}
The matrix \mbox{$I_n \in \mathbb{R}^{n \times n}$} denotes the identity matrix of dimension $n$.
For a symmetric positive-semidefinite matrix \mbox{$A\in\mathbb{R}^{n\times n}$}, $A^{1/2}$ denotes the (positive-semidefinite) symmetric matrix square root, i.e., \mbox{$A^{1/2}A^{1/2}=A$}, and $\| x \|_A^2 \doteq x^\top A x$ denotes the weighted Euclidean norm of a vector~\mbox{$x \in \mathbb{R}^n$}. 
The Dirac delta function is denoted by \mbox{$\delta: \mathbb{R}^{n_x} \rightarrow \mathbb{R}$}, with \mbox{$\delta(x) = 1$} for \mbox{$x = 0$} and \mbox{$\delta(x) = 0$} otherwise.
Superscripts $f$ and $w$ will refer to the latent function and the noise, respectively. Accordingly, the kernel function is denoted by \mbox{$k^{\square}\colon \mathbb{R}^{n_x} \times \mathbb{R}^{n_x} \to \mathbb{R}$}, with \mbox{$\square \in \{f,w\}$}, and the associated RKHS is denoted by \mbox{$\mathcal{H}_{k^{\square}}$}. For two arbitrary ordered sets of indices \mbox{$\mathbb{I},\mathbb{J}\subseteq\mathbb{N}$}, the matrix \mbox{$K^{\square}_{\mathbb{I},\mathbb{J}} \doteq [k^{\square}(x_i,x_j)]_{i \in \mathbb{I}, j \in \mathbb{J}}$} is the Gram matrix collecting the evaluations of the kernel function $k^{\square}$ at pairs of input locations $x_i$, $x_j$,
with \mbox{$i \in \mathbb{I}$} and \mbox{$j \in \mathbb{J}$}.
We denote by \mbox{$1 : N \doteq \{1,\dots,N\}\subseteq\mathbb{N}$} the set of indices for the training data points, while we use $x_{N+1}$ to represent the arbitrary test input. For instance, \mbox{$K^f_{N+1,1:N} \in \mathbb{R}^{1 \times N}$} corresponds to $\begin{bmatrix}
  \kf(x_{N+1},x_1), \ldots, \kf(x_{N+1},x_N)
\end{bmatrix}$ and can be interpreted as the covariance matrix between the test- and training-input locations.

\section{Problem set-up}
We consider the problem of estimating an unknown latent function \mbox{$\ftr \colon \mathcal{X} \to \mathbb{R}$}, with $\mathcal{X} \subseteq \mathbb{R}^{n_x}$ from noisy measurements
\begin{align}
  \label{eq:setup_data}
  y_i  &= \ftr(x_i) + \wtr(x_i), \quad i = 1, \ldots, N,
\end{align}
collected at known training input locations \mbox{$x_i \in \mathcal{X}$}. %
Our goal is to compute worst-case uncertainty bounds around the latent function $\ftr$ given the observed data set \mbox{$\mathcal{D} \doteq \{ (x_i, y_i) \}_{i=1}^{N}$}, which is subject to the unknown
noise~\mbox{$\wtr \colon \mathcal{X} \to \mathbb{R}^{}$}. 
We 
phrase 
the problem in the framework of estimation in RKHSs~\citep{aronszajn_theory_1950,berlinet_reproducing_2004}, and we model both the 
latent function~$\ftr$ 
and the 
noise~$\wtr$ 
as elements of an RKHS with a known kernel and a bound on their RKHS norm. %
\begin{assumption}
  \label{ass:RKHS_norm_fw}
  The unknown latent and noise functions are respective elements of the RKHSs corresponding to the 
  positive-semidefinite kernel \mbox{$\kf \colon \mathcal{X} \times \mathcal{X} \rightarrow \mathbb{R}_{\geq 0}$}
  and the positive-definite kernel \mbox{$\kw \colon \mathcal{X} \times \mathcal{X} \rightarrow \mathbb{R}_{\geq 0}$}, 
  where both $\kf$ and $\kw$ are uniformly bounded.
  There exist 
  known constants $\Gamma_f$, $\Gamma_w>0$ 
  strictly bounding their respective RKHS norms,
  i.e.,
  \mbox{$\| \ftr \|_{\Hkf}^2 < \Gamma_f^2$} and 
  \mbox{$\| \wtr \|_{\Hkw}^2 < \Gamma_w^2$}.
\end{assumption}

Characterizing boundedness of the noise using a kernel~$\kw$ and an RKHS-norm bound~$\Gamma_w^2$ provides a very general description and can model various scenarios: for instance, it captures the setting in which the noise sequence has bounded energy, as we elucidate in~\cref{sec:discussion_noise_RKHS}. Additionally, modeling noise as a deterministic quantity allows us to by-pass additional assumptions on the distribution or independence of the noise, latent function or input locations.

Since we model both the latent function and the noise as deterministic objects, there cannot be multiple output measurements at the same input location. Hence, we consider distinct inputs. 
\begin{assumption}
  \label{ass:distinct_input_locations}
  The training input locations in \mbox{$\mathbb{X} \doteq \{ x_1, \ldots x_N \}$} are pairwise distinct, i.e., \mbox{$x_i \neq x_j$} for all $i,j=1,\ldots,N$ and $i \neq j$.
\end{assumption}

\section{Kernel regression bounds for energy-bounded noise}\label{sec:main_result}
In the following, we present the main result of our paper, 
determining tight point-wise uncertainty bounds \mbox{$\underline{f}(x_{N+1}) \leq \ftr(x_{N+1}) \leq \overline{f}(x_{N+1})$} for the value of the latent function at an arbitrary test point \mbox{$x_{N+1} \in \mathcal{X}$}. This task can be formulated as an infinite-dimensional optimization problem, taking the bounded-RKHS-norm assumption into account.
The optimal upper bound $\overline{f}(x_{N+1})$ is defined as
\begin{subequations}
  \label{eq:sup_infdim_opt}
  \begin{align}
    \overline{f}(x_{N+1}) = \sup_{\substack{
      f \in \Hkf, \\
      w \in \Hkw \\ %
    }} 
    \quad & f(x_{N+1}) \\
    \mathrm{s.t.} \quad 
    & f(x_i) + w(x_i) = y_i, \> i=1,\ldots,N, \label{eq:sup_infdim_opt_data} \\
    & \|f\|_{\Hkf}^2 \leq \Gamma_f^2, \label{eq:sup_infdim_opt_bnd_f} \\
    & \|w\|_{\Hkw}^2 \leq \Gamma_w^2. \label{eq:sup_infdim_opt_bnd_w}
  \end{align}
\end{subequations}
Analogously, the optimal lower bound is given by:
\begin{subequations}
  \label{eq:inf_infdim_opt}
  \begin{align}
    \underline{f}(x_{N+1}) = \inf_{\substack{
      f \in \Hkf, \\
      w \in \Hkw \\ %
    }} 
    \quad & f(x_{N+1}) \\
    \mathrm{s.t.} \quad 
    & \text{\eqref{eq:sup_infdim_opt_data}-\eqref{eq:sup_infdim_opt_bnd_w}}.
  \end{align}
\end{subequations}
In the following, we focus on computing the upper bound $\overline{f}(x_{N+1})$; for the lower bound, the presented results 
in \cref{sec:relaxed_solution,sec:recovering_optimal_solution}
are analogously derived in~\cref{sec:app_relaxed_proof,sec:app_optimal_proof}, respectively.

Stated as in~\eqref{eq:sup_infdim_opt}, the optimization problem is infinite-dimensional and is not directly tractable. %
Our key result, presented in the remainder of the section, consists in finding an exact reformulation of the constrained, infinite-dimensional problem~\eqref{eq:sup_infdim_opt} as a scalar, unconstrained one.
The solution of the latter
at an arbitrary input location is expressed in terms of familiar quantities from Gaussian process regression, for an optimal choice of measurement noise covariance.
We present our derivation by first studying a relaxed formulation of this optimization problem in \cref{sec:relaxed_solution}. Then, in~\cref{sec:recovering_optimal_solution}, we discuss how to recover the optimal solution from the relaxed problem.

\subsection{Relaxed solution}
\label{sec:relaxed_solution}

The relaxed formulation of 
optimization problem~\eqref{eq:sup_infdim_opt}
considers the sum of the RKHS-norm constraints~\eqref{eq:sup_infdim_opt_bnd_f},~\eqref{eq:sup_infdim_opt_bnd_w}
instead of enforcing them 
individually:
\begin{subequations}
  \label{eq:sup_infdim_relax}
  \begin{align}
    \overline{f}^\sigma(x_{N+1}) = \sup_{\substack{
      f \in \Hkf, \\
      w \in \Hks \\ %
    }} 
    \quad & f(x_{N+1}) \\
    \mathrm{s.t.} \quad 
    & f(x_i) + w(x_i) = y_i, \> i=1,\ldots,N, \label{eq:sup_infdim_relax_data} \\
    & \|f\|_{\Hkf}^2 + \|w\|_{\Hks}^2 \leq \Gamma_f^2 + \frac{\Gamma_w^2}{\sigma^2}. \label{eq:sup_infdim_relax_bnd_fw}
  \end{align}
\end{subequations}
This problem uses a scaled noise kernel \mbox{$\ks(x,x') \doteq \sigma^2 \kw(x,x')$} 
with a constant output scale \mbox{$\sigma > 0$}, 
which
is key in relating the solution of the relaxed problem~\eqref{eq:sup_infdim_relax} to the original formulation~\eqref{eq:sup_infdim_opt}. Additionally, note that the scaling implies \mbox{$\|w\|_{\Hks}^2 = \|w\|_{\Hkw}^2/\sigma^2 \leq \Gamma_w^2/\sigma^2$}, as displayed in the constraint~\eqref{eq:sup_infdim_relax_bnd_fw}.

The 
bound~$\overline{f}^\sigma(x_{N+1})$
obtained from
the relaxed problem depends on the noise parameter~$\sigma$,
as the joint RKHS-norm constraint~\eqref{eq:sup_infdim_relax_bnd_fw} is given by a weighted sum of both original constraints~\eqref{eq:sup_infdim_opt_bnd_f} and~\eqref{eq:sup_infdim_opt_bnd_w}.
Any feasible solution of~\eqref{eq:sup_infdim_opt}
--
a tuple~$(f, w)$ of functions satisfying the constraints~\eqref{eq:sup_infdim_opt_data}-\eqref{eq:sup_infdim_opt_bnd_w}
--
is also a feasible solution for Problem~\eqref{eq:sup_infdim_relax}
for 
any
\mbox{$\sigma \in (0, \infty)$}.
Thus, 
\mbox{$\ftr(x_{N+1}) \leq \overline{f}(x_{N+1}) \leq \overline{f}^\sigma(x_{N+1})$},
i.e.,
the uncertainty envelope obtained by solving the relaxed problem~\eqref{eq:sup_infdim_relax} contains the 
one
obtained by solving the original problem~\eqref{eq:sup_infdim_opt}
for all test points~\mbox{$x_{N+1} \in \mathcal{X}$} and noise parameters~\mbox{$\sigma \in (0, \infty)$}.

Before stating the first result of this paper,
the following definitions 
in terms of known quantities from Gaussian process regression 
are required.
First, we define
\begin{subequations}
\label{eq:gp_mean_and_covar}
\begin{align}
  \fmu(x_{N+1}) &\doteq K^f_{N+1,1:N} \left( K^f_{1:N,1:N} + \sigma^2 K^w_{1:N,1:N} \right)^{-1} y, 
  \label{eq:gp_mean}
  \\
  \Sigma^f_\sigma(x_{N+1}) &\doteq K^f_{N+1,N+1} - K^f_{N+1,1:N} \left( K^f_{1:N,1:N} + \sigma^2 K^w_{1:N,1:N} \right)^{-1} K^f_{1:N,N+1},
  \label{eq:gp_covariance}
\end{align}
\end{subequations}
with measurements $y \doteq \begin{bmatrix}
    y_1, \ldots, y_N
\end{bmatrix}^\top \in \mathbb{R}^N$. 
For the particular
choice of noise kernel as 
\mbox{$\ks(x,x') = \sigma^2 \kw(x,x') = \sigma^2 \delta(x-x')$},
it holds that \mbox{$K^w_{1:N,1:N} = I_N$}
and
the above quantities 
respectively
correspond to 
the GP posterior mean and covariance
for 
independently and identically distributed (i.i.d.)
Gaussian measurement noise with covariance~$\sigma^2$~\cite[Chapter~2]{rasmussen_gaussian_2006}.
Additionally, 
we 
denote by
\begin{align}
  \| \gmu \|_{\Hksum}^2 &\doteq \min_{g \in \Hksum} \quad  \| g \|_{\Hksum}^2 \notag \\
  & \quad \quad \quad \quad \mathrm{s.t.} \quad g(x_i) = y_i, \> i = 1, \ldots, N, \notag \\
  &= y^\top \left( K^f_{1:N,1:N} + \sigma^2 K^w_{1:N,1:N} \right)^{-1} y
  \label{eq:minimum_norm_interpolant_sum_spaces}
\end{align}
the RKHS norm of the minimum-norm interpolant 
in the RKHS~$\Hksum$ defined for the sum of kernels $\ksum$~\cite[Theorem 58]{berlinet_reproducing_2004}.
Lastly,
\begin{align}
  \beta_\sigma^2 &\doteq \Gamma_f^2 + \frac{\Gamma_w^2}{\sigma^2} - \| \gmu \|_{\Hksum}^2
  \label{eq:beta_sigma}
\end{align}
defines the maximum norm~\eqref{eq:sup_infdim_relax_bnd_fw} in the RKHS~$\Hksum$ based on \cref{ass:RKHS_norm_fw}, 
reduced by
the minimum norm required to 
interpolate
the data.

The relaxed problem~\eqref{eq:sup_infdim_relax} admits
the following
closed-form
analytical solution. 
\begin{lemma}
  \label{thm:relaxed_upper_bound}
  Let 
  \cref{ass:RKHS_norm_fw,ass:distinct_input_locations} 
  hold.
  Then, 
  the 
  solution of 
  Problem~\eqref{eq:sup_infdim_relax} is given by
  \begin{align}
    \overline{f}^\sigma(x_{N+1}) = \fmu(x_{N+1}) + \beta_\sigma \sqrt{\Sigma^f_\sigma(x_{N+1}) }.
    \label{eq:relaxed_upper_bound}
  \end{align}
\end{lemma}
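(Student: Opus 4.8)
The plan is to recast Problem~\eqref{eq:sup_infdim_relax} as the maximization of a bounded linear functional over a norm ball in a single Hilbert space, and then to exploit an orthogonal decomposition together with the Cauchy--Schwarz inequality. Concretely, I would work in the product Hilbert space $\mathcal{H} \doteq \Hkf \times \Hks$ equipped with $\langle (f_1,w_1),(f_2,w_2)\rangle_{\mathcal{H}} \doteq \langle f_1,f_2\rangle_{\Hkf} + \langle w_1,w_2\rangle_{\Hks}$, so that the squared norm of $h = (f,w)$ equals $\|f\|_{\Hkf}^2 + \|w\|_{\Hks}^2$, matching the budget constraint~\eqref{eq:sup_infdim_relax_bnd_fw}. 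By the reproducing property, both the objective and the interpolation constraints are bounded linear functionals on $\mathcal{H}$: with the representers
\[
  \phi_0 \doteq (\kf(\cdot,x_{N+1}),\,0), \qquad \phi_i \doteq (\kf(\cdot,x_i),\,\ks(\cdot,x_i)), \quad i = 1,\ldots,N,
\]
one has $f(x_{N+1}) = \langle h,\phi_0\rangle_{\mathcal{H}}$ and $f(x_i)+w(x_i) = \langle h,\phi_i\rangle_{\mathcal{H}}$, so~\eqref{eq:sup_infdim_relax} becomes: maximize $\langle h,\phi_0\rangle_{\mathcal{H}}$ subject to $\langle h,\phi_i\rangle_{\mathcal{H}} = y_i$ and $\|h\|_{\mathcal{H}}^2 \le \Gamma_f^2 + \Gamma_w^2/\sigma^2$. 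Note that $\phi_0$ has a vanishing noise component, reflecting that the objective involves only $f$.

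Next, I would record the Gram quantities $\langle \phi_0,\phi_0\rangle_{\mathcal{H}} = K^f_{N+1,N+1}$, $\langle \phi_0,\phi_i\rangle_{\mathcal{H}} = K^f_{N+1,i}$, and $\langle \phi_i,\phi_j\rangle_{\mathcal{H}} = K^f_{ij} + \sigma^2 K^w_{ij}$, so that the Gram matrix of the constraint representers is $G \doteq K^f_{1:N,1:N} + \sigma^2 K^w_{1:N,1:N}$, which is invertible since $K^w_{1:N,1:N} \succ 0$ under \cref{ass:distinct_input_locations} and $K^f_{1:N,1:N} \succeq 0$. Letting $V \doteq \mathrm{span}\{\phi_i\}_{i=1}^N$, every feasible $h$ decomposes as $h = h_0 + u$ with $h_0 \doteq \sum_{i} a_i \phi_i$, $a \doteq G^{-1}y$, and $u \in V^\perp$; here $h_0$ satisfies $\langle h_0,\phi_i\rangle_{\mathcal{H}} = y_i$ and is the minimum-norm interpolant, with $\|h_0\|_{\mathcal{H}}^2 = y^\top G^{-1} y = \|\gmu\|_{\Hksum}^2$ by~\eqref{eq:minimum_norm_interpolant_sum_spaces}. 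Since $h_0 \perp V^\perp$, the budget splits as $\|h\|_{\mathcal{H}}^2 = \|h_0\|_{\mathcal{H}}^2 + \|u\|_{\mathcal{H}}^2$.

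Finally, writing $\phi_0 = \phi_0^{\parallel} + \phi_0^{\perp}$ for the orthogonal projection onto $V$ and its complement, the objective becomes $\langle h,\phi_0\rangle_{\mathcal{H}} = \langle h_0,\phi_0\rangle_{\mathcal{H}} + \langle u,\phi_0^{\perp}\rangle_{\mathcal{H}}$, using $u \perp V \ni \phi_0^{\parallel}$. The first term equals $K^f_{N+1,1:N}\,G^{-1}y = \fmu(x_{N+1})$ by~\eqref{eq:gp_mean}; the second is maximized, via Cauchy--Schwarz, at $\|u\|_{\mathcal{H}}\|\phi_0^{\perp}\|_{\mathcal{H}}$, attained by aligning $u$ with $\phi_0^{\perp}$ and spending the full remaining budget $\|u\|_{\mathcal{H}}^2 = \Gamma_f^2 + \Gamma_w^2/\sigma^2 - \|\gmu\|_{\Hksum}^2 = \beta_\sigma^2$ from~\eqref{eq:beta_sigma}. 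Since a short computation gives $\|\phi_0^{\perp}\|_{\mathcal{H}}^2 = K^f_{N+1,N+1} - K^f_{N+1,1:N}\,G^{-1}K^f_{1:N,N+1} = \Sigma^f_\sigma(x_{N+1})$ by~\eqref{eq:gp_covariance}, this assembles into~\eqref{eq:relaxed_upper_bound}.

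I expect two points to require care. First, the reduction to $\mathrm{span}\{\phi_0,\phi_1,\ldots,\phi_N\}$ must be justified so that the supremum is attained: projecting any feasible $h$ onto this subspace preserves the objective and all constraints while not increasing $\|h\|_{\mathcal{H}}$, so restricting to it is without loss. Second, one must verify $\beta_\sigma^2 > 0$, so that the square root is well defined and interpolation does not already exhaust the budget; this holds because the strict norm bounds in \cref{ass:RKHS_norm_fw} make $(\ftr,\wtr)$ strictly feasible, whence $\|\gmu\|_{\Hksum}^2 \le \|\ftr\|_{\Hkf}^2 + \|\wtr\|_{\Hks}^2 < \Gamma_f^2 + \Gamma_w^2/\sigma^2$.
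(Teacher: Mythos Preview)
Your proposal is correct and takes a genuinely different route from the paper. The paper first invokes a representer-theorem argument to reduce~\eqref{eq:sup_infdim_relax} to a finite-dimensional program in coefficients $(\alpha^f,c^w)$, then applies two successive matrix coordinate transformations---an SVD of $K^f_{1:N+1,1:N+1}$ to handle possible rank deficiency, followed by a QR decomposition of the stacked feature/noise matrix---so that the interpolation constraint fixes one block of variables and the remaining block is free inside a norm ball. The resulting linear program over a Euclidean ball is solved explicitly, and the answer is then back-substituted through the QR and SVD factors to recover the GP mean, covariance, and $\beta_\sigma$.

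Your argument bypasses all of the matrix algebra by working intrinsically in the product Hilbert space $\Hkf\times\Hks$: the affine constraint set is an affine subspace, the minimum-norm interpolant $h_0$ is the foot of the perpendicular, and the objective splits orthogonally into the fixed term $\langle h_0,\phi_0\rangle = \fmu(x_{N+1})$ plus $\langle u,\phi_0^\perp\rangle$, which is maximized by Cauchy--Schwarz with the remaining norm budget. This is essentially the optimal-recovery/hypercircle argument of Golomb--Weinberger, which the paper itself cites as a special case in~\eqref{eq:pseudo_golomb}; you are applying it in the product space with the augmented kernel. The advantage of your approach is conceptual clarity and brevity; the identification of $\|h_0\|^2$, $\langle h_0,\phi_0\rangle$, and $\|\phi_0^\perp\|^2$ with $\|\gmu\|_{\Hksum}^2$, $\fmu$, and $\Sigma^f_\sigma$ falls out of the Gram structure immediately. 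The paper's coordinate-based approach, while heavier, yields explicit parametrizations $(\theta,\delta_1,\delta_2)$ that it reuses in the proof of \cref{thm:optimal_solution}, where the Lagrange-multiplier and active-set analysis of the two-constraint problem benefits from having concrete finite-dimensional variables.
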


\emph{Sketch of proof:} 
First, 
following arguments from the Representer Theorem~\citep{kimeldorf_results_1971,goos_generalized_2001}, we show 
that the solution of Problem~\eqref{eq:sup_infdim_relax} is finite-dimensional. 
Next, 
two coordinate transformations are employed to reduce the number of free variables resulting from the interpolation constraint~\eqref{eq:sup_infdim_relax_data}, and to address the possible rank-deficiency of the kernel matrix $K^f_{1:N+1,1:N+1}$ for the latent function at the test and training input locations.
Finally, the problem is reduced to an equivalent linear program with a norm-ball constraint that can be analytically solved.
Expressing the solution in terms of the original coordinates then 
leads to~\eqref{eq:relaxed_upper_bound}.
The detailed proof can be found in
\cref{sec:app_relaxed_proof}.

This leads to
the relaxed bound
\mbox{$\underline{f}^\sigma(x_{N+1}) \leq f(x_{N+1}) \leq \overline{f}^\sigma(x_{N+1})$},
valid for all \mbox{$\sigma \in (0,\infty)$}. 
Due to the relaxation,
the 
obtained 
upper and lower bounds
are
conservative 
with respect to the original problems~\eqref{eq:sup_infdim_opt} and~\eqref{eq:inf_infdim_opt} 
-- 
nevertheless,
the optimal solutions 
of~\eqref{eq:sup_infdim_opt},~\eqref{eq:inf_infdim_opt} 
can be retrieved 
for a suitable choice of the noise parameter~$\sigma$, as shown in the following subsection.

\subsection{Optimal solution}
\label{sec:recovering_optimal_solution}

Our main result is formulated in the following theorem. 

\begin{theorem}
  \label{thm:optimal_solution}
  Let \cref{ass:RKHS_norm_fw,ass:distinct_input_locations} hold. 
  Then, 
  the solution of Problem~\eqref{eq:sup_infdim_opt} is given by
  \begin{align}
    \label{eq:optimal_solution_inf}
    \overline{f}(x_{N+1}) 
    &= \inf_{\sigma \in (0, \infty)} \> \overline{f}^\sigma (x_{N+1}). 
  \end{align}
\end{theorem}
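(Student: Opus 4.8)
The plan is to establish the two inequalities between $\overline{f}(x_{N+1})$ and $\inf_{\sigma\in(0,\infty)}\overline{f}^\sigma(x_{N+1})$ separately. The inequality $\overline{f}(x_{N+1}) \le \inf_{\sigma}\overline{f}^\sigma(x_{N+1})$ is already established in the text, since every feasible point of~\eqref{eq:sup_infdim_opt} is feasible for the relaxed problem~\eqref{eq:sup_infdim_relax} for every $\sigma$. The entire work therefore lies in the reverse inequality, for which I would exhibit a single noise parameter $\sigma^\star$ whose relaxed maximizer $(\fopt,\wopt)$ (which depends on $\sigma$) is \emph{feasible for the original problem}~\eqref{eq:sup_infdim_opt}. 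Once such a $\sigma^\star$ is found, $(\fopt,\wopt)$ is a candidate for the original supremum, so $\overline{f}(x_{N+1}) \ge \fopt(x_{N+1}) = \overline{f}^{\sigma^\star}(x_{N+1}) \ge \inf_\sigma\overline{f}^\sigma(x_{N+1})$, and combined with the weak inequality this sandwiches all three quantities into equality (and shows the infimum is attained at $\sigma^\star$).

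The key structural observation is the following. Since~\eqref{eq:sup_infdim_relax} maximizes the linear functional $f\mapsto f(x_{N+1})$ over a norm-ball in $\Hksum$, its maximizer saturates the joint constraint~\eqref{eq:sup_infdim_relax_bnd_fw}, i.e.\ $\|\fopt\|_{\Hkf}^2 + \|\wopt\|_{\Hks}^2 = \Gamma_f^2 + \Gamma_w^2/\sigma^2$. Using $\|\wopt\|_{\Hks}^2 = \|\wopt\|_{\Hkw}^2/\sigma^2$ and introducing the two per-constraint slacks $a(\sigma)\doteq\|\fopt\|_{\Hkf}^2-\Gamma_f^2$ and $b(\sigma)\doteq\|\wopt\|_{\Hkw}^2-\Gamma_w^2$, saturation reads $a(\sigma) = -b(\sigma)/\sigma^2$. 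Original feasibility requires $a(\sigma)\le 0$ and $b(\sigma)\le 0$ simultaneously; but this identity shows that $a$ and $b$ always carry opposite signs (or vanish together), so the relaxed maximizer is feasible for~\eqref{eq:sup_infdim_opt} \emph{if and only if} $a(\sigma)=b(\sigma)=0$, i.e.\ both individual RKHS-norm constraints~\eqref{eq:sup_infdim_opt_bnd_f},~\eqref{eq:sup_infdim_opt_bnd_w} are active. Thus the reverse inequality reduces to producing some $\sigma^\star\in(0,\infty)$ that makes both constraints tight.

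To produce such a $\sigma^\star$ I would apply the intermediate value theorem to $a(\sigma)$. From the closed form in~\cref{thm:relaxed_upper_bound}, the maximizer depends continuously on $\sigma$ through the invertible matrix $K^f_{1:N,1:N}+\sigma^2 K^w_{1:N,1:N}$, so $a$ is continuous on $(0,\infty)$. As $\sigma\to 0^+$, the joint constraint degenerates to $\|w\|_{\Hkw}^2\le\Gamma_w^2$ with $f$ effectively unconstrained, so $\|\fopt\|_{\Hkf}^2\to\infty$ and $a(\sigma)\to+\infty$; as $\sigma\to\infty$, it degenerates to $\|f\|_{\Hkf}^2\le\Gamma_f^2$ with $w$ free, so that $\|\fopt\|_{\Hkf}^2\to\Gamma_f^2$ and $a(\sigma)\le 0$ for large $\sigma$ (generically the interpolation residual must be absorbed by $w$, forcing $b>0$ and hence $a<0$). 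Continuity together with this sign change yields $\sigma^\star$ with $a(\sigma^\star)=0$, which by the previous paragraph closes the argument.

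I expect the main obstacle to be exactly this last step: making the two limiting regimes rigorous, ensuring the relaxed maximizer is (essentially) unique so that $a(\sigma)$ is well defined and continuous, and handling degenerate data configurations in which $a$ might fail to change sign (e.g.\ when the data are already interpolable by $f$ alone within its budget, in which case the bound is trivial and must be treated separately). An equivalent and possibly more robust route is to phrase the claim as a strong-duality statement: the $\sigma$-family is precisely the Lagrangian relaxation of the two quadratic constraints of~\eqref{eq:sup_infdim_opt} along the ray of multipliers with ratio $1/\sigma^2$, and the single-constraint relaxed problem~\eqref{eq:sup_infdim_relax} admits a lossless S-procedure by~\cref{thm:relaxed_upper_bound}; showing that $\inf_\sigma\overline{f}^\sigma$ equals the dual value of~\eqref{eq:sup_infdim_opt} and that this problem has zero duality gap would deliver the result without tracking the maximizer through its limits. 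In either formulation I anticipate the stationarity condition $\tfrac{d}{d\sigma}\overline{f}^\sigma(x_{N+1})=0$ to coincide with $a(\sigma)=0$, which is the cleanest certificate that an interior minimizing $\sigma^\star$ renders the relaxed solution feasible for the original problem.
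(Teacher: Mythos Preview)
Your high-level split into two inequalities is fine, and the observation that saturation of the joint constraint forces $a(\sigma)=-b(\sigma)/\sigma^{2}$, so original feasibility of the relaxed maximizer is equivalent to $a=b=0$, is exactly the right mechanism. Where your plan breaks down is in the claim that an interior $\sigma^{\star}$ with $a(\sigma^{\star})=0$ can always be produced by the intermediate value theorem. The cases you dismiss as ``degenerate'' are not edge cases at all: they are precisely \cref{thm:optimal_solution_case1,thm:optimal_solution_case2} in the paper, and in those regimes $a$ never changes sign on $(0,\infty)$. Concretely, when only~\eqref{eq:sup_infdim_opt_bnd_f} is active at the optimum of the original problem (the noise budget is generous enough that the data-agnostic maximizer $f^{\star}=\Gamma_f\,\kf(\cdot,x_{N+1})/\sqrt{\kf(x_{N+1},x_{N+1})}$ leaves an interpolable residual), the relaxed maximizer has $b(\sigma)\to b_\infty<0$ as $\sigma\to\infty$, hence $a(\sigma)=-b(\sigma)/\sigma^{2}>0$ for every finite $\sigma$; the infimum over $\sigma$ is only reached as $\sigma\to\infty$ and the bound is the nontrivial prior value $\Gamma_f\sqrt{K^f_{N+1,N+1}}$, not something ``trivial''. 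Symmetrically, when only~\eqref{eq:sup_infdim_opt_bnd_w} is active (rank-deficient $K^f_{1:N+1,1:N+1}$, e.g.\ finite-dimensional features or $x_{N+1}\in\mathbb{X}$), your limit $a(\sigma)\to+\infty$ as $\sigma\to0^{+}$ fails because $f$ has no free direction at the test point; here $a(\sigma)<0$ throughout and the infimum is at $\sigma\to0$. So your IVT argument covers only the interior case where both constraints bind.

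Your alternative route via strong duality is essentially what the paper does for that interior case: it verifies that the KKT system of the relaxed problem at $\sigma^{\star}=\sqrt{\lambda_f^{\star}/\lambda_w^{\star}}$ coincides with the KKT system of the original problem, so the primal solutions agree. But note that the parametrization $(\lambda,\sigma)\mapsto(\lambda_f,\lambda_w)=(\lambda,\lambda/\sigma^{2})$ misses the coordinate axes $\{\lambda_f>0,\lambda_w=0\}$ and $\{\lambda_f=0,\lambda_w>0\}$ of the dual orthant, so the identification of $\inf_\sigma\overline{f}^{\sigma}$ with the dual value still needs a closure argument at that boundary --- this is exactly the content of the paper's separate treatment of Cases~1 and~2, where it computes the limits $\lim_{\sigma\to\infty}\overline{f}^{\sigma}$ and $\lim_{\sigma\to0}\overline{f}^{\sigma}$ explicitly, checks primal feasibility of the limiting maximizer, and then argues that those limits coincide with $\inf_{\sigma\in(0,\infty)}\overline{f}^{\sigma}$. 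In short, your plan captures the generic case cleanly, but the two boundary regimes are genuine and must be handled by a case distinction rather than absorbed into the IVT step.
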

\emph{Sketch of proof:} Similarly to~\cref{thm:relaxed_upper_bound}, we 
first show
that Problem~\eqref{eq:sup_infdim_opt} admits a finite-dimensional representation.
The latter is
analyzed
depending on which of the constraints~\eqref{eq:sup_infdim_opt_bnd_f} and~\eqref{eq:sup_infdim_opt_bnd_w} are active, i.e., 
influence the optimal solution
and have a corresponding strictly positive optimal Lagrange multiplier $\lambda^{f,\star}, \lambda^{w,\star}$. 
This leads to three 
non-trivial
scenarios: 
In Case 1, it holds that \mbox{$\lambda^{f,\star} > 0$}, \mbox{$\lambda^{w,\star} = 0$} and the optimal solution of~\eqref{eq:sup_infdim_opt} can be recovered by the relaxed problem for \mbox{$\sigma^\star \rightarrow \infty$},
for which 
the combined RKHS-norm constraint~\eqref{eq:sup_infdim_relax_bnd_fw} reduces to~\eqref{eq:sup_infdim_opt_bnd_f}.
In Case 2, \mbox{$\lambda^{f,\star} = 0$}, \mbox{$\lambda^{w,\star} > 0$} and \mbox{$\sigma^\star \rightarrow 0$} recovers the optimal solution,
rendering the constraint~\eqref{eq:sup_infdim_relax_bnd_fw} equivalent to~\eqref{eq:sup_infdim_opt_bnd_w}. 
In Case~3, 
both constraints are active
and
the optimal noise parameter is 
determined by
\mbox{$\sigma^\star = \sqrt{\lambda^{f,\star} / \lambda^{w,\star}} \in (0, \infty)$}, i.e.,
the
\emph{ratio of the optimal 
Lagrange 
multipliers}.
The set of active constraints at the optimal solution can be determined by case distinction, based on the feasibility of the primal solutions under the 
respective active-constraint set. 
Finally, it is shown that 
the optimal noise parameter~$\sigma^\star$ in 
all three cases 
minimizes~\eqref{eq:optimal_solution_inf}.
This is illustrated in \cref{fig:optimal_solution_illustr}, 
which depicts the optimal noise parameters~$\sigma^\star_{\sup}$,~$\sigma^\star_{\inf}$, 
for which the relaxed upper and lower bound, 
$\overline{f}^\sigma(x_{N+1})$ and $\underline{f}^\sigma(x_{N+1})$, 
correspond to the optimal bounds,
$\overline{f}(x_{N+1})$ and $\underline{f}(x_{N+1})$,
respectively.
The detailed proof can be found in \cref{sec:app_optimal_proof}.

\begin{figure}
  \includegraphics{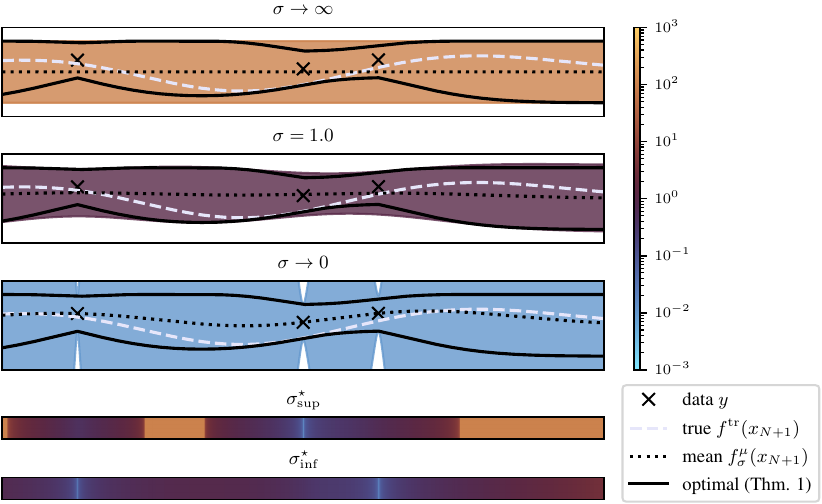}
  \caption{Illustrative example for \cref{thm:optimal_solution}. The optimal upper and lower bounds (solid black) for the (unknown) latent function $f^{\mathrm{tr}}$ (dashed white) are determined by the 
  relaxed bounds~(shaded)
  around the 
  GP
  posterior mean (dotted black) 
  for an optimal choice of noise 
  parameter~$\sigma^\star_{\sup}$ 
  (upper bound) and~$\sigma^\star_{\inf}$ (lower bound). The three upper plots show the relaxed upper and lower bounds, $\overline{f}^\sigma$ and $\underline{f}^\sigma$ for the values \mbox{$\sigma = \{ 10^{2}, 10^0, 10^{-2} \}$}, respectively. The two bottom colorbars indicate the respective optimal values~$\sigma^\star_{\sup}$ and~$\sigma^\star_{\inf}$ for the upper and lower bound. The plotted relaxed upper (lower) bounds equal the optimal upper (lower) bound for each test point where the color of the shaded area matches the color indicated in the colorbar for the optimal value $\sigma^\star_{\sup}$ ($\sigma^\star_{\inf}$).}
  \label{fig:optimal_solution_illustr}
\end{figure}

\cref{thm:optimal_solution}
reduces the solution of the infinite-dimensional optimization problem~\eqref{eq:sup_infdim_opt}
to a \emph{scalar, 
unconstrained} optimization problem 
over the noise parameter~$\sigma$.
As such, it is amenable for efficient 
iterative optimization.
Since 
running a fixed number of iterations of,
e.g., 
gradient descent
applied to Problem~\eqref{eq:optimal_solution_inf}
returns a valid, improved upper bound~$\overline{f}^\sigma(x_{N+1})$,
this allows for iterative refinement of the uncertainty envelope.
The solution thereby obtained can thus be easily integrated into existing pipelines for downstream tasks, 
such as uncertainty quantification in
streaming-data settings
or model-based reinforcement learning~\citep{deisenroth_pilco_2011,berkenkamp_safe_2017,kamthe_data-efficient_2018}.

\subsection{Special cases}\label{sec:edge_cases}

For both cases with only one active constraint,
the optimal bound 
can be determined directly in closed form, 
without optimizing for the noise parameter~$\sigma$.
In the following, we provide the respective optimal solutions, 
as well as easy-to-evaluate expressions for determining 
the active constraint set.
Noteworthy, 
the analytic solutions 
recover known bounds
in specific regression settings,
highlighting that the proposed bound 
is
a generalization thereof;
we detail these connections in
\cref{sec:discussion_noise_RKHS}. 

\paragraph{Case 1 ($\sigma \rightarrow \infty$).} 
When
the 
value~$\Gamma_w^2$ 
is sufficiently permissive,
constraint~\eqref{eq:sup_infdim_opt_bnd_w} does not influence the optimal solution of~\eqref{eq:sup_infdim_opt}.
This leads to the optimal latent function~$\fopt$ 
being chosen irrespective 
of the training data, 
while the optimal noise function~$\wopt$ 
ensures consistency with the data~\eqref{eq:sup_infdim_opt_data}.
The optimal bound~$\overline{f}(x_{N+1})$
is then
given by the prior GP covariance inflated by the full available RKHS norm $\Gamma_f$, 
recovering a classical kernel interpolation bound~\cite[Eq.~(9.7)]{fasshauer_kernel-based_2015}.

\begin{proposition}
  \label{thm:optimal_solution_case1}
  Let \cref{ass:RKHS_norm_fw,ass:distinct_input_locations} hold. 
  If 
  \begin{align}
    \left\| y - K^f_{1:N,N+1} \frac{\Gamma_f}{\sqrt{K^f_{N+1,N+1}}} \right\|_{(K^w_{1:N,1:N})^{-1}}^2 \leq \Gamma_w^2,
    \label{eq:optimal_solution_case1_feasibility}
  \end{align}
  then the solution of~\eqref{eq:sup_infdim_opt} is given as
  \begin{align}
    \overline{f}(x_{N+1}) &= \lim_{\sigma \rightarrow \infty} \> \overline{f}^\sigma (x_{N+1})
    = \sqrt{K^f_{N+1,N+1}} \Gamma_f.
    \label{eq:cost_proposition_1}
  \end{align}
\end{proposition}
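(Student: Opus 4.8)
The plan is to prove the two asserted equalities in turn. First I would show, purely by computing a limit in the closed form of \cref{thm:relaxed_upper_bound}, that $\lim_{\sigma\to\infty}\overline{f}^\sigma(x_{N+1}) = \sqrt{K^f_{N+1,N+1}}\,\Gamma_f$. Then I would show by a direct primal argument that $\overline{f}(x_{N+1})$ equals this same value, with the feasibility hypothesis~\eqref{eq:optimal_solution_case1_feasibility} entering exactly once, to certify that the extremal candidate is admissible. Chaining the two establishes the claim.

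For the limit, I would abbreviate $A \doteq K^f_{1:N,1:N}$ and $B \doteq K^w_{1:N,1:N}$, where $B$ is invertible because $\kw$ is positive-definite and the inputs are distinct by \cref{ass:distinct_input_locations}. Writing $(A+\sigma^2 B)^{-1} = \sigma^{-2}(\sigma^{-2}A+B)^{-1}$, this inverse converges to $0$ like $\sigma^{-2}B^{-1}$ as $\sigma\to\infty$. Substituting into~\eqref{eq:gp_mean_and_covar} and~\eqref{eq:minimum_norm_interpolant_sum_spaces} then gives $\fmu(x_{N+1})\to 0$, $\Sigma^f_\sigma(x_{N+1})\to K^f_{N+1,N+1}$, and $\|\gmu\|_{\Hksum}^2\to 0$; together with $\Gamma_w^2/\sigma^2\to 0$, the coefficient~\eqref{eq:beta_sigma} obeys $\beta_\sigma^2\to\Gamma_f^2$. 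Plugging these into~\eqref{eq:relaxed_upper_bound} yields the stated limit.

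For $\overline{f}(x_{N+1})$ itself, the upper bound is immediate from Cauchy--Schwarz in $\Hkf$: every feasible $f$ satisfies $f(x_{N+1}) = \langle f,\kf(\cdot,x_{N+1})\rangle_{\Hkf} \leq \|f\|_{\Hkf}\sqrt{K^f_{N+1,N+1}} \leq \sqrt{K^f_{N+1,N+1}}\,\Gamma_f$, with equality precisely when $f$ is aligned with $\kf(\cdot,x_{N+1})$. This singles out the data-free extremal candidate $\fopt \doteq \frac{\Gamma_f}{\sqrt{K^f_{N+1,N+1}}}\kf(\cdot,x_{N+1})$, which by the reproducing property obeys $\|\fopt\|_{\Hkf}^2=\Gamma_f^2$ and $\fopt(x_{N+1}) = \sqrt{K^f_{N+1,N+1}}\,\Gamma_f$. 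To turn this into a matching lower bound I would complete it to a feasible pair by taking $\wopt$ to be the minimum-$\Hkw$-norm interpolant of the residual $r \doteq y - \fopt(\mathbb{X})$, where $\fopt(\mathbb{X}) = \frac{\Gamma_f}{\sqrt{K^f_{N+1,N+1}}}K^f_{1:N,N+1}$ collects the training-point evaluations; then~\eqref{eq:sup_infdim_opt_data} holds by construction and, analogously to~\eqref{eq:minimum_norm_interpolant_sum_spaces}, $\|\wopt\|_{\Hkw}^2 = r^\top(K^w_{1:N,1:N})^{-1}r$. The crux is that~\eqref{eq:optimal_solution_case1_feasibility} is \emph{exactly} the inequality $\|r\|_{(K^w_{1:N,1:N})^{-1}}^2\leq\Gamma_w^2$, so $(\fopt,\wopt)$ is feasible for~\eqref{eq:sup_infdim_opt} and hence $\overline{f}(x_{N+1})\geq\sqrt{K^f_{N+1,N+1}}\,\Gamma_f$. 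Combined with the upper bound this gives $\overline{f}(x_{N+1}) = \sqrt{K^f_{N+1,N+1}}\,\Gamma_f = \lim_{\sigma\to\infty}\overline{f}^\sigma(x_{N+1})$.

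I expect the only real obstacle to be conceptual rather than computational: recognizing that~\eqref{eq:optimal_solution_case1_feasibility} is precisely the admissibility of the residual noise needed to reconcile the data-free worst case $\fopt$ with the observations, which is what makes the noise-energy constraint inactive and lets the optimum decouple into a pure worst-case $f$ plus a feasible $w$. The remaining technicality is the degenerate case $K^f_{N+1,N+1}=0$, where $\fopt$ is ill-defined; there $\kf(\cdot,x_{N+1})\equiv 0$ forces $f(x_{N+1})=0$ for all $f\in\Hkf$, so $\overline{f}(x_{N+1})=0=\sqrt{K^f_{N+1,N+1}}\,\Gamma_f$, consistent with the limit.
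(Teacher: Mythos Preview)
Your argument is correct. The limit computation matches the paper's exactly, and the primal argument---Cauchy--Schwarz for the upper bound, the explicit pair $(\fopt,\wopt)$ for the lower bound, with~\eqref{eq:optimal_solution_case1_feasibility} certifying $\|\wopt\|_{\Hkw}^2\leq\Gamma_w^2$---is complete.

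Where you differ from the paper is in the route to $\overline{f}(x_{N+1})=\sqrt{K^f_{N+1,N+1}}\,\Gamma_f$. The paper first passes to the finite-dimensional representation~\eqref{eq:sup_findim_opt_app} via the representer-theorem machinery of \cref{sec:app_finite_dimensional_solution}, then solves the reduced problem obtained by dropping the noise constraint~\eqref{eq:sup_findim_opt_app_rkhs_w}, and finally invokes the active-set case distinction of \cref{sec:thm1_active_set} to argue that a feasible solution of the reduced problem is optimal for the full one. You sidestep all of this by working directly in $\Hkf$: the Cauchy--Schwarz bound replaces the ``relaxation'' argument, and the explicit feasible candidate replaces the active-set logic. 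Your approach is more self-contained and elementary for this particular proposition; the paper's approach pays off because it is part of a uniform treatment of all four active-set cases needed for \cref{thm:optimal_solution}, where the finite-dimensional coordinates and Lagrange-multiplier bookkeeping are genuinely useful (especially in Case~3). Both identify the same extremal $\fopt$ and the same role for~\eqref{eq:optimal_solution_case1_feasibility}.
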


The feasibility condition~\eqref{eq:optimal_solution_case1_feasibility} 
verifies
if the bound~\eqref{eq:sup_infdim_opt_bnd_w} on the noise function's RKHS norm 
allows for 
it 
to interpolate the points \mbox{$\wopt(x_i) = y_i - \fopt(x_i)$}, \mbox{$i = 1, \ldots, N$},
given the 
data-independent,
worst-case 
latent function~\mbox{$\fopt(\cdot)=\kf(\cdot,x_{N+1}) \frac{\Gamma_f}{\sqrt{K^f_{N+1,N+1}}} $}.

\paragraph{Case 2 ($\sigma \rightarrow 0$).}
For infinite-dimensional hypothesis spaces, 
a regularity constraint 
on the latent function
of the form~\eqref{eq:sup_infdim_opt_bnd_f}
is typically required to yield finite uncertainty bounds~\cite[Remark~1]{scharnhorst_robust_2023}.
Therefore, 
it is 
possible that 
merely
constraint~\eqref{eq:sup_infdim_opt_bnd_w}
is active 
only in degenerate cases
--
when
the kernel matrix~$K^f_{1:N+1,1:N+1}$ is singular, 
i.e., has rank \mbox{$r \leq N$}.
The kernel matrix can then be expressed
as \mbox{$K^f_{1:N+1,1:N+1} = \Phi_{1:N+1} \Phi_{1:N+1}^\top$}, 
where \mbox{$\Phi_{1:N+1} \in \mathbb{R}^{(N+1) \times r}$} denotes the $r$-dimensional map of linearly independent features at the training and test input locations.
This results in the following 
closed-form
optimal solution of~\eqref{eq:sup_infdim_opt}. 
\begin{proposition}
  \label{thm:optimal_solution_case2}
  Let \cref{ass:RKHS_norm_fw,ass:distinct_input_locations} hold. 
  Define \mbox{$P \doteq (\Phi_{1:N}^\top \left( K^w_{1:N,1:N} \right)^{-1} \Phi_{1:N})^{-1}$} and \mbox{$\theta^\mu \doteq P \Phi_{1:N}^\top \left( K^w_{1:N,1:N} \right)^{-1} y$}. Then, 
  if
  \begin{align}
    \left\| \theta^\star \right\|_2^2 &\doteq 
    \left\|
      \theta^\mu 
      + \frac{P \Phi_{N+1}^\top}{\| \Phi_{N+1}^\top \|_{P}} 
      \sqrt{\Gamma_w^2 
      - y^\top \left( K^w_{1:N,1:N} \right)^{-1} y 
      + \| \theta^\mu \|_{P^{-1}}^2}
    \right\|_2^2
    \leq \Gamma_f^2,
    \label{eq:optimal_solution_case2_feasibility_b}
  \end{align}
  the solution of~\eqref{eq:sup_infdim_opt} is given as
  \begin{align}
    \overline{f}(x_{N+1}) &= \lim_{\sigma \rightarrow 0} \> \overline{f}^\sigma (x_{N+1})
    = \Phi_{N+1} \theta^\mu 
    + \| \Phi_{N+1}^\top \|_{P} 
    \sqrt{
    \Gamma_w^2 - y^\top \left( K^w_{1:N,1:N} \right)^{-1} y + \| \theta^\mu \|_{P^{-1}}^2}.
    \label{eq:optimal_solution_case2_general}
  \end{align}
\end{proposition}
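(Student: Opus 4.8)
The plan is to reduce Problem~\eqref{eq:sup_infdim_opt} to a finite-dimensional optimization over the feature coefficients, solve the subproblem in which only the noise constraint~\eqref{eq:sup_infdim_opt_bnd_w} is active, and then confirm that the resulting value coincides with the limit $\sigma \to 0$ of the relaxed bound from~\cref{thm:relaxed_upper_bound}. First I would exploit the rank-$r$ factorization $K^f_{1:N+1,1:N+1} = \Phi_{1:N+1}\Phi_{1:N+1}^\top$: arguing as in the Representer Theorem, the relevant latent-function values are parametrized as $f(x_i) = \Phi_i \theta$ with $\theta \in \mathbb{R}^r$, and the minimum-norm representation yields $\|f\|_{\Hkf}^2 = \|\theta\|_2^2$. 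The data constraint~\eqref{eq:sup_infdim_opt_data} then fixes $w(x_i) = y_i - \Phi_i \theta$, so the minimum-norm interpolant formula gives $\|w\|_{\Hkw}^2 = \|y - \Phi_{1:N}\theta\|_{(K^w_{1:N,1:N})^{-1}}^2$. Problem~\eqref{eq:sup_infdim_opt} thus becomes $\sup_\theta \Phi_{N+1}\theta$ subject to $\|\theta\|_2^2 \leq \Gamma_f^2$ and $\|y - \Phi_{1:N}\theta\|_{(K^w_{1:N,1:N})^{-1}}^2 \leq \Gamma_w^2$.

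Next, in Case~2 only the noise constraint is active, so I would drop~\eqref{eq:sup_infdim_opt_bnd_f} and maximize the linear objective over the noise ellipsoid alone. Writing $W \doteq (K^w_{1:N,1:N})^{-1}$, a weighted least-squares decomposition around the minimizer $\theta^\mu = P\Phi_{1:N}^\top W y$ gives $\|y - \Phi_{1:N}\theta\|_W^2 = \bigl(y^\top W y - \|\theta^\mu\|_{P^{-1}}^2\bigr) + \|\theta - \theta^\mu\|_{P^{-1}}^2$, where the cross term vanishes because $\Phi_{1:N}^\top W(y - \Phi_{1:N}\theta^\mu) = 0$ and $\Phi_{1:N}^\top W \Phi_{1:N} = P^{-1}$. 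The noise constraint is then the ellipsoid $\|\theta - \theta^\mu\|_{P^{-1}}^2 \leq \Gamma_w^2 - y^\top W y + \|\theta^\mu\|_{P^{-1}}^2$ centered at $\theta^\mu$. Maximizing $\Phi_{N+1}\theta$ over this ellipsoid is a closed-form linear program whose maximizer is $\theta^\star = \theta^\mu + (\rho/\|\Phi_{N+1}^\top\|_P)\,P\Phi_{N+1}^\top$, with $\rho$ the ellipsoid radius, and whose optimal value is exactly~\eqref{eq:optimal_solution_case2_general}. The feasibility condition~\eqref{eq:optimal_solution_case2_feasibility_b} is precisely $\|\theta^\star\|_2^2 \leq \Gamma_f^2$, which certifies that the dropped constraint~\eqref{eq:sup_infdim_opt_bnd_f} is inactive at the optimum, so the value of the noise-only problem equals the optimum of the full Problem~\eqref{eq:sup_infdim_opt}.

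Finally, I would identify this value with $\lim_{\sigma\to 0}\overline{f}^\sigma(x_{N+1})$ to connect with~\cref{thm:optimal_solution}. This is the delicate step: both $\Gamma_w^2/\sigma^2$ and $\|\gmu\|_{\Hksum}^2 = y^\top(K^f_{1:N,1:N} + \sigma^2 K^w_{1:N,1:N})^{-1}y$ diverge as $\sigma \to 0$, yet their difference $\beta_\sigma^2$ must converge to the finite quantity under the square root in~\eqref{eq:optimal_solution_case2_general}, while $\beta_\sigma\sqrt{\Sigma^f_\sigma(x_{N+1})}$ converges to the $\|\Phi_{N+1}^\top\|_P$-weighted term. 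The hard part will be here: because $K^f_{1:N+1,1:N+1}$ is singular, extracting finite limits from diverging quantities requires a careful asymptotic expansion of $(K^f_{1:N,1:N} + \sigma^2 K^w_{1:N,1:N})^{-1}$ in the feature coordinates $\Phi$. I would substitute $K^f_{1:N,1:N} = \Phi_{1:N}\Phi_{1:N}^\top$ and apply a Woodbury-type expansion, so that the $\mathcal{O}(\sigma^{-2})$ terms in $\Gamma_w^2/\sigma^2 - \|\gmu\|_{\Hksum}^2$ cancel and the subleading terms reproduce $\theta^\mu$, $P$, and the residual $y^\top W y$; the same expansion applied to $\fmu(x_{N+1})$ and $\Sigma^f_\sigma(x_{N+1})$ yields $\Phi_{N+1}\theta^\mu$ and $\|\Phi_{N+1}^\top\|_P^2$ in the limit. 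Matching these limits against the closed-form optimum of the finite-dimensional problem then completes the proof.
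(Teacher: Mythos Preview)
Your proposal is correct and follows the same overall architecture as the paper: reduce to the finite-dimensional problem in $\theta$, solve the noise-only subproblem, verify the dropped constraint, and match with the $\sigma\to 0$ limit of the relaxed bound. The one notable difference is in how you diagonalize the noise constraint. The paper introduces a QR factorization of $(R^w_{1:N,1:N})^{-1}\Phi_{1:N}$ and a change of variables $z_1$ to reduce the ellipsoid to a Euclidean ball before solving; you instead complete the square directly in the $P^{-1}$-metric, obtaining the ellipsoid $\|\theta-\theta^\mu\|_{P^{-1}}^2 \le \rho^2$ and maximizing the linear functional over it in one step. Your route is shorter and more transparent for this part, while the paper's QR machinery mirrors the coordinate transformations it already set up for \cref{thm:relaxed_upper_bound}. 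For the limit $\sigma\to 0$, both approaches rely on the same Woodbury-type identities to show that the $\mathcal{O}(\sigma^{-2})$ divergences in $\beta_\sigma^2$ and $\Sigma^f_\sigma$ cancel against each other; the paper carries this out by expressing $M=\Phi_{1:N}P\Phi_{1:N}^\top$ as a limit and then simplifying $W^{-1}-W^{-1}(\sigma^2 M)W^{-1}$ to $(W+F)^{-1}$, which is exactly the expansion you anticipate. One point the paper makes explicit that you leave implicit: the noise-only problem is bounded (equivalently, $P$ exists) only when $\Phi_{N+1}^\top\in\mathrm{span}(\Phi_{1:N}^\top)$, which in turn forces $\Phi_{1:N}$ to have full column rank~$r$; you may want to state this so that the well-definedness of $P$ in the proposition is justified rather than assumed.
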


Since the RKHS norm of the noise function is the limiting factor in this case, 
the optimal pair of functions $(\fopt, \wopt)$
generally shows the opposite behavior as in Case 1,
utilizing the minimum RKHS norm of the noise~$\wopt$ to interpolate the data
in order to achieve a maximum value
of the latent function~$\fopt$ 
at the test point. 
The feasibility condition~\eqref{eq:optimal_solution_case2_feasibility_b} 
verifies that the RKHS norm of the optimal latent function~$\fopt$ satisfies the bound~\eqref{eq:sup_infdim_opt_bnd_f}.

Case~2 can happen in two scenarios:
For \emph{finite-dimensional} hypothesis spaces, 
i.e., 
\mbox{$\ftr(\cdot) = \begin{bmatrix}
    \phi_1(\cdot) & \ldots & \phi_r(\cdot)
\end{bmatrix} \thetatr$}
for some features~$\phi_i(\cdot)$, \mbox{$i = 1, \ldots, r$},
the latent 
function 
\mbox{$\fopt(x_{N+1})$}
generally does not have sufficient degrees of freedom to 
interpolate an arbitrary data set. 
As such, 
the 
optimal bound~$\overline{f}(x_{N+1})$ in~\eqref{eq:optimal_solution_case2_general}
consists of two components,
the value of the \emph{least-squares estimator}~\mbox{$\fmu[](x_{N+1}) \doteq \Phi_{N+1} \theta^\mu$},
as well as a term proportional to the 
maximum RKHS norm of the noise~$\Gamma_w^2$,
subtracted by
\mbox{$y^\top (K^w_{1:N,1:N})^{-1} y - \| \theta^\mu \|_{P^{-1}}$},
the minimum RKHS norm 
required to 
eliminate the offset between the least-squares estimator and the data.
For \emph{infinite-dimensional} hypothesis spaces,
the latent function~$\fopt$ can generally interpolate the offset between the optimal noise function~$\wopt$ and the training data;
however, 
neglecting the RKHS-norm constraint~\eqref{eq:sup_infdim_opt_bnd_f} on~$\fopt$ only leads to sensible estimates
when the test point coincides with a training input location.
In this case, the feature vector \mbox{$\Phi_{N+1} \in \mathbb{R}^{(N+1) \times r}$} has rank~$N$,
which simplifies the general result in~\cref{thm:optimal_solution_case2}.

\begin{corollary}
  \label{thm:optimal_solution_case2_corollary_testtrain}
  Let \cref{ass:RKHS_norm_fw,ass:distinct_input_locations} hold. 
  Suppose that $K^f_{1:N,1:N}$ 
  is 
  invertible and \mbox{$x_{N+1} = x_k \in \mathbb{X}$}.
  Then, if
  \begin{align}
    \left\| \theta^\star \right\|_2^2 
    &\doteq
    \left\|
      y + 
      K^w_{1:N,k} 
      \frac{\Gamma_w}{\sqrt{K^w_{k,k}}}           
    \right\|_{(K^f_{1:N,1:N})^{-1}}^2
    \leq \Gamma_f^2,
    \label{eq:optimal_solution_case2_feasibility_a}
  \end{align}
  the solution of~\eqref{eq:sup_infdim_opt} is given as
  \begin{align}
    \overline{f}(x_{N+1}) &= \lim_{\sigma \rightarrow 0} \> \overline{f}^\sigma (x_{N+1})
    = y_k + \sqrt{K^w_{k,k}} \Gamma_w.
    \label{eq:constraint_proposition1}
  \end{align}
\end{corollary}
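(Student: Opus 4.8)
The plan is to derive this corollary as a direct specialization of \cref{thm:optimal_solution_case2}, rather than re-solving the optimization problem from scratch. Under the corollary's hypotheses, $K^f_{1:N,1:N}$ is invertible, so the top-left $N\times N$ block of $K^f_{1:N+1,1:N+1}$ already has rank $N$; since $x_{N+1} = x_k$ duplicates the $k$-th training input, the last row of $K^f_{1:N+1,1:N+1}$ equals its $k$-th row, so the full matrix still has rank exactly $N$. Hence $r = N$, the feature block $\Phi_{1:N} \in \mathbb{R}^{N\times N}$ is square and invertible with $K^f_{1:N,1:N} = \Phi_{1:N}\Phi_{1:N}^\top$, and solving $\Phi_{N+1}\Phi_{1:N}^\top = K^f_{N+1,1:N} = e_k^\top K^f_{1:N,1:N}$ identifies the test-point feature row as $\Phi_{N+1} = e_k^\top \Phi_{1:N} = \Phi_k$, where $e_k$ is the $k$-th standard basis vector. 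These two structural facts are the only place where the corollary's hypotheses enter; everything afterwards is linear-algebra bookkeeping.

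Next, I would substitute these facts into the quantities $P$, $\theta^\mu$, $\Phi_{N+1}\theta^\mu$, $\|\Phi_{N+1}^\top\|_P$, and $\|\theta^\mu\|_{P^{-1}}^2$ appearing in \cref{thm:optimal_solution_case2}. Writing $W \doteq K^w_{1:N,1:N}$ (invertible since $\kw$ is positive definite), the invertibility of $\Phi_{1:N}$ gives $P = \Phi_{1:N}^{-1} W \Phi_{1:N}^{-\top}$ and hence $\theta^\mu = \Phi_{1:N}^{-1} y$. Consequently $\Phi_{N+1}\theta^\mu = e_k^\top \Phi_{1:N}\Phi_{1:N}^{-1}y = y_k$, while $\|\Phi_{N+1}^\top\|_P^2 = e_k^\top \Phi_{1:N} P \Phi_{1:N}^\top e_k = e_k^\top W e_k = K^w_{k,k}$. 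The key cancellation is that $\|\theta^\mu\|_{P^{-1}}^2 = y^\top \Phi_{1:N}^{-\top}(\Phi_{1:N}^\top W^{-1}\Phi_{1:N})\Phi_{1:N}^{-1}y = y^\top W^{-1} y$, so the argument of the square root in \cref{thm:optimal_solution_case2} collapses to $\Gamma_w^2 - y^\top W^{-1}y + y^\top W^{-1}y = \Gamma_w^2$. Plugging these into the optimal-value formula of \cref{thm:optimal_solution_case2} then yields $\overline{f}(x_{N+1}) = y_k + \sqrt{K^w_{k,k}}\,\Gamma_w$, which is the claimed expression~\eqref{eq:constraint_proposition1}.

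Finally, I would translate the feasibility condition. Using $P\Phi_{N+1}^\top = \Phi_{1:N}^{-1} W e_k = \Phi_{1:N}^{-1} K^w_{1:N,k}$, the maximizer of \cref{thm:optimal_solution_case2} factors as $\theta^\star = \Phi_{1:N}^{-1}\bigl( y + K^w_{1:N,k}\,\Gamma_w/\sqrt{K^w_{k,k}}\bigr)$. Since $\Phi_{1:N}^{-\top}\Phi_{1:N}^{-1} = (\Phi_{1:N}\Phi_{1:N}^\top)^{-1} = (K^f_{1:N,1:N})^{-1}$, the Euclidean norm $\|\theta^\star\|_2^2$ equals the weighted norm $\| y + K^w_{1:N,k}\,\Gamma_w/\sqrt{K^w_{k,k}} \|_{(K^f_{1:N,1:N})^{-1}}^2$, so the condition $\|\theta^\star\|_2^2 \leq \Gamma_f^2$ is exactly~\eqref{eq:optimal_solution_case2_feasibility_a}. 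I expect the main obstacle to be the first step: rigorously justifying $r = N$ and identifying $\Phi_{N+1}$ with the $k$-th row of $\Phi_{1:N}$, since \cref{thm:optimal_solution_case2} is phrased for a generic low-rank feature map and one must verify that appending the duplicated test point does not increase the rank. The remaining simplifications, though several, are routine manipulations exploiting the square-invertibility of $\Phi_{1:N}$.
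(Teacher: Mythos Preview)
Your proposal is correct and follows essentially the same approach as the paper: both derive the corollary by specializing \cref{thm:optimal_solution_case2} under the identification $\Phi_{N+1}=e_k^\top\Phi_{1:N}$ (equivalently $\lambda=e_k$) together with square-invertibility of $\Phi_{1:N}$, then simplify the optimal value and the feasibility condition accordingly. The only cosmetic difference is that the paper performs the simplifications in the intermediate appendix variables $M$, $\tilde{y}$, $\lambda$, whereas you work directly with the quantities $P$, $\theta^\mu$ from the statement of \cref{thm:optimal_solution_case2}; the algebra and the key cancellation $\|\theta^\mu\|_{P^{-1}}^2 = y^\top(K^w_{1:N,1:N})^{-1}y$ are the same.
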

In this case, 
the optimal solution at a training input location~\mbox{$x_{N+1} = x_k \in \mathbb{X}$} is given by the corresponding measurement~$y_i$,
inflated by the maximum RKHS norm~$\Gamma_w$ of the noise function.

\section{Related work and discussion}

In this section, we discuss 
the obtained bounds 
also in view of known results 
from the literature.
In particular, 
in \cref{sec:discussion_noise_RKHS} we detail known bounds that are recovered as special cases of \cref{thm:optimal_solution}.
In \cref{sec:discussion_deterministic,sec:discussion_stochastic},
we respectively compare~\cref{thm:optimal_solution} with 
\emph{deterministic} bounds,
obtained for 
bounded noise sequences,
and \emph{probabilistic} bounds, for noise sequences with an assumed probability distribution. Specifically, \cref{sec:discussion_stochastic} 
encompasses a thorough numerical comparison of the bounds,
including an application on safe control 
that shows the effectiveness of the proposed bounds on a downstream task.

\subsection{Recovering existing bounds as particular cases}\label{sec:discussion_noise_RKHS}

\paragraph*{Linear regression under energy-bounded noise} We first elucidate the connection between Assumption~\ref{ass:RKHS_norm_fw} 
on the deterministic noise \emph{function}
and energy-boundedness of the noise \emph{sequence}. 
As 
a straightforward consequence of the Representer Theorem~\citep{wahba_spline_1990,goos_generalized_2001},
for the presented results in this paper,
the values of the noise function outside the training input locations are irrelevant~(see \cref{sec:app_finite_dimensional_solution}):
there exists a noise-generating function~$\wtr$ for the data set~$\mathcal{D}$ 
satisfying \cref{ass:RKHS_norm_fw} 
if and only if
the
minimum-norm interpolant~$\wmu[]$ 
of the (unknown) noise realizations
$w_{\mathbb{X}} \doteq \begin{bmatrix}
  w(x_1), \ldots, w(x_N)
\end{bmatrix}^\top$
satisfies
the 
RKHS-norm 
bound
\mbox{$\| \wmu[] \|_{\Hkw}^2 < \Gamma_w^2$},
where
  \begin{align*}
    \| \wmu[] \|_{\Hkw}^2 &\doteq \min_{w \in \Hkw} \quad  \| w \|_{\Hkw}^2 \\
    & \quad \quad \quad \mathrm{s.t.} \quad w(x_i) = \wtr(x_i), \> i = 1, \ldots, N, \\
    &= w_{\mathbb{X}}^\top \left( K^w_{1:N,1:N} \right)^{-1} w_{\mathbb{X}}.
  \end{align*}
Therefore, 
instead of imposing a maximum RKHS norm on~$\wtr$,
one could equivalently 
assume a bounded RKHS norm for the minimum-norm interpolant generating the data set. For the Dirac noise kernel \mbox{$\kw(x,x') = \delta(x - x')$},
since \mbox{$K^w_{1:N,1:N} = I_N$},
the bounded-RKHS-norm assumption on the noise function 
implies 
bounded energy
of
the noise sequence,
i.e.,
\begin{equation*}
    \| \wmu[] \|_{\Hkw}^2 = \sum_{i=1}^N \wtr(x_i)^2 < \Gamma_w^2.
\end{equation*}
The assumption of bounded energy for the data set has been employed by~\cite{fogel_system_1979}
to obtain bounds for the latent function in the setting of linear regression,
i.e., finite-dimensional hypothesis spaces. Using the notation adopted in \cref{sec:edge_cases}, the non-falsified 
parameter set
is obtained as
\begin{equation}
  \| \thetatr - \theta^\mu \|_{P^{-1}}^2 
  \leq \Gamma_w^2 - y^\top y + \| \theta^\mu \|_{P^{-1}}^2,
  \label{eq:fogel_bound}
\end{equation}
see~\cite[Eq.~(3)]{fogel_system_1979}.
\cref{thm:optimal_solution_case2} shows that the obtained bound recovers this known 
result from
set-membership estimation
for finite-dimensional hypothesis spaces. 
In fact, 
for the Dirac noise kernel,
the worst-case realization of the unknown parameters is given by~$\thetaopt$ in~\eqref{eq:optimal_solution_case2_feasibility_b}, 
for which~\eqref{eq:fogel_bound} holds with equality. 
Note that the 
optimal bound in \cref{thm:optimal_solution}
does
not only recover the bounds by~\cite{fogel_system_1979} for the linear-regression case, but, moreover, 
provides bounds under the additional complexity constraint~\mbox{$\| \thetatr \|_2^2 \leq \Gamma_f^2$}.

\paragraph*{Noise-free kernel interpolation}
Building upon the kernel interpolation bound by \cite{weinberger_optimal_1959} (see~\cite[p.~192]{wendland_scattered_2004},~\cite[Section~9.3]{fasshauer_kernel-based_2015}), under Assumption~\ref{ass:RKHS_norm_fw} the following bound can be derived, cf.~\cite[Proposition~1]{maddalena_deterministic_2021}:
\begin{equation}
    \left| f(x_{N+1}) - \fmu[0](x_{N+1}) \right| \leq \sqrt{\Gamma_f^2 - \| \fmu[0] \|_{\Hkf}^2 } \sqrt{\Sigma^f_0(x_{N+1})}, \label{eq:pseudo_golomb}
\end{equation}
where $\fmu[0]$ is the minimum-norm interpolant in the RKHS $\Hkf$ (see~\eqref{eq:minimum_norm_interpolant_sum_spaces}) and $\sqrt{\Sigma^f_0(x_{N+1})}$ is commonly referred to as the \emph{power function}. The relaxed bound in \cref{thm:relaxed_upper_bound} 
generalizes this result:
for noise-free measurements, i.e., \mbox{$\Gamma_w^2 \rightarrow 0$}, and in the limit for the noise parameter \mbox{$\sigma \rightarrow 0$},~\eqref{eq:pseudo_golomb} is recovered exactly by noting that the bound in~\cref{thm:relaxed_upper_bound} is symmetric around the estimate.

\subsection{Comparison with existing deterministic bounds}
\label{sec:discussion_deterministic}

\paragraph{Interpolation using sum-of-kernels} For the Dirac noise kernel~\mbox{$\kw(x,x') = \delta(x - x')$}, 
uncertainty
bounds have also been obtained by~\cite[Section~6.4]{kanagawa_gaussian_2025}
based on 
the minimum-norm interpolant \mbox{$\gmu \in \Hksum$} 
using the
sum of kernels~$\ksum$. 
Utilizing the fact that,
for all \mbox{$x \notin \mathbb{X}$},
the GP posterior mean~\eqref{eq:gp_mean} and covariance~\eqref{eq:gp_covariance}
are equal to the interpolant~$\gmu$ and the corresponding power function, respectively,
a
bound
on the true data-generating function \mbox{$g \in \Hksum$} 
has been established by~\cite[Corollary~6.8]{kanagawa_gaussian_2025}.
However, 
the bound does not
take into account
the actual value of the measurements \mbox{$\{ g(x_i) = y_i \}_{i=1}^N$},
but rather the worst-case realization thereof,
rendering it conservative.
Additionally,
the bound is only valid for the data-generating process~$g$
and does not 
provide
bounds 
for
the latent function.

\paragraph{Point-wise bounded noise} As energy-boundedness is a weaker assumption than point-wise boundedness of the noise, \cref{thm:optimal_solution} can also be applied in the setting of point-wise bounded noise,
see \cref{sec:discussion_noise_RKHS}.
In this setting, \citep{maddalena_deterministic_2021,reed_error_2025} provide closed-form, yet conservative, bounds for the latent function under an RKHS-norm constraint on the latter. 
The bounds are improved upon by~\cite{scharnhorst_robust_2023}, 
which provides optimal point-wise bounds for the latent function. 
As the bounded-energy and pointwise-boundedness assumptions are equivalent for \mbox{$N = 1$} data points, so are the bounds by~\cite{scharnhorst_robust_2023} and \cref{thm:optimal_solution} in this case.
For larger data sets, 
under the point-wise-boundedness assumption,
the optimal bounds by~\cite{scharnhorst_robust_2023} are tighter than 
the optimal bounds in~\cref{thm:optimal_solution} 
obtained under the weaker bounded-energy assumption. 
Still,
their
computation 
relies on solving a constrained convex program, 
cf. \cite[Eq.~(6)]{scharnhorst_robust_2023},
whose number of optimization variables is proportional to the 
number of training data points~$N$.
Additionally. this 
optimization problem 
has to be solved to optimality in order to obtain valid bounds for the latent function, while optimization over the noise parameter~$\sigma$ in~\eqref{eq:optimal_solution_inf} returns a valid upper bound for all \mbox{$\sigma \in (0, \infty)$}.

\subsection{Comparison with existing probabilistic bounds}
\label{sec:discussion_stochastic}

High-probability bounds for Gaussian-process regression are derived in~\citep{srinivas_information-theoretic_2012,abbasi-yadkori_online_2013,fiedler_practical_2021,molodchyk_towards_2025}, which are generally of the form 
\begin{equation}
    \mathrm{Pr}\left[|f(x)-f^\mu_\sigma(x)|\leq \beta_{\sigma}\sqrt{\Sigma_\sigma^f(x)} \> \forall x\in\mathcal{X}\right]\geq p. \notag
\end{equation}
Compared to Theorem~\ref{thm:optimal_solution}, these bounds hold with a user-chosen probability \mbox{$p\in(0,1)$}, use a fixed constant \mbox{$\sigma>0$}, but otherwise have the same structure and the same assumption on the latent function~$\ftr$ in terms of a known bound on its RKHS norm.  
However, while the proposed analysis considers energy-bounded noise, \mbox{$\| \wtr \|_{\Hkw}^2 < \Gamma_w^2$}  (Assumption~\ref{ass:RKHS_norm_fw}), these results apply to (conditionally) independent sub-Gaussian noise~\citep{srinivas_information-theoretic_2012,abbasi-yadkori_online_2013,fiedler_practical_2021}.
This is a stronger\footnote{%
To be precise, if the noise is sub-Gaussian, we can derive an energy bound $\Gamma_w^2$, such that $w^{\mathrm{tr}}$ satisfies Assumption~\ref{ass:RKHS_norm_fw} with the kernel \mbox{$\kw(x,x')=\delta(x-x')$} and a desired probability \mbox{$p\in(0,1)$}. However, the converse is not true as energy-bounded noise may be correlated and biased.}
requirement as it does not allow for biased or correlated noise, which can be difficult to ensure in real-world experiments. 
Nonetheless, both the proposed bound and existing high-probability bounds can be applied in case of independent, zero-mean and bounded noise;
in the following, we numerically investigate the conservativeness of the bounds in this setting%
\footnote{The code to reproduce the experiments is publicly available at \url{https://gitlab.ethz.ch/ics/bounded-energy-rkhs-bounds} and at \url{https://doi.org/10.3929/ethz-c-000785083}.}%
.
\paragraph{Numerical comparison}
In this experiment, the size of the uncertainty regions is compared. 
Using a squared-exponential kernel 
\mbox{$\kf(x,x') = \exp(-\|x - x'\|^2 / \ell^2)$}, 
\mbox{$\ell = 1$},
for the latent function, as well as a Dirac noise kernel 
\mbox{$\kw(x,x') = \delta(x - x')$} on the domain 
\mbox{$\mathcal{X} = [0, 4]$}, 
random latent functions are generated
with \mbox{$\| \ftr \|_{\Hkf}^2 = \Gamma_f^2 = 1$}. 
Training data are 
sampled
based on measurement noise following a zero-mean truncated Gaussian distribution with standard deviation and bounded absolute value equal to \mbox{$\epsilon = 0.01$}, 
which is $R$-sub-Gaussian for \mbox{$R = \epsilon$}.
The corresponding noise-energy bound is derived as \mbox{$\Gamma_w^2 = N \epsilon^2$}.
We compare the 
proposed
bound (Theorem~\ref{thm:optimal_solution}), which is optimal given only the information \mbox{$\| w^{\mathrm{tr}} \|_{\Hkw}^2 \leq \Gamma_w^2$}, the relaxed bound (Lemma~\ref{thm:relaxed_upper_bound}) with \mbox{$\sigma=\epsilon$}, and a standard high-probability error bound%
~\citep{abbasi-yadkori_online_2013}, cf. \citep[Eq.~(7)]{fiedler_safety_2024}, 
which uses only sub-Gaussianity of the noise and provides a valid bound 
with probability \mbox{$p = 0.99$}, 
similar to~\citep{srinivas_information-theoretic_2012,fiedler_practical_2021}.
Optimality of the 
numerical solution to~\eqref{eq:optimal_solution_inf}
is guaranteed 
by solving a convex reformulation of~\eqref{eq:sup_infdim_opt} (see~\cref{sec:app_finite_dimensional_solution}) using~\texttt{CVXPY}~\citep{diamond_cvxpy_2016,agrawal_rewriting_2018}.
Figure~\ref{fig:stoch_compare} compares the area of the uncertainty region
for 
\mbox{$N=1,\dots, 10^3$}
randomly sampled
training
points,
averaged over $10^3$~runs
for
randomly sampled
latent
functions~$f^{\mathrm{tr}}$. 
In the low-data regime, the proposed 
optimal and relaxed bounds,
leveraging energy-boundedness of the noise, 
are significantly less conservative. 
However, as multiple similar data points may provide no additional information without probabilistic information, 
they
do not significantly improve after a certain number of data points $N$. In contrast, the probabilistic bound leverages independence, asymptotically attaining smaller uncertainty bounds with increasing data. 
However, it should be noted that these probabilistic bounds are only valid if indeed the noise is (conditionally) independent and zero-mean; otherwise, these shrinking confidence 
intervals
would be misleading.

\begin{figure}
  \includegraphics{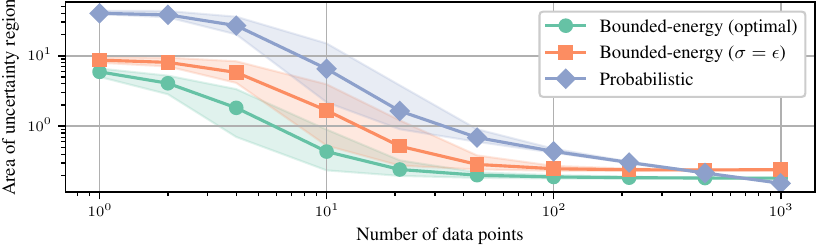}
  \caption{Numerical comparison of area of 
  uncertainty 
  region 
  for increasing number of data points~$N$ with 
  \mbox{$\{5\%, 95\%\}$}-percentiles 
  shown in shade.
  }
  \label{fig:stoch_compare}
\end{figure}

\paragraph{Safe control for uncertain nonlinear systems}

Lastly, we demonstrate the application of the proposed bounds to the downstream task of safe control. 
Consider the uncertain (for simplicity scalar) nonlinear dynamical system
\begin{align}
  \label{eq:CBF_dynamics}
  x(k + 1) &= 
  f^{\mathrm{known}}(x(k), u(k)) + \ftr(x(k), u(k)), %
\end{align}
with 
known dynamics $f^{\mathrm{known}}$ and unknown residual dynamics $f^{\mathrm{tr}}$.
Given 
the current state 
$x(k) \in \mathbb{R}^{}$
of the system
at time $k \in \mathbb{N}$, 
the goal is to 
find 
an optimal
control input
\mbox{$u(k) \in \mathbb{R}^{}$}
that
minimizes a user-defined cost function~$c(x(k), u(k))$,
subject to
a safety-critical constraint,
\mbox{$f^{\mathrm{known}}(x(k), u(k)) + \ftr(x(k), u(k)) \geq (1 - \gamma) x(k)$} --
similar to a control barrier function%
~\citep{agrawal_discrete_2017,ames2019control,jagtap_control_2020-1}.
The 
uncertainty in 
the 
system dynamics
is 
handled
by
leveraging the proposed (the probabilistic) bound 
to
enforce 
\emph{robust}
constraint satisfaction
for all functions in the uncertainty set,
containing the unknown function~$\ftr$ (with probability $p$).
Importantly, 
this makes
tight uncertainty bounds 
desirable,
since they generally lead to 
lower costs and a larger feasible region,
where safety of the control input can be guaranteed.

We compare the 
bounds
for the following example setup:
$f^{\mathrm{known}}(x,u) \doteq 0.5 x + u - 1$, 
$\ftr(x, u) \doteq \exp(-x^2) \sin(10 x)$, 
$c(x,u) \doteq (f^{\mathrm{known}}(x,u) + \fmu(x,u))^2 + u^2$, $\kf$, $\kw$ as above with \mbox{$\ell = \sqrt{2}/20$}.
For the proposed bound, the 
optimization problem is formulated using the relaxed bound in~\cref{eq:relaxed_upper_bound}, optimizing $\sigma \in (0, \infty)$ and $u(k) \in [-2,2]$ \emph{simultaneously}; for the probabilistic bounds, $\sigma = \epsilon$ is fixed with the same noise assumptions as in~\cref{sec:discussion_stochastic}; see~\cref{sec:appendix_CBF_example} for implementation details.
Additionally, we implement the proposed bound using only the nearest 10~training points to construct the uncertainty bounds.
\cref{fig:cbf} shows the success rate in terms of the share of feasible problems 
for an increasing amount of training data on a grid of $500$ test points in the domain $x(k) \in [-2,2]$, repeated $20$~times with random noise realizations.  Due to the smaller uncertainty bounds, the proposed bound using the full data set achieves the highest success rate,
albeit at a high computational cost. In contrast, using the probabilistic bounds and the subset-of-data variant of the proposed bounds leads to similar success rates, with the latter exhibiting significantly lower computation times due to its independence of the number of training points. 
Note that utilizing the probabilistic bounds with test-point-dependent subsets of data would generally deteriorate the probability~$p$ of their joint validity for all test points, 
compromising the controller's safety guarantees.

\begin{figure}
  \includegraphics{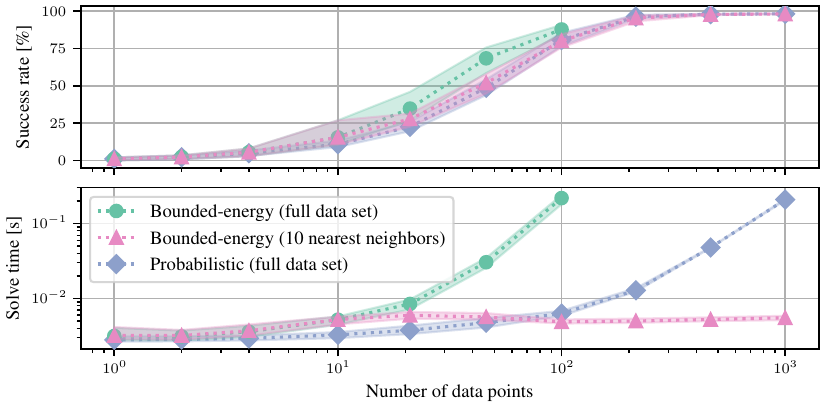}
  \caption{Application of uncertainty bounds for safe control. Success rate (upper plot) and solve time (lower plot) with $\{ 5\%, 95\% \}$-percentiles shown in shade.
  }
  \label{fig:cbf}
\end{figure}

\subsection{Limitations}
\label{sec:limitations}

The obtained bounds suffer from common criticalities and limitations of kernel-based learning, which are (a) dealing with kernel mis-specification and (b) knowing valid RKHS-norm bounds.
Both issues are typically addressed empirically, (a) by hyper-parameter tuning via cross-validation or maximum-likelihood estimation (\citep{wahba_spline_1990}, \citep[Section 5.4]{rasmussen_gaussian_2006}, \citep{karvonen_maximum_2020-3}) and (b) by estimating the bound value from data~\citep{csaji_nonparametric_2022,tokmak_pacsbo_2024}.
While a rigorous investigation of~(a) is beyond the scope of this paper, 
we note that mis-specification of~$\kf$ may also be compensated by an inflated RKHS-norm bound~$\Gamma_w$.
Regarding~(b), 
for the latent function,
this 
is a 
common assumption in the literature on kernel-based uncertainty bounds;
for the noise function, 
we discuss in \cref{sec:discussion_noise_RKHS} how it generalizes the common setting of energy-bounded noise. 
Under-estimation in the RKHS-norm bounds could be detected by checking feasibility of the optimization problem. 
Conversely, we point out that considering conservative values of $\Gamma_f$ and $\Gamma_w$ 
merely results in
a sublinear inflation of the computed uncertainty envelope, cf.~\cref{eq:beta_sigma}.
Nevertheless, further research will be devoted to rigorously assessing the robustness of the obtained bounds with respect to possible mis-specifications in (a) and (b).

\section{Conclusions}
\label{sec:conclusions}
The main contribution of this paper is an optimization-based, \emph{distribution-free} bound for kernel-based estimates that is tight, even in the non-asymptotic, low-data
regime, and that can handle 
correlated and biased
noise sequences. 
The proposed bound generalizes known results from kernel interpolation in the noise-free setting and from linear regression under energy-bounded noise.
In the case of bounded sub-Gaussian noise, the numerical results highlight the competitiveness of the bound with existing probabilistic bounds in terms of its conservatism,
and showcase its high potential for safe control as a downstream task.
Moreover, the 
experiments highlight how the
deterministic nature of the proposed bound enables the rigorous certification of subset-of-data selection or mixture-of-experts strategies to handle large data sets.
Future work may 
investigate the effectiveness of the proposed bound for further downstream tasks, such as Bayesian optimization or model-based reinforcement learning.

\begin{ack}
  This work was supported by the European Union’s Horizon 2020 research
and innovation programme, Marie Skłodowska-Curie grant agreement No.
953348, ELO-X. The authors thank the anonymous reviewers for their constructive comments. AL thanks Philipp Hennig, Motonobu Kanagawa and Manish Prajapat for helpful discussions.
\end{ack}

\bibliographystyle{abbrvnat}
\bibliography{biblio_final}

\begin{thebibliography}{52}
\providecommand{\natexlab}[1]{#1}
\providecommand{\url}[1]{\texttt{#1}}
\expandafter\ifx\csname urlstyle\endcsname\relax
  \providecommand{\doi}[1]{doi: #1}\else
  \providecommand{\doi}{doi: \begingroup \urlstyle{rm}\Url}\fi

\bibitem[{Abbasi-Yadkori}(2013)]{abbasi-yadkori_online_2013}
Y.~{Abbasi-Yadkori}.
\newblock \emph{Online Learning for Linearly Parametrized Control Problems}.
\newblock PhD thesis, University of Alberta, 2013.

\bibitem[Agrawal and Sreenath(2017)]{agrawal_discrete_2017}
A.~Agrawal and K.~Sreenath.
\newblock Discrete {{Control Barrier Functions}} for {{Safety-Critical
  Control}} of {{Discrete Systems}} with {{Application}} to {{Bipedal Robot
  Navigation}}.
\newblock In \emph{Robotics: {{Science}} and {{Systems XIII}}}. {Robotics:
  Science and Systems Foundation}, 2017.
\newblock ISBN 978-0-9923747-3-0.
\newblock \doi{10.15607/RSS.2017.XIII.073}.

\bibitem[Agrawal et~al.(2018)Agrawal, Verschueren, Diamond, and
  Boyd]{agrawal_rewriting_2018}
A.~Agrawal, R.~Verschueren, S.~Diamond, and S.~Boyd.
\newblock A rewriting system for convex optimization problems.
\newblock \emph{Journal of Control and Decision}, 5\penalty0 (1), 2018.

\bibitem[Ames et~al.(2019)Ames, Coogan, Egerstedt, Notomista, Sreenath, and
  Tabuada]{ames2019control}
A.~D. Ames, S.~Coogan, M.~Egerstedt, G.~Notomista, K.~Sreenath, and P.~Tabuada.
\newblock Control barrier functions: Theory and applications.
\newblock In \emph{Proc. 18th European control conference (ECC)}, pages
  3420--3431. Ieee, 2019.

\bibitem[Andersson et~al.(2019)Andersson, Gillis, Horn, Rawlings, and
  Diehl]{andersson_casadi_2019}
J.~A.~E. Andersson, J.~Gillis, G.~Horn, J.~B. Rawlings, and M.~Diehl.
\newblock {{CasADi}}: A software framework for nonlinear optimization and
  optimal control.
\newblock \emph{Mathematical Programming Computation}, 11\penalty0 (1), 2019.
\newblock \doi{10.1007/s12532-018-0139-4}.

\bibitem[Aronszajn(1950)]{aronszajn_theory_1950}
N.~Aronszajn.
\newblock Theory of reproducing kernels.
\newblock \emph{Transactions of the American Mathematical Society}, 68, 1950.
\newblock URL
  \url{https://www.ams.org/journals/tran/1950-068-03/S0002-9947-1950-0051437-7/S0002-9947-1950-0051437-7.pdf}.

\bibitem[Baggio et~al.(2022)Baggio, Car{\`e}, Scampicchio, and
  Pillonetto]{baggio_bayesian_2022}
G.~Baggio, A.~Car{\`e}, A.~Scampicchio, and G.~Pillonetto.
\newblock Bayesian frequentist bounds for machine learning and system
  identification.
\newblock \emph{Automatica}, 146, 2022.
\newblock \doi{10.1016/j.automatica.2022.110599}.

\bibitem[Berkenkamp et~al.(2017)Berkenkamp, Turchetta, Schoellig, and
  Krause]{berkenkamp_safe_2017}
F.~Berkenkamp, M.~Turchetta, A.~P. Schoellig, and A.~Krause.
\newblock Safe {{Model-based Reinforcement Learning}} with {{Stability
  Guarantees}}.
\newblock In \emph{{{NIPS}}'17: {{Proceedings}} of the 31st {{International
  Conference}} on {{Neural Information Processing Systems}}}, 2017.
\newblock URL \url{http://arxiv.org/abs/1705.08551}.

\bibitem[Berkenkamp et~al.(2023)Berkenkamp, Krause, and
  Schoellig]{berkenkamp_bayesian_2023}
F.~Berkenkamp, A.~Krause, and A.~P. Schoellig.
\newblock Bayesian optimization with safety constraints: Safe and automatic
  parameter tuning in robotics.
\newblock \emph{Machine Learning}, 112\penalty0 (10), 2023.
\newblock \doi{10.1007/s10994-021-06019-1}.

\bibitem[Berlinet and {Thomas-Agnan}(2004)]{berlinet_reproducing_2004}
A.~Berlinet and C.~{Thomas-Agnan}.
\newblock \emph{Reproducing {{Kernel Hilbert Spaces}} in {{Probability}} and
  {{Statistics}}}.
\newblock Springer US, Boston, MA, 2004.
\newblock ISBN 978-1-4613-4792-7 978-1-4419-9096-9.
\newblock \doi{10.1007/978-1-4419-9096-9}.

\bibitem[Burnaev and Vovk(2014)]{burnaev_efficiency_2014}
E.~Burnaev and V.~Vovk.
\newblock Efficiency of conformalized ridge regression.
\newblock In \emph{Proceedings of {{The}} 27th {{Conference}} on {{Learning
  Theory}}}. PMLR, 2014.
\newblock URL \url{https://proceedings.mlr.press/v35/burnaev14.html}.

\bibitem[Chua et~al.(2018)Chua, Calandra, McAllister, and
  Levine]{chua_deep_2018}
K.~Chua, R.~Calandra, R.~McAllister, and S.~Levine.
\newblock Deep {{Reinforcement Learning}} in a {{Handful}} of {{Trials}} using
  {{Probabilistic Dynamics Models}}.
\newblock In \emph{Advances in {{Neural Information Processing Systems}}},
  volume~31. Curran Associates, Inc., 2018.
\newblock URL
  \url{https://proceedings.neurips.cc/paper_files/paper/2018/hash/3de568f8597b94bda53149c7d7f5958c-Abstract.html}.

\bibitem[Cs{\'a}ji and Horv{\'a}th(2022)]{csaji_nonparametric_2022}
B.~C. Cs{\'a}ji and B.~Horv{\'a}th.
\newblock Nonparametric, {{Nonasymptotic Confidence Bands With Paley-Wiener
  Kernels}} for {{Band-Limited Functions}}.
\newblock \emph{IEEE Control Systems Letters}, 6, 2022.
\newblock \doi{10.1109/LCSYS.2022.3185143}.

\bibitem[Cucker and Smale(2002)]{cucker_mathematical_2002}
F.~Cucker and S.~Smale.
\newblock On the mathematical foundations of learning.
\newblock \emph{Bulletin of the American Mathematical Society}, 39, 2002.

\bibitem[Cucker and Zhou(2007)]{cucker_learning_2007}
F.~Cucker and D.~X. Zhou.
\newblock \emph{Learning {{Theory}}: {{An Approximation Theory Viewpoint}}}.
\newblock Cambridge {{Monographs}} on {{Applied}} and {{Computational
  Mathematics}}. Cambridge University Press, 2007.
\newblock \doi{10.1017/CBO9780511618796}.

\bibitem[Deisenroth and Rasmussen(2011)]{deisenroth_pilco_2011}
M.~Deisenroth and C.~E. Rasmussen.
\newblock {{PILCO}}: {{A}} model-based and data-efficient approach to policy
  search.
\newblock In \emph{Proceedings of the 28th International Conference on Machine
  Learning ({{ICML-11}})}, 2011.

\bibitem[Diamond and Boyd(2016)]{diamond_cvxpy_2016}
S.~Diamond and S.~Boyd.
\newblock {{CVXPY}}: {{A Python-embedded}} modeling language for convex
  optimization.
\newblock \emph{Journal of Machine Learning Research}, 17\penalty0 (83), 2016.

\bibitem[Fasshauer and McCourt(2015)]{fasshauer_kernel-based_2015}
G.~E. Fasshauer and M.~McCourt.
\newblock \emph{Kernel-Based {{Approximation Methods}} Using {{MATLAB}}},
  volume~19 of \emph{Interdisciplinary {{Mathematical Sciences}}}.
\newblock World Scientific, 2015.
\newblock URL \url{https://worldscientific.com/doi/epdf/10.1142/9335}.

\bibitem[Fiedler et~al.(2021)Fiedler, Scherer, and
  Trimpe]{fiedler_practical_2021}
C.~Fiedler, C.~W. Scherer, and S.~Trimpe.
\newblock Practical and {{Rigorous Uncertainty Bounds}} for {{Gaussian Process
  Regression}}.
\newblock \emph{Proceedings of the AAAI Conference on Artificial Intelligence},
  35\penalty0 (8), 2021.
\newblock \doi{10.1609/aaai.v35i8.16912}.

\bibitem[Fiedler et~al.(2024)Fiedler, Menn, Kreisk{\"o}ther, and
  Trimpe]{fiedler_safety_2024}
C.~Fiedler, J.~Menn, L.~Kreisk{\"o}ther, and S.~Trimpe.
\newblock On {{Safety}} in {{Safe Bayesian Optimization}}.
\newblock arXiv 403.12948, 2024.
\newblock \doi{10.48550/arXiv.2403.12948}.

\bibitem[Fogel(1979)]{fogel_system_1979}
E.~Fogel.
\newblock System identification via membership set constraints with energy
  constrained noise.
\newblock \emph{IEEE Transactions on Automatic Control}, 24\penalty0 (5), 1979.
\newblock \doi{10.1109/TAC.1979.1102164}.

\bibitem[Gilks et~al.(1995)Gilks, {Richardson, Sylvia}, and {Spiegelhalter,
  Daniel}]{gilks_markov_1995}
W.~Gilks, {Richardson, Sylvia}, and {Spiegelhalter, Daniel}.
\newblock \emph{Markov {{Chain Monte Carlo}} in {{Practice}}}.
\newblock {Chapman and Hall/CRC}, New York, 1995.
\newblock ISBN 978-0-429-17023-2.
\newblock \doi{10.1201/b14835}.

\bibitem[Guo and Zhou(2013)]{guo_concentration_2013}
Z.-C. Guo and D.-X. Zhou.
\newblock Concentration estimates for learning with unbounded sampling.
\newblock \emph{Advances in Computational Mathematics}, 38\penalty0 (1), 2013.
\newblock \doi{10.1007/s10444-011-9238-8}.

\bibitem[Jagtap et~al.(2020)Jagtap, Pappas, and Zamani]{jagtap_control_2020-1}
P.~Jagtap, G.~J. Pappas, and M.~Zamani.
\newblock Control {{Barrier Functions}} for {{Unknown Nonlinear Systems}} using
  {{Gaussian Processes}}.
\newblock In \emph{2020 59th {{IEEE Conference}} on {{Decision}} and
  {{Control}} ({{CDC}})}, 2020.
\newblock \doi{10.1109/CDC42340.2020.9303847}.

\bibitem[Kamthe and Deisenroth(2018)]{kamthe_data-efficient_2018}
S.~Kamthe and M.~Deisenroth.
\newblock Data-{{Efficient Reinforcement Learning}} with {{Probabilistic Model
  Predictive Control}}.
\newblock In \emph{Proceedings of the {{Twenty-First International Conference}}
  on {{Artificial Intelligence}} and {{Statistics}}}. PMLR, 2018.
\newblock URL \url{https://proceedings.mlr.press/v84/kamthe18a.html}.

\bibitem[Kanagawa et~al.(2025)Kanagawa, Hennig, Sejdinovic, and
  Sriperumbudur]{kanagawa_gaussian_2025}
M.~Kanagawa, P.~Hennig, D.~Sejdinovic, and B.~K. Sriperumbudur.
\newblock Gaussian {{Processes}} and {{Reproducing Kernels}}: {{Connections}}
  and {{Equivalences}}, 2025.

\bibitem[Karvonen et~al.(2020)Karvonen, Wynne, Tronarp, Oates, and
  S{\"a}rkk{\"a}]{karvonen_maximum_2020-3}
T.~Karvonen, G.~Wynne, F.~Tronarp, C.~Oates, and S.~S{\"a}rkk{\"a}.
\newblock Maximum {{Likelihood Estimation}} and {{Uncertainty Quantification}}
  for {{Gaussian Process Approximation}} of {{Deterministic Functions}}.
\newblock \emph{SIAM/ASA Journal on Uncertainty Quantification}, 8\penalty0
  (3), 2020.
\newblock \doi{10.1137/20M1315968}.

\bibitem[Kimeldorf and Wahba(1971)]{kimeldorf_results_1971}
G.~Kimeldorf and G.~Wahba.
\newblock Some results on {{Tchebycheffian}} spline functions.
\newblock \emph{Journal of Mathematical Analysis and Applications}, 33\penalty0
  (1), 1971.
\newblock \doi{10.1016/0022-247X(71)90184-3}.

\bibitem[Kuss and Rasmussen(2003)]{kuss_gaussian_2003}
M.~Kuss and C.~Rasmussen.
\newblock Gaussian {{Processes}} in {{Reinforcement Learning}}.
\newblock In \emph{Advances in {{Neural Information Processing Systems}}},
  volume~16. MIT Press, 2003.
\newblock URL
  \url{https://papers.nips.cc/paper_files/paper/2003/hash/7993e11204b215b27694b6f139e34ce8-Abstract.html}.

\bibitem[Lecu{\'e} and Mendelson(2017)]{lecue_regularization_2017}
G.~Lecu{\'e} and S.~Mendelson.
\newblock Regularization and the small-ball method {{II}}: Complexity dependent
  error rates.
\newblock \emph{Journal of Machine Learning Research}, 18\penalty0 (146), 2017.
\newblock URL \url{http://jmlr.org/papers/v18/16-422.html}.

\bibitem[Lederer et~al.(2019)Lederer, Umlauft, and
  Hirche]{lederer_uniform_2019}
A.~Lederer, J.~Umlauft, and S.~Hirche.
\newblock Uniform {{Error Bounds}} for {{Gaussian Process Regression}} with
  {{Application}} to {{Safe Control}}.
\newblock In \emph{Advances in {{Neural Information Processing Systems}}},
  volume~32. Curran Associates, Inc., 2019.
\newblock URL
  \url{https://proceedings.neurips.cc/paper/2019/hash/fe73f687e5bc5280214e0486b273a5f9-Abstract.html}.

\bibitem[Maddalena et~al.(2021)Maddalena, Scharnhorst, and
  Jones]{maddalena_deterministic_2021}
E.~T. Maddalena, P.~Scharnhorst, and C.~N. Jones.
\newblock Deterministic error bounds for kernel-based learning techniques under
  bounded noise.
\newblock \emph{Automatica}, 134, 2021.
\newblock \doi{10.1016/j.automatica.2021.109896}.

\bibitem[Molodchyk et~al.(2025)Molodchyk, Teutsch, and
  Faulwasser]{molodchyk_towards_2025}
O.~Molodchyk, J.~Teutsch, and T.~Faulwasser.
\newblock Towards safe {{Bayesian}} optimization with {{Wiener}} kernel
  regression.
\newblock \emph{ArXiv.2411.02253}, 2025.
\newblock \doi{10.48550/arXiv.2411.02253}.

\bibitem[M{\"u}ller and Schaback(2009)]{muller_newton_2009}
S.~M{\"u}ller and R.~Schaback.
\newblock A {{Newton}} basis for {{Kernel}} spaces.
\newblock \emph{Journal of Approximation Theory}, 161\penalty0 (2), 2009.
\newblock \doi{10.1016/j.jat.2008.10.014}.

\bibitem[Pazouki and Schaback(2011)]{pazouki_bases_2011}
M.~Pazouki and R.~Schaback.
\newblock Bases for kernel-based spaces.
\newblock \emph{Journal of Computational and Applied Mathematics}, 236\penalty0
  (4), 2011.
\newblock \doi{10.1016/j.cam.2011.05.021}.

\bibitem[Rasmussen and Williams(2006)]{rasmussen_gaussian_2006}
C.~E. Rasmussen and C.~K.~I. Williams.
\newblock \emph{Gaussian Processes for Machine Learning}.
\newblock Adaptive Computation and Machine Learning. MIT Press, Cambridge,
  2006.
\newblock ISBN 978-0-262-18253-9.

\bibitem[Reed et~al.(2025)Reed, Laurenti, and Lahijanian]{reed_error_2025}
R.~Reed, L.~Laurenti, and M.~Lahijanian.
\newblock Error {{Bounds}} for {{Gaussian Process Regression Under Bounded
  Support Noise}} with {{Applications}} to {{Safety Certification}}.
\newblock \emph{Proceedings of the AAAI Conference on Artificial Intelligence},
  39\penalty0 (19), 2025.
\newblock \doi{10.1609/aaai.v39i19.34220}.

\bibitem[Scharnhorst et~al.(2023)Scharnhorst, Maddalena, Jiang, and
  Jones]{scharnhorst_robust_2023}
P.~Scharnhorst, E.~T. Maddalena, Y.~Jiang, and C.~N. Jones.
\newblock Robust {{Uncertainty Bounds}} in {{Reproducing Kernel Hilbert
  Spaces}}: {{A Convex Optimization Approach}}.
\newblock \emph{IEEE Transactions on Automatic Control}, 68\penalty0 (5), 2023.
\newblock \doi{10.1109/TAC.2022.3227907}.

\bibitem[Sch{\"o}lkopf and Smola(2001)]{scholkopf_learning_2001}
B.~Sch{\"o}lkopf and A.~J. Smola.
\newblock \emph{Learning with {{Kernels}}: {{Support Vector Machines}},
  {{Regularization}}, {{Optimization}}, and {{Beyond}}}.
\newblock MIT Press, Cambridge, MA, USA, 2001.
\newblock ISBN 0-262-19475-9.

\bibitem[Sch{\"o}lkopf et~al.(2001)Sch{\"o}lkopf, Herbrich, and
  Smola]{goos_generalized_2001}
B.~Sch{\"o}lkopf, R.~Herbrich, and A.~J. Smola.
\newblock A {{Generalized Representer Theorem}}.
\newblock In \emph{Computational {{Learning Theory}}}, volume 2111. Springer
  Berlin Heidelberg, Berlin, Heidelberg, 2001.
\newblock ISBN 978-3-540-42343-0 978-3-540-44581-4.
\newblock \doi{10.1007/3-540-44581-1_27}.

\bibitem[Searle and Khuri(2017)]{searle_matrix_2017}
S.~R. Searle and A.~I. Khuri.
\newblock \emph{Matrix {{Algebra Useful}} for {{Statistics}}}.
\newblock John Wiley \& Sons, 2017.
\newblock ISBN 978-1-118-93516-3.

\bibitem[{Shawe-Taylor} and Cristianini(2004)]{shawe-taylor_kernel_2004}
J.~{Shawe-Taylor} and N.~Cristianini.
\newblock \emph{Kernel {{Methods}} for {{Pattern Analysis}}}.
\newblock Cambridge University Press, Cambridge, 2004.
\newblock ISBN 978-0-521-81397-6.
\newblock \doi{10.1017/CBO9780511809682}.

\bibitem[Srinivas et~al.(2012)Srinivas, Krause, Kakade, and
  Seeger]{srinivas_information-theoretic_2012}
N.~Srinivas, A.~Krause, S.~M. Kakade, and M.~W. Seeger.
\newblock Information-{{Theoretic Regret Bounds}} for {{Gaussian Process
  Optimization}} in the {{Bandit Setting}}.
\newblock \emph{IEEE Transactions on Information Theory}, 58\penalty0 (5),
  2012.
\newblock \doi{10.1109/TIT.2011.2182033}.

\bibitem[Steinwart and Christmann(2008)]{steinwart_support_2008}
I.~Steinwart and A.~Christmann.
\newblock \emph{Support {{Vector Machines}}}.
\newblock Springer Publishing Company, Incorporated, 1st edition, 2008.
\newblock ISBN 0-387-77241-3.

\bibitem[Sui et~al.(2018)Sui, Zhuang, Burdick, and Yue]{sui_stagewise_2018}
Y.~Sui, V.~Zhuang, J.~Burdick, and Y.~Yue.
\newblock Stagewise {{Safe Bayesian Optimization}} with {{Gaussian Processes}}.
\newblock In \emph{Proceedings of the 35th {{International Conference}} on
  {{Machine Learning}}}. PMLR, 2018.
\newblock URL \url{https://proceedings.mlr.press/v80/sui18a.html}.

\bibitem[Suykens et~al.(2002)Suykens, Gestel, Brabanter, Moor, and
  Vandewalle]{suykens_least_2002}
J.~A.~K. Suykens, T.~V. Gestel, J.~D. Brabanter, B.~D. Moor, and J.~Vandewalle.
\newblock \emph{Least {{Squares Support Vector Machines}}}.
\newblock World Scientific, Singapore, 2002.

\bibitem[Tokmak et~al.(2024)Tokmak, Sch{\"o}n, and Baumann]{tokmak_pacsbo_2024}
A.~Tokmak, T.~B. Sch{\"o}n, and D.~Baumann.
\newblock {{PACSBO}}: {{Probably}} approximately correct safe {{Bayesian}}
  optimization.
\newblock In \emph{Symposium on {{Systems Theory}} in {{Data}} and
  {{Optimization}}}, 2024.
\newblock \doi{10.48550/arXiv.2409.01163}.

\bibitem[W{\"a}chter and Biegler(2006)]{wachter_implementation_2006}
A.~W{\"a}chter and L.~T. Biegler.
\newblock On the implementation of an interior-point filter line-search
  algorithm for large-scale nonlinear programming.
\newblock \emph{Mathematical Programming}, 106\penalty0 (1), 2006.
\newblock \doi{10.1007/s10107-004-0559-y}.

\bibitem[Wahba(1990)]{wahba_spline_1990}
G.~Wahba.
\newblock \emph{Spline {{Models}} for {{Observational Data}}}.
\newblock SIAM, 1990.
\newblock ISBN 978-0-89871-244-5.

\bibitem[Weinberger and Golomb(1959)]{weinberger_optimal_1959}
{\relax HE}.~Weinberger and M.~Golomb.
\newblock Optimal approximation and error bounds.
\newblock \emph{On Numerical Approximation, Univ, of Wisconsin Press,(RE Langer
  ed.), Madison}, 1959.

\bibitem[Wendland(2004)]{wendland_scattered_2004}
H.~Wendland.
\newblock \emph{Scattered {{Data Approximation}}}.
\newblock Cambridge {{Monographs}} on {{Applied}} and {{Computational
  Mathematics}}. Cambridge University Press, Cambridge, 2004.
\newblock ISBN 978-0-521-84335-5.
\newblock \doi{10.1017/CBO9780511617539}.

\bibitem[Ziemann and Tu(2024)]{ziemann_learning_2024}
I.~Ziemann and S.~Tu.
\newblock Learning with little mixing.
\newblock In \emph{Advances in {{Neural Information Processing Systems}}},
  volume~35. Curran Associates, Inc., 2024.
\newblock \doi{10.48550/arXiv.2206.08269}.

\end{thebibliography}

\newpage

\appendix

{\Large \bfseries Technical Appendix}
\vspace{0.8em}

The following sections contain the proofs of the mathematical claims made in the paper, as well as implementation details for the numerical examples. Specifically,~\cref{sec:app_finite_dimensional_solution} collects ancillary results, 
showing that the original infinite-dimensional problems yielding the upper- and lower bounds admit a finite-dimensional representation, which is the first step in computing their analytical solutions; additionally, it also presents two coordinate transformations that are useful for the following results. \cref{sec:app_relaxed_proof} provides the proof of~\cref{thm:relaxed_upper_bound}. \cref{sec:app_optimal_proof} contains the proof of~\cref{thm:optimal_solution}, together with those for the special cases presented in~\cref{thm:optimal_solution_case1,thm:optimal_solution_case2,thm:optimal_solution_case2_corollary_testtrain}.  
Finally, \cref{sec:appendix_CBF_example} provides further implementation details for the numerical example on ``Safe control for uncertain nonlinear systems'' in \cref{sec:discussion_stochastic}.

\allowdisplaybreaks
\numberwithin{equation}{section}
\numberwithin{lemma}{section}

\newpage

\section{Finite-dimensional representation of optimization problems}
\label{sec:app_finite_dimensional_solution}

In this Section we first prove that optimization problems~\eqref{eq:sup_infdim_opt} and~\eqref{eq:sup_infdim_relax} admit a finite-dimensional representation  (\cref{thm:finite_dimensional_solution} and~\cref{thm:finite_dimensional_solution_relax} in~\cref{sec:representer}). Next, in~\cref{sec:app_coordinate_trafo} we present two coordinate transformations that will be deployed in the remaining sections. 

\subsection{Representer Theorems}\label{sec:representer}

By using standard ideas from the representer theorem~\cite{kimeldorf_results_1971,goos_generalized_2001} and \cite[Appendix C.1]{scharnhorst_robust_2023}, 
it can be established that the maximizer of~\eqref{eq:sup_infdim_opt} is finite-dimensional. 

\begin{lemma}
\label{thm:finite_dimensional_solution}
      A global maximizer of Problem~\eqref{eq:sup_infdim_opt} is given by
  \begin{align}
    \label{eq:maximizer_representer}
    \fopt(\cdot) = \sum_{i=1}^{N+1} \kf(\cdot, x_i) \alpha^f_{i} \in \Hkf, \qquad
    \wopt(\cdot) = \sum_{i=1}^{N} \kw(\cdot, x_i) \alpha^w_{i} \in \Hkw.   
  \end{align}
  Furthermore, Problem~\eqref{eq:sup_infdim_opt} is equivalent to the following finite-dimensional problem with~\mbox{$c^w \doteq K^w_{1:N,1:N} \alpha^w \in \mathbb{R}^N$}:
  \begin{subequations}
  \label{eq:sup_findim_opt_app_alpha}
  \begin{align}
    \overline{f}(x_{N+1}) = \sup_{\substack{
      \alpha^f \in \mathbb{R}^{N+1}, \\
      c^w \in \mathbb{R}^{N}
    }} 
    \quad & K^f_{N+1,1:N+1} \alpha^f \\
    \mathrm{s.t.} \quad 
    & K^f_{1:N,1:N+1} \alpha^f + c^w = y, \\
    & (\alpha^f)^\top K^f_{1:N+1,1:N+1} \alpha^f - \Gamma_f^2 \leq 0, \\
    & \frac{1}{\sigma^2} \left( (c^w)^\top (K^{w}_{1:N,1:N})^{-1} c^w - \Gamma_w^2 \right) \leq 0.
  \end{align}
\end{subequations}
\end{lemma}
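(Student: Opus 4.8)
The plan is to establish both claims by a representer-theorem argument based on orthogonal projection onto finite-dimensional subspaces spanned by the kernel sections at the relevant input locations. The crucial observation is that the objective $f(x_{N+1})$ and the interpolation constraints~\eqref{eq:sup_infdim_opt_data} depend on $f$ only through its evaluations $f(x_1),\ldots,f(x_{N+1})$, whereas they depend on $w$ only through $w(x_1),\ldots,w(x_N)$. Accordingly, I would introduce the subspaces $V_f \doteq \mathrm{span}\{\kf(\cdot,x_i)\}_{i=1}^{N+1} \subseteq \Hkf$ --- which must include the test point $x_{N+1}$, since it enters the objective --- and $V_w \doteq \mathrm{span}\{\kw(\cdot,x_i)\}_{i=1}^{N} \subseteq \Hkw$, involving only the training locations. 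This asymmetry is precisely what produces the $N+1$ versus $N$ terms in~\eqref{eq:maximizer_representer}.

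Next, for an arbitrary feasible pair $(f,w)$, I would split $f = f_\parallel + f_\perp$ and $w = w_\parallel + w_\perp$ into their orthogonal projections onto $V_f$, $V_w$ and the respective complements. By the reproducing property, $f(x_i) = \langle f, \kf(\cdot,x_i)\rangle_{\Hkf} = f_\parallel(x_i)$ for $i=1,\ldots,N+1$, and likewise $w(x_j) = w_\parallel(x_j)$ for $j=1,\ldots,N$; hence replacing $(f,w)$ by $(f_\parallel, w_\parallel)$ leaves the objective and all interpolation constraints unchanged. Since $\|f\|_{\Hkf}^2 = \|f_\parallel\|_{\Hkf}^2 + \|f_\perp\|_{\Hkf}^2 \geq \|f_\parallel\|_{\Hkf}^2$ by orthogonality, and analogously for $w$, the projected pair also satisfies the norm constraints~\eqref{eq:sup_infdim_opt_bnd_f},~\eqref{eq:sup_infdim_opt_bnd_w}. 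This shows the supremum is unchanged when restricting to the finite-dimensional ansatz~\eqref{eq:maximizer_representer}, so it suffices to optimize over the coefficient vectors $\alpha^f \in \mathbb{R}^{N+1}$, $\alpha^w \in \mathbb{R}^N$.

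Finally, I would translate the restricted problem into~\eqref{eq:sup_findim_opt_app_alpha}: using $f(x_j) = K^f_{j,1:N+1}\alpha^f$ yields the objective $K^f_{N+1,1:N+1}\alpha^f$ and the data residual $K^f_{1:N,1:N+1}\alpha^f + c^w = y$ with $c^w \doteq K^w_{1:N,1:N}\alpha^w$ collecting the noise evaluations; expanding the RKHS norms gives $\|f\|_{\Hkf}^2 = (\alpha^f)^\top K^f_{1:N+1,1:N+1}\alpha^f$ and $\|w\|_{\Hkw}^2 = (\alpha^w)^\top K^w_{1:N,1:N}\alpha^w = (c^w)^\top (K^w_{1:N,1:N})^{-1} c^w$, where invertibility of $K^w_{1:N,1:N}$ follows from positive-definiteness of $\kw$ (\cref{ass:RKHS_norm_fw}) together with distinctness of the inputs (\cref{ass:distinct_input_locations}); the positive prefactor $1/\sigma^2$ on the last constraint leaves its feasible set unchanged and is retained only for the subsequent analysis. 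Existence of a maximizer then follows from Weierstrass's theorem, the feasible set being the intersection of a closed norm-ball constraint with an affine interpolation constraint and the objective being continuous. I expect the main subtlety to be the possible rank-deficiency of $K^f_{1:N+1,1:N+1}$: although this prevents the norm ball from bounding $\alpha^f$ along $\ker(K^f_{1:N+1,1:N+1})$, both the objective and all constraints depend on $\alpha^f$ only through $K^f_{1:N+1,1:N+1}\alpha^f$, so one may restrict to the minimum-norm representative and attain the supremum on a compact set.
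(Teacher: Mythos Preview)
Your proposal is correct and follows essentially the same representer-theorem route as the paper: orthogonal decomposition with respect to the spans $V_f$ (over $x_1,\ldots,x_{N+1}$) and $V_w$ (over $x_1,\ldots,x_N$), the Pythagorean argument to discard the orthogonal parts, and Weierstrass for attainment. The only cosmetic difference is that the paper argues compactness directly in the finite-dimensional function spaces $\Hkf^\parallel\times\Hkw^\parallel$ (where the norm balls are genuinely bounded), whereas you work in the coefficient space and handle the possible rank deficiency of $K^f_{1:N+1,1:N+1}$ by restricting to the minimum-norm representative; both resolutions are valid.
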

\begin{proof}
    Let 
  \mbox{$\mathbb{X} = \{ x_1, \ldots, x_N \}$}
  be the set of training input locations and
  \mbox{$\mathbb{X}_+ = \mathbb{X} \cup \{ x_{N+1} \}$},
  the same set augmented with the test point.
  We %
  denote by \mbox{$\Hkf^{\parallel} = \{ f \in \Hkf: f \in \mathrm{span}(\kf(\cdot, x_i), x_i \in \mathbb{X}_+) \}$}
  the span of kernel functions evaluated at the training and test input locations,
  as well as 
  by 
  $\Hkf^\perp$
  its orthogonal complement, i.e.,
  \mbox{$\Hkf^\perp = \{ f^\perp \in \Hkf: \langle f^\perp, f^\parallel \rangle_{\Hkf} = 0 \text{ for all } f^\parallel \in \Hkf^\parallel \}$}.
  Hence, any function \mbox{$f \in \Hkf$}
  can be written as \mbox{$f = f^\parallel + f^\perp$}, where \mbox{$f^\parallel \in \Hkf^\parallel$} and \mbox{$f^\perp \in \Hkf^\perp$}.
  Note that 
  the cost of the optimization problem is
  \mbox{$f(x_{N+1}) = \langle f, k(x_{N+1}, \cdot) \rangle_{\Hkf}$}, 
  which 
  is 
  insensitive to 
  the 
  orthogonal 
  part $f^\perp$.
  Regarding the constraints, note that all functions 
  \mbox{$f^\perp \in \Hkf^\perp$} 
  do not affect the equality constraint~\eqref{eq:sup_infdim_opt_data} while tightening the inequality constraints~\eqref{eq:sup_infdim_opt_bnd_f}; hence, it is optimal to set \mbox{$f^\perp \equiv 0$}. 
  By the same arguments, 
  it is optimal to set \mbox{$w^\perp \equiv 0$}.
  where
  the orthogonal complement 
  is defined with respect to the finite-dimensional subspace
  \mbox{$\Hkw^{\parallel} = \{ w \in \Hkw: w \in \mathrm{span}(\kw(\cdot, x_i), x_i \in \mathbb{X}) \}$},
  which excludes $\kw(\cdot, x_{N+1})$
  as the cost is insensitive to $w(x_{N+1})$, the value of the noise function at the test point.
  Hence, it follows that for all functions \mbox{$f \in \Hkf$} and \mbox{$w \in \Hkw$},
  the respective orthogonal parts \mbox{$f^\perp \in \Hkf^{\perp}$} and \mbox{$w^\perp \in \Hkw^{\perp}$} can be set to zero without affecting feasibility or optimality of the candidate function.

  Next, we show that the supremum is actually attained, i.e., that the optimizers $\fopt$ and $\wopt$ 
  are elements of the respective
  finite-dimensional subspaces $\Hkf^{\parallel}$ and $\Hkw^{\parallel}$.
  First, we note that the norm constraints~\eqref{eq:sup_infdim_opt_bnd_f} and~\eqref{eq:sup_infdim_opt_bnd_w} define closed and bounded sets
  in the metric spaces $\Hkf$ and $\Hkw$, respectively.
  By the Cauchy-Schwartz inequality, the norm constraints~\eqref{eq:sup_infdim_opt_bnd_f} and~\eqref{eq:sup_infdim_opt_bnd_w} 
  imply bounds on the pointwise evaluation of $f$ and $w$:
  \mbox{$\left| f(x_i) \right| = \langle f, \kf(x_i, \cdot) \rangle_{\Hkf} \leq \| \kf(x_i,\cdot) \|_{\Hkf} \| f \|_{\Hkf} \leq c_f \Gamma_f$},
  where \mbox{$c_f \doteq \sup_{x \in \mathcal{X}} \sqrt{\kf(x,x)}$}. Similarly, it holds that 
  \mbox{$\left| w(x_i) \right| = \langle w, \kw(x_i, \cdot) \rangle_{\Hkw} \leq \| \kw(x_i,\cdot) \|_{\Hkw} \| w \|_{\Hkw} \leq c_w \Gamma_w$},
  where \mbox{$c_w \doteq \sup_{x \in \mathcal{X}} \sqrt{\kw(x,x)}$}. 
  Note that $c_w,c_f<\infty$ holds by Assumption~\ref{ass:RKHS_norm_fw}. 
  Jointly with the data interpolation constraint~\eqref{eq:sup_infdim_opt_data}, 
  this defines closed and bounded sets 
  \begin{align*}
    D_i \doteq \left\{ (f(x_i), w(x_i)) 
  \> \middle| \> f \in \Hkf^\parallel, w \in \Hkw^\parallel, \: f(x_i) + w(x_i) = y_i, \left| f(x_i) \right| \leq c_f \Gamma_f, \left| w(x_i) \right| \leq c_w \Gamma_w \right\}
  \end{align*}
  in~$\mathbb{R}^{2}$,
  for all \mbox{$i = 1, \ldots, N$}.
  As the evaluation functionals \mbox{$\mathbb{E}^f_{x_i}(f) = f(x_i) = \langle f, \kf(x_i, \cdot) \rangle_{\Hkf}$}
  and \mbox{$\mathbb{E}^w_{x_i}(w) = w(x_i) = \langle f, \kw(x_i, \cdot) \rangle_{\Hkw}$} corresponding to the RKHSs $\Hkf$ and $\Hkw$, respectively, are linear and continuous, the pre-image of $D_i$, \mbox{$\mathrm{pre}(D_i) = \left\{ (f,w) \> \middle| \>  (f(x_i), w(x_i)) \in D_i \right\}$}, is closed  in \mbox{$\Hkf^\parallel \times \Hkw^\parallel$}, for all \mbox{$i = 1, \ldots, N$}. 
  Furthermore, the intersection of $\mathrm{pre}(D_i)$, $i=1,\dots,N$ and the bounded norm constraints~\eqref{eq:sup_infdim_opt_bnd_f} and~\eqref{eq:sup_infdim_opt_bnd_w} is closed and also bounded in  $\Hkf^\parallel\times\Hkw^\parallel$
  i.e., the feasible set of Problem~\eqref{eq:sup_infdim_opt} is closed and bounded.
  Since $\Hkf^\parallel$ and $\Hkw^\parallel$ are finite-dimensional, 
  by the Heine-Borel theorem, the feasible set is compact;
  the value of the continuous objective~$f(x_{N+1}) = \langle f, \kf(x_{N+1}, \cdot) \rangle_{\Hkf}$
  is thus attained by the Weierstrass extreme value theorem.

  The finite-dimensional formulation~\eqref{eq:sup_findim_opt_app_alpha_relax} follows directly from inserting the finite-dimensional representations of~$\fopt \in \Hkf^\parallel$ and~$\wopt \in \Hkw^\parallel$ in~\eqref{eq:sup_infdim_opt} and defining $c^w = K^w_{1:N,1:N} \alpha^w$.  
\end{proof}

Similarly, we now prove that the relaxed infinite-dimensional problem~\eqref{eq:sup_infdim_relax} admits a finite-dimensional representation.

\begin{lemma}
  \label{thm:finite_dimensional_solution_relax}
    A global maximizer of Problem~\eqref{eq:sup_infdim_relax} is given by~\eqref{eq:maximizer_representer}, and the resulting finite-dimensional problem can be written as
\begin{subequations}
  \label{eq:sup_findim_opt_app_alpha_relax}
  \begin{align}
    \overline{f}^\sigma(x_{N+1}) = \sup_{\substack{
      \alpha^f \in \mathbb{R}^{N+1}, \\
      c^w \in \mathbb{R}^{N}
    }} 
    \quad & K^f_{N+1,1:N+1} \alpha^f \\
    \mathrm{s.t.} \quad 
    & K^f_{1:N,1:N+1} \alpha^f + c^w = y, \label{eq:sup_findim_opt_app_alpha_relax_data} \\
    & \| \alpha^f \|_{K^f_{1:N+1,1:N+1}}^2 + \| c^w \|_{(\sigma^2 K^{w}_{1:N,1:N})^{-1}}^2 \leq \Gamma_f^2 + \frac{\Gamma_w^2}{\sigma^2}. \label{eq:sup_findim_opt_app_alpha_relax_bnd_fw}
  \end{align}
\end{subequations}
\end{lemma}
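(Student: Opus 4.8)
The plan is to mirror the proof of \cref{thm:finite_dimensional_solution} almost verbatim, adapting only the two places where the separate norm constraints~\eqref{eq:sup_infdim_opt_bnd_f}--\eqref{eq:sup_infdim_opt_bnd_w} are replaced by the single coupled constraint~\eqref{eq:sup_infdim_relax_bnd_fw}. First I would establish the representer form~\eqref{eq:maximizer_representer} via the same orthogonal decomposition. Writing $f = f^\parallel + f^\perp$ with $f^\parallel \in \Hkf^\parallel = \mathrm{span}(\kf(\cdot,x_i)\colon x_i \in \mathbb{X}_+)$ and analogously $w = w^\parallel + w^\perp$ with $w^\parallel \in \Hks^\parallel = \mathrm{span}(\ks(\cdot,x_i)\colon x_i \in \mathbb{X})$, the objective $f(x_{N+1}) = \langle f, \kf(x_{N+1},\cdot)\rangle_{\Hkf}$ and the interpolation constraint~\eqref{eq:sup_infdim_relax_data} depend only on the parallel parts. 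The key observation is that orthogonality yields $\|f\|_{\Hkf}^2 + \|w\|_{\Hks}^2 = \|f^\parallel\|_{\Hkf}^2 + \|w^\parallel\|_{\Hks}^2 + \|f^\perp\|_{\Hkf}^2 + \|w^\perp\|_{\Hks}^2$, so discarding $f^\perp$ and $w^\perp$ can only slacken the joint constraint~\eqref{eq:sup_infdim_relax_bnd_fw} while leaving the objective and the equality constraints untouched. Hence setting $f^\perp \equiv 0$ and $w^\perp \equiv 0$ is optimal, exactly as before.

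Second, I would re-run the compactness argument establishing that the supremum is attained. The only modification is in deriving the pointwise bounds: since both summands in~\eqref{eq:sup_infdim_relax_bnd_fw} are nonnegative, every feasible $f$ satisfies $\|f\|_{\Hkf}^2 \leq \Gamma_f^2 + \Gamma_w^2/\sigma^2$ and every feasible $w$ satisfies $\|w\|_{\Hks}^2 \leq \Gamma_f^2 + \Gamma_w^2/\sigma^2$. Applying Cauchy--Schwarz as in \cref{thm:finite_dimensional_solution} then gives $|f(x_i)| \leq c_f \sqrt{\Gamma_f^2 + \Gamma_w^2/\sigma^2}$ and a corresponding bound on $|w(x_i)|$ with $c^\sigma_w \doteq \sup_{x\in\mathcal{X}}\sqrt{\ks(x,x)} < \infty$ (finite since $\ks = \sigma^2 \kw$ is uniformly bounded under \cref{ass:RKHS_norm_fw}). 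The rest of the argument—closedness of the pre-images $\mathrm{pre}(D_i)$ under the continuous evaluation functionals, boundedness of their intersection with the norm constraint, compactness via Heine--Borel, and attainment via the Weierstrass extreme value theorem—carries over unchanged to the finite-dimensional subspaces $\Hkf^\parallel$ and $\Hks^\parallel$.

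Finally, I would obtain the finite-dimensional form~\eqref{eq:sup_findim_opt_app_alpha_relax} by substituting~\eqref{eq:maximizer_representer}. The objective becomes $K^f_{N+1,1:N+1}\alpha^f$, and with $c^w \doteq K^w_{1:N,1:N}\alpha^w$ the interpolation constraint becomes~\eqref{eq:sup_findim_opt_app_alpha_relax_data}. For the norm terms, $\|f\|_{\Hkf}^2 = \|\alpha^f\|_{K^f_{1:N+1,1:N+1}}^2$, while the scaling $\ks = \sigma^2 \kw$ gives $\|w\|_{\Hks}^2 = \|w\|_{\Hkw}^2/\sigma^2 = (\alpha^w)^\top K^w_{1:N,1:N}\alpha^w/\sigma^2 = \|c^w\|_{(\sigma^2 K^w_{1:N,1:N})^{-1}}^2$, using $\alpha^w = (K^w_{1:N,1:N})^{-1} c^w$; summing these reproduces~\eqref{eq:sup_findim_opt_app_alpha_relax_bnd_fw}. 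I do not anticipate a genuine obstacle, since the argument is structurally identical to \cref{thm:finite_dimensional_solution}; the only point requiring care is confirming that the orthogonal-decomposition step still slackens, rather than risks violating, the now-coupled constraint~\eqref{eq:sup_infdim_relax_bnd_fw}, which the additivity of the squared norms under orthogonality settles cleanly.
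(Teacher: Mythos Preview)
Your proposal is correct and follows essentially the same route as the paper's own proof, which simply states that the argument is analogous to \cref{thm:finite_dimensional_solution} with the sum-of-norms constraint~\eqref{eq:sup_infdim_relax_bnd_fw} defining a closed and bounded set in $\Hkf^\parallel \times \Hkw^\parallel$. In fact you spell out more detail than the paper does---in particular the explicit observation that additivity of the squared norms under orthogonality ensures the coupled constraint is only slackened, and the derivation of the pointwise bounds from the joint constraint---but the underlying approach is identical.
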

\begin{proof}
  Analogous to \cref{thm:finite_dimensional_solution}, it holds that 
  setting
  $f^\perp \equiv 0$ 
  retains optimality of any candidate function $f \in \Hk$, with $f = f^\parallel + f^\perp$.
 Similarly for the noise, it holds that $w^\perp \equiv 0$.
  Attainment of the supremum is also established along the lines of \cref{thm:finite_dimensional_solution},
  noting that the sum of norm-constraints~\eqref{eq:sup_infdim_relax_bnd_fw} defines a closed and bounded set in $\Hkf^\parallel \times \Hkw^\parallel$.
  Finally, the finite-dimensional optimization problem follows from replacing $f,w$ with their finite-dimensional expressions~\eqref{eq:maximizer_representer}.
\end{proof}

\subsection{Coordinate transformations}
\label{sec:app_coordinate_trafo}

We now present two transformations that will allow us to simplify the finite-dimensional representations~\eqref{eq:sup_findim_opt_app_alpha} and~\eqref{eq:sup_findim_opt_app_alpha_relax}.
The first one will be used to deal with the possible rank-deficiency of the kernel matrix, the second one, to decompose the hypothesis space into orthogonal features. A subset of the corresponding weights will be fully determined by the training data, while the remaining ones will be adversarially chosen to obtain the worst-case value of the latent function at the test point. We point the interested reader to \cite{muller_newton_2009,pazouki_bases_2011} for details on similar basis transformations for kernel spaces.

\subsubsection*{Eliminating the null space of the kernel matrix}

For degenerate kernel functions, i.e., finite-dimensional hypothesis spaces, 
as well as in the case when the test point coincides with a training data point, 
the kernel matrix~$K^f_{1:N+1,1:N+1}$ associated with the latent function~$f$ can be singular. 
To handle the rank-deficiency, let us denote the rank of the matrix $K^f_{1:N+1,1:N+1}$ by $r$, which satisfies $r\leq N+1$ by definition. 
To eliminate redundant variables, we employ a singular value decomposition (SVD) of~$K^f_{1:N+1,1:N+1}$:
\begin{align}
  K^f_{1:N+1,1:N+1} &=
  \begin{bmatrix}
    K^f_{1:N,1:N} & K^f_{1:N,N+1} \\
    K^f_{N+1,1:N} & K^f_{N+1,N+1}
  \end{bmatrix}
  = V S V^\top \\
  &= 
  \begin{bmatrix}
    V_{11} & V_{12} \\
    V_{21} & V_{22}
  \end{bmatrix}
  \begin{bmatrix}
    S_r & 0 \\
    0 & 0
  \end{bmatrix}
  \begin{bmatrix}
    V_{11}^\top & V_{21}^\top \\
    V_{12}^\top & V_{22}^\top
  \end{bmatrix} %
  \label{eq:svd_kf}
\end{align}
Thereby we have partitioned the rows of the
orthonormal matrix $V$, 
with \mbox{$V V^\top = I$}, 
according to the separation of 
$K^f_{1:N+1,1:N+1}$
into evaluations at the training points and the test point. 
Note that if \mbox{$K^f_{1:N+1,1:N+1}$} has full rank, i.e., \mbox{$r = N+1$}, then 
the diagonal and
positive-definite matrix $S_r$, containing the non-zero singular values of $K^f_{1:N+1,1:N+1}$, 
is equal to the matrix $S$ containing all singular values, 
\mbox{$S = S_r$}; 
 and the 
the 
matrices $V_{12}, V_{22}$ are
void in this case.
The first $r$ 
vectors in $V$ form a basis for the image of $K^f_{1:N+1,1:N+1}$:
A coordinate transformation
\begin{align}
  \begin{bmatrix}
    v_1 \\
    v_2
  \end{bmatrix}
  &= 
  \begin{bmatrix}
    V_{11}^\top & V_{21}^\top \\
    V_{12}^\top & V_{22}^\top
  \end{bmatrix}
  \alpha^f
  &&
  \Leftrightarrow
  &&
  \alpha^f = 
  \begin{bmatrix}
    V_{11} & V_{12} \\
    V_{21} & V_{22}
  \end{bmatrix}
  \begin{bmatrix}
    v_1 \\
    v_2
  \end{bmatrix}
  \label{eq:svd_kf_transform}
\end{align}
reveals that 
\mbox{$K^f_{1:N+1,1:N+1} \alpha^f = \begin{bmatrix} V_{11} \\ V_{21} \end{bmatrix} S_r v_1$}.
Hence, neither the
optimal cost 
nor the constraints
of~\eqref{eq:maximizer_representer} and~\eqref{eq:sup_findim_opt_app_alpha_relax}
depend on $v_2$,
implying that
there exists an optimal solution which satisfies $v_2 = 0$.

To simplify notation, 
we denote by 
\begin{align}
  \Phi_{1:N+1} &= \begin{bmatrix}
    \Phi_{1:N} \\
    \Phi_{N+1}
  \end{bmatrix} 
  =
  \begin{bmatrix} V_{11} \\ V_{21} \end{bmatrix} S_r^{1/2}
  \label{eq:svd_kf_phi}
\end{align}
the \emph{feature matrix} associated with the kernel matrix 
\begin{align}
   K^f_{1:N+1,1:N+1} &= \Phi_{1:N+1} \Phi_{1:N+1}^\top 
   \label{eq:svd_kf_phiphit}
\end{align}
Defining as $\theta \doteq S_r^{1/2} v_1 \in \mathbb{R}^r$ the corresponding weight vector, it holds that 
\begin{align}
\label{eq:Kfalpha_Phitheta}
K^f_{1:N+1,1:N+1} \alpha^f = \Phi_{1:N+1} \theta.
\end{align}

\subsubsection*{Eliminating the subspace determined by training data}

Interpolation of the training data by the latent function and noise process 
uniquely determines the components of the optimal solution in an $N$-dimensional subspace,
while the remaining orthogonal components are not affected by this constraint.
We find this subspace by 
applying a QR decomposition
\begin{align}
  \label{eq:relaxed_proof_qr}
  \begin{bmatrix}
    \Phi_{1:N}^\top \\
    (\sigma R^w_{1:N,1:N})^\top
  \end{bmatrix}
  &=
  \underbrace{
  \begin{bmatrix}
    Q_{11} & Q_{12} \\
    Q_{21} & Q_{22}
  \end{bmatrix}
  }_{\doteq Q}
  \begin{bmatrix}
    R \\
    0
  \end{bmatrix},
\end{align}
where 
\mbox{$Q \in \mathbb{R}^{(N+r) \times (N+r)}$} is an orthonormal matrix, \mbox{$R \in \mathbb{R}^{N \times N}$} is upper-triangular,
and
\begin{align}
  \sigma^2 K^w_{1:N,1:N} &= \sigma R^w_{1:N,1:N} (\sigma R^w_{1:N,1:N})^\top
  \label{eq:noise_covar_chol}
  \intertext{is the (upper-triangular) Cholesky decomposition of the noise covariance matrix or,
equivalently,}
  (\sigma^2 K^w_{1:N})^{-1} &= \left( \sigma R^w_{1:N,1:N} (\sigma R^w_{1:N,1:N})^\top \right)^{-1} = (\sigma R^w_{1:N,1:N})^{-\top} (\sigma R^w_{1:N,1:N})^{-1}
\end{align}
is the standard (lower-triangular) Cholesky decomposition of the inverse noise covariance matrix.
We use the orthogonal matrix $Q$ from the QR decomposition
to define a coordinate transformation
\begin{align}
  \begin{bmatrix}
    Q_{11} & Q_{12} \\
    Q_{21} & Q_{22}
  \end{bmatrix}
  \begin{bmatrix}
    \delta_1 \\
    \delta_2
  \end{bmatrix}
  &= 
  \begin{bmatrix}
    I_r & 0 \\
    0 & (\sigma R^w_{1:N,1:N})^{-1}
  \end{bmatrix}
  \begin{bmatrix}
    \theta \\
    c_w
  \end{bmatrix},
  \label{eq:delta_vc_trafo}
\end{align}
with $\delta_1 \in \mathbb{R}^N$, 
$\delta_2 \in \mathbb{R}^r$,
which
allows 
compute the 
components of the solution determined by the training data. 
For clarity, we emphasize that the partitioning of the matrices in \cref{eq:delta_vc_trafo} on the left- and right-hand side is different:
the first line on the left-hand side contains $N$~rows, the first line on the right-hand side, $r$~rows.

\newpage

\section{Proof of \cref{thm:relaxed_upper_bound}}
\label{sec:app_relaxed_proof}

Starting from the finite-dimensional formulation of the relaxed problem~\eqref{eq:sup_infdim_relax} as given in~\eqref{eq:sup_findim_opt_app_alpha_relax}, we first apply the two coordinate transformations presented in~\cref{sec:app_coordinate_trafo} (\cref{sec:lemma1_preliminary}). This allows us to obtain a simplified problem formulation --- a linear program with a single norm-ball constraint --- that can be solved directly (\cref{sec:lemma1_analytical}). Finally, we also present the result for the lower bound (\cref{sec:lemma1_lower_bound}).

\subsection{Preliminary coordinate transformation}\label{sec:lemma1_preliminary}

Using the SVD of the kernel matrix~\eqref{eq:svd_kf} as well as the coordinate transformation~\eqref{eq:delta_vc_trafo},
the data equation~\eqref{eq:sup_findim_opt_app_alpha_relax_data} reads
\begin{align}
\label{eq:relaxed_data_equality_constraint_QR}
  y &\stackrel{\eqref{eq:Kfalpha_Phitheta}}{=}
  \Phi_{1:N} \theta + c^w, \notag \\
  &=
  \begin{bmatrix}
    \Phi_{1:N} & \sigma R^w_{1:N,1:N}
  \end{bmatrix}
  \begin{bmatrix}
    I_N & 0 \\
    0 & (\sigma R^w_{1:N,1:N})^{-1}
  \end{bmatrix}
  \begin{bmatrix}
    \theta \\
    c_w
  \end{bmatrix}, \notag \\
  &\stackrel{\eqref{eq:relaxed_proof_qr}}{=}
  \begin{bmatrix}
    R^\top & 0
  \end{bmatrix}
  \begin{bmatrix}
    Q_{11}^\top & Q_{21}^\top \\
    Q_{12}^\top & Q_{22}^\top
  \end{bmatrix}
  \begin{bmatrix}
    I_N & 0 \\
    0 & (\sigma R^w_{1:N,1:N})^{-1}
  \end{bmatrix}
  \begin{bmatrix}
    \theta \\
    c_w
  \end{bmatrix} \notag \\
  &\stackrel{\eqref{eq:delta_vc_trafo}}{=} R^\top \delta_1.
\end{align}
This leads to \mbox{$\delta_1^{\star,\sigma} = R^{-\top} y$} being fully determined by the data, 
leaving only \mbox{$\delta_2 \in \mathbb{R}^r$} to be optimized. 
The RKHS-norm constraint~\eqref{eq:sup_findim_opt_app_alpha_relax_bnd_fw} is reformulated as
\begin{align}
  (\alpha^f)^\top K^f_{1:N+1,1:N+1} \alpha^f + (c^w)^\top (\sigma^2 K^{w}_{1:N,1:N})^{-1} c^w 
  \stackrel{\eqref{eq:svd_kf_phiphit},\eqref{eq:Kfalpha_Phitheta}}{=}&
  \left\| 
  \begin{bmatrix}
    I_N & 0 \\
    0 & (\sigma R^w_{1:N,1:N})^{-1}
  \end{bmatrix}
  \begin{bmatrix}
    \theta \\
    c_w
  \end{bmatrix}
  \right\|_2^2 \notag \\
  \stackrel{\eqref{eq:delta_vc_trafo}}{=}& \| \delta_1 \|_2^2 + \| \delta_2 \|_2^2,
\end{align}
where we have used that $Q$ is orthogonal, i.e., \mbox{$\| Q x \|_2^2 = \| x \|_2^2$} for all \mbox{$x \in \mathbb{R}^{r+N}$}.
Finally, in the new coordinates, the cost is expressed as

\begin{align}
    K^f_{N+1,1:N} \alpha^f \stackrel{\eqref{eq:Kfalpha_Phitheta}}{=} \Phi_{N+1} \theta 
    \stackrel{\eqref{eq:delta_vc_trafo}}{=} \Phi_{N+1} (Q_{11} \delta_1 + Q_{12} \delta_2).
\end{align}

Problem~\eqref{eq:sup_findim_opt_app_alpha_relax} is thus equivalently reformulated as follows:
\begin{subequations}
  \label{eq:sup_findim_relax_app_delta}
  \begin{align}
    \overline{f}^\sigma(x_{N+1}) = \sup_{\substack{
      \delta_2 \in \mathbb{R}^{r} \\
    }} 
    \quad & \Phi_{N+1} Q_{12} \delta_2 + \Phi_{N+1} Q_{11} \delta_1^{\star,\sigma}\\
    \mathrm{s.t.} \quad 
    & \| \delta_2 \|_2^2 \leq \Gamma_f^2 + \frac{\Gamma_w^2}{\sigma^2} - \| \delta_1^{\star,\sigma} \|_2^2.
  \end{align}
\end{subequations}

\subsection{Analytical solution}\label{sec:lemma1_analytical}

With its linear cost and norm-ball constraint, problem~\eqref{eq:sup_findim_relax_app_delta} has the unique optimal solution
\begin{align}
\label{eq:relaxed_proof_sol}
  \delta_2^{\star,\sigma} &= \frac{Q_{12}^\top \Phi_{N+1}^\top}{\| Q_{12}^\top \Phi_{N+1}^\top \|_2} \sqrt{\Gamma_f^2 + \frac{\Gamma_w^2}{\sigma^2} - \| \delta_1^{\star,\sigma} \|_2^2}
\end{align}
and associated optimal cost
\begin{align}
  \label{eq:relaxed_proof_sol_cost}
  \overline{f}^\sigma(x_{N+1}) &= \Phi_{1:N} Q_{11} \delta_1^{\star,\sigma} + \| Q_{12}^\top \Phi_{N+1}^\top \|_2 \sqrt{\Gamma_f^2 + \frac{\Gamma_w^2}{\sigma^2} - \| \delta_1^{\star,\sigma} \|_2^2}
\end{align}

To obtain the formulation in in~\eqref{eq:relaxed_upper_bound}, we use the following relations inferred from the QR decomposition~\eqref{eq:relaxed_proof_qr}:
\begin{subequations}
\begin{align}
  Q_{12} Q_{12}^\top &= I - Q_{11} Q_{11}^\top, \label{eq:relaxed_proof_qr_q1212} \\
  Q_{11} &= \Phi_{1:N}^\top R^{-1}, \label{eq:relaxed_proof_qr_q11} \\
  R^\top R &= R^\top Q^\top Q R \notag = \Phi_{1:N} \Phi_{1:N}^\top + (\sigma R^w_{1:N,1:N})^\top \sigma R^w_{1:N,1:N} \notag \\ &= K^f_{1:N,1:N} + \sigma^2 K^w_{1:N,1:N}. \label{eq:relaxed_proof_qr_rtr}
\end{align}
\end{subequations}
We can now simplify the terms in the optimal cost~\eqref{eq:relaxed_proof_sol_cost}.
First, it holds that
\begin{align*}
  \Phi_{N+1} Q_{11} \delta_1^{\star,\sigma} &\overset{\eqref{eq:relaxed_proof_qr_q11}}{=} (\Phi_{N+1} \Phi_{1:N}^\top) R^{-1} R^{-\top} y, \\
  &\overset{\eqref{eq:relaxed_proof_qr_rtr}}{=} K^f_{N+1,1:N} \left( K^f_{1:N,1:N} + \sigma^2 K^w_{1:N,1:N} \right)^{-1} y, \\
  &\overset{{\eqref{eq:gp_mean}}}{=} \fmu(x_{N+1}).
\intertext{Then, we have}
  \| Q_{12}^\top \Phi_{N+1}^\top  \|_2^2 &\overset{\phantom{\eqref{eq:relaxed_proof_qr_q11}}}{=} \Phi_{N+1} Q_{12} Q_{12}^\top \Phi_{N+1}^\top  \\
  &\overset{\eqref{eq:relaxed_proof_qr_q1212}}{=} \Phi_{N+1} (I - Q_{11} Q_{11}^\top) \Phi_{N+1}^\top  \\
  &\overset{\eqref{eq:relaxed_proof_qr_q11}}{=} \Phi_{N+1} \Phi_{N+1}^\top - ( \Phi_{N+1} \Phi_{1:N}^\top  ) R^{-1} R^{-\top} ( \Phi_{1:N} \Phi_{N+1}^\top  ) \\
  &\overset{\eqref{eq:relaxed_proof_qr_rtr}}{=} K^f_{N+1,N+1} - K^f_{N+1,1:N} \left( K^f_{1:N,1:N} + \sigma^2 K^w_{1:N,1:N} \right)^{-1} K^f_{1:N,N+1} \\
  &\overset{{\eqref{eq:gp_covariance}}}{=} \Sigma^f_\sigma(x_{N+1}).
\intertext{Lastly, we obtain}
  \| \delta_1^{\star,\sigma} \|_2^2 &\overset{{\eqref{eq:relaxed_data_equality_constraint_QR}}}{=} y^\top R^{-1} R^{-\top} y \\
  &\overset{\eqref{eq:relaxed_proof_qr_rtr}}{=} y^\top \left( K^f_{1:N,1:N} + \sigma^2 K^w_{1:N,1:N} \right)^{-1} y \\
  &\overset{\eqref{eq:minimum_norm_interpolant_sum_spaces}}{=} \| \gmu \|_{\Hksum}^2.
\end{align*}
To summarize, this shows that the optimal cost of~\eqref{eq:sup_infdim_relax} is given by
\begin{align}
  \label{eq:relaxed_proof_opt_cost}
  \overline{f}^\sigma(x_{N+1}) &= \fmu(x_{N+1}) + \sqrt{\Gamma_f^2 + \frac{\Gamma_w^2}{\sigma^2} - \| \gmu \|_{\Hksum}^2} \sqrt{\Sigma^f_\sigma(x_{N+1})}.\notag
\end{align}

\subsection{Optimal relaxed solution for the lower bound}\label{sec:lemma1_lower_bound}

For the lower bound, 
the same derivations apply with a minor change.
Flipping the sign in the cost leads leads to a flipped sign in the optimal solution for the free variables $\delta_2$,
i.e., \mbox{$\delta_2^{\star,\inf} = - \delta_2^{\star,\sigma}$}. 
This results in the optimal cost for the lower bound
\begin{align}
  \underline{f}^\sigma(x_{N+1}) &= \fmu(x_{N+1}) - \sqrt{\Gamma_f^2 + \frac{\Gamma_w^2}{\sigma^2} - \| \gmu \|_{\Hksum}^2} \sqrt{\Sigma^f_\sigma(x_{N+1})}.\notag
\end{align}

Due to the symmetry of the relaxed bounds around~$\fmu(x_{N+1})$, the following corollary is immediate.
\begin{corollary}
    Let \cref{ass:RKHS_norm_fw,ass:distinct_input_locations} be satisfied. Then, for all $\sigma \in (0,\infty)$, it holds that
    \begin{align*}
        | \ftr(x_{N+1}) - \fmu(x_{N+1}) | \leq \sqrt{\Gamma_f^2 + \frac{\Gamma_w^2}{\sigma^2} - \| \gmu \|_{\Hksum}^2} \sqrt{\Sigma^f_\sigma(x_{N+1})}.
    \end{align*}
\end{corollary}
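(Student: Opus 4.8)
The plan is to obtain the two-sided absolute bound directly from the one-sided relaxed bounds already established in this section, exploiting their symmetry about the posterior mean $\fmu(x_{N+1})$. First I would verify that the true pair $(\ftr,\wtr)$ is feasible for the relaxed problem~\eqref{eq:sup_infdim_relax} for every $\sigma \in (0,\infty)$: by the data model~\eqref{eq:setup_data} it satisfies the interpolation constraint~\eqref{eq:sup_infdim_relax_data}, and by \cref{ass:RKHS_norm_fw} the individual bounds $\|\ftr\|_{\Hkf}^2 < \Gamma_f^2$ and $\|\wtr\|_{\Hkw}^2 < \Gamma_w^2$ hold. Since $\|\wtr\|_{\Hks}^2 = \|\wtr\|_{\Hkw}^2/\sigma^2$, these together imply the relaxed joint constraint~\eqref{eq:sup_infdim_relax_bnd_fw}. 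Feasibility of $(\ftr,\wtr)$ then forces $\ftr(x_{N+1})$ to lie between the infimum and the supremum of the relaxed problem.

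Next, I would invoke \cref{thm:relaxed_upper_bound} for the upper bound together with the closed form for $\underline{f}^\sigma(x_{N+1})$ derived in \cref{sec:lemma1_lower_bound}. Abbreviating the common radius by $R_\sigma \doteq \sqrt{\Gamma_f^2 + \tfrac{\Gamma_w^2}{\sigma^2} - \|\gmu\|_{\Hksum}^2}\,\sqrt{\Sigma^f_\sigma(x_{N+1})}$, the feasibility argument above chains into
\[
\fmu(x_{N+1}) - R_\sigma
\leq \underline{f}^\sigma(x_{N+1})
\leq \ftr(x_{N+1})
\leq \overline{f}^\sigma(x_{N+1})
= \fmu(x_{N+1}) + R_\sigma.
\]
Both relaxed bounds are centered at $\fmu(x_{N+1})$ with the \emph{same} radius $R_\sigma$, so subtracting $\fmu(x_{N+1})$ throughout and taking absolute values yields precisely $|\ftr(x_{N+1}) - \fmu(x_{N+1})| \leq R_\sigma$, which is the claimed inequality. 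Since $\sigma$ was arbitrary, this holds for all $\sigma \in (0,\infty)$.

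There is essentially no obstacle here: the entire content is the symmetry of the relaxed envelope about $\fmu(x_{N+1})$, which is exactly what the two preceding subsections establish. The only point requiring a line of justification is the feasibility of $(\ftr,\wtr)$ for~\eqref{eq:sup_infdim_relax}, which is immediate from \cref{ass:RKHS_norm_fw}. The statement is therefore a direct corollary of \cref{thm:relaxed_upper_bound} and its lower-bound counterpart, rather than an independent result.
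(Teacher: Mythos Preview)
Your proposal is correct and follows essentially the same approach as the paper: the corollary is stated immediately after the lower-bound derivation with the one-line justification that it follows from the symmetry of the relaxed bounds about $\fmu(x_{N+1})$. Your write-up simply spells out the feasibility of $(\ftr,\wtr)$ for~\eqref{eq:sup_infdim_relax} and the chaining $\underline{f}^\sigma \leq \ftr(x_{N+1}) \leq \overline{f}^\sigma$ explicitly, which is exactly what the paper leaves implicit.
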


\newpage

\section{Proof of \cref{thm:optimal_solution}}
\label{sec:app_optimal_proof}

In the following, we derive an analytic solution to Problem~\eqref{eq:sup_infdim_opt}. Taking its finite-dimensional formulation~\eqref{eq:sup_findim_opt_app_alpha}, we eliminate the noise coefficients as a function of the latent function coefficients, \mbox{$c^w = y - \Phi_{1:N} \theta$}, and deploy~\eqref{eq:svd_kf_phi} to obtain the following reformulation:

\begin{subequations}
  \label{eq:sup_findim_opt_app}
  \begin{align}
    \overline{f}(x_{N+1}) = \sup_{\substack{
      \theta \in \mathbb{R}^{r} %
    }} 
    \quad & \Phi_{N+1} \theta \\
    \mathrm{s.t.} \quad 
    & \theta^\top \theta - \Gamma_f^2 \leq 0, \label{eq:sup_findim_opt_app_rkhs_f} \\
    & (y -\Phi_{1:N} \theta)^\top (K^{w}_{1:N,1:N})^{-1} (y - \Phi_{1:N} \theta) - \Gamma_w^2 \leq 0. \label{eq:sup_findim_opt_app_rkhs_w}
  \end{align}
\end{subequations}

We will analyze the solution of Problem~\eqref{eq:sup_findim_opt_app} 
for different active sets.
Here, the term 
``active set''
refers to a subset of the RKHS-norm constraints~\eqref{eq:sup_findim_opt_app_rkhs_f} and~\eqref{eq:sup_findim_opt_app_rkhs_w} that 
are strictly active,
i.e., influence the optimal primal solution of the problem.
For strictly active constraints~\eqref{eq:sup_findim_opt_app_rkhs_f} and~\eqref{eq:sup_findim_opt_app_rkhs_w}
there exist
respective Lagrangian multipliers,
$\lambda^f$ and $\lambda^w$, that are strictly positive. 
We investigate the following combinations:
\begin{enumerate}
    \item[Case 1]: only \eqref{eq:sup_findim_opt_app_rkhs_f} is strictly active (\mbox{$\lambda^f > 0$}, \mbox{$\lambda^w = 0$}), 
    \item[Case 2]: only \eqref{eq:sup_findim_opt_app_rkhs_w} is strictly active (\mbox{$\lambda^f = 0$}, \mbox{$\lambda^w > 0$}), 
    \item[Case 3]: both~\eqref{eq:sup_findim_opt_app_rkhs_f} and \eqref{eq:sup_findim_opt_app_rkhs_w} are strictly active (\mbox{$\lambda^f > 0$}, \mbox{$\lambda^w > 0$}).
    \item[Case 4]: both~\eqref{eq:sup_findim_opt_app_rkhs_f} and \eqref{eq:sup_findim_opt_app_rkhs_w} are not strictly active (\mbox{$\lambda^f = 0$}, \mbox{$\lambda^w = 0$}),
\end{enumerate}
Based on the solutions for fixed active sets%
\footnote{
The presented analysis of Cases~1-3 considers a slightly more permissive setting, which allows some of the Lagrange multipliers to be zero, i.e., the corresponding constraint to be inactive or weakly active. Thus, in some scenarios multiple cases might be applicable (see \cref{sec:thm1_active_set}); yet, this does not affect our analysis as the cases cover all possible scenarios. %
}%
, the optimal solution can then be found by case distinction. 
We discuss each case separately, obtaining the corresponding analytical solution, presenting the feasibility check and elucidating the connection with the solution of the relaxed problem in \cref{sec:case1,sec:case2,sec:case3,sec:case4}; note that \cref{sec:case2,sec:case3} provide the proofs for~\cref{thm:optimal_solution_case1,thm:optimal_solution_case2,thm:optimal_solution_case2_corollary_testtrain}.  
We then show how to practically check which set is active, and obtain the desired claim~\eqref{eq:optimal_solution_inf} in~\cref{sec:thm1_active_set}. We conclude the section by presenting the result for the lower bound (\cref{sec:thm1_lower_bound}).

\subsection{Case 1: Noise constraint inactive}\label{sec:case1}

We now consider the case in which only~\eqref{eq:sup_findim_opt_app_rkhs_f} is active, proving~\cref{thm:optimal_solution_case1}.

\paragraph{Optimal solution }
Problem~\eqref{eq:sup_findim_opt_app} with omitted constraint~\eqref{eq:sup_findim_opt_app_rkhs_w} 
is given by
\begin{subequations}
  \label{eq:sup_findim_opt_app_case1_nodata}
  \begin{align}
    \overline{f}_1(x_{N+1}) = \sup_{\substack{
      \theta \in \mathbb{R}^{r}
    }} 
    \quad & \Phi_{N+1} \theta \\
    \mathrm{s.t.} \quad 
    & \theta^\top \theta - \Gamma_f^2 \leq 0. \label{eq:sup_findim_opt_app_case1_nodata_rkhs_f}
  \end{align}
\end{subequations}
The solution of the above optimization problem is given as
\begin{align}
  \theta^{\star,1} &= \frac{\Phi_{N+1}^\top }{\| \Phi_{N+1} \|_2} \Gamma_f,
  \label{eq:eq:sup_findim_opt_app_case1_thetaopt}
\end{align}
which results in the optimal cost given in~\eqref{eq:cost_proposition_1}, namely
\begin{align}
  \overline{f}_1(x_{N+1}) \overset{\phantom{\eqref{eq:svd_kf_phi}}}{=} \frac{\Phi_{N+1}\Phi_{N+1}^{\top}}{\| \Phi_{N+1}\|_2} 
  \Gamma_f
   \overset{\eqref{eq:svd_kf_phi}}{=} \sqrt{K^f_{N+1,N+1}} \Gamma_f.\notag
\end{align}

\paragraph{Feasibility check}
The optimizer~$\theta^{\star,1}$ 
is a feasible solution of~\eqref{eq:sup_findim_opt_app} if the
corresponding optimal noise coefficients
\begin{align}
  c^{w,\star,1} \doteq y - \frac{\Phi_{1:N} \Phi_{N+1}^\top }{\sqrt{\Phi_{N+1} \Phi_{N+1}^\top}} \Gamma_f \stackrel{\eqref{eq:svd_kf_phi}}{=} y - K^f_{1:N,N+1} \frac{\Gamma_f}{\sqrt{K^f_{N+1,N+1}}},\notag
\end{align}
satisfy the neglected constraint~\eqref{eq:sup_findim_opt_app_rkhs_w}, 
i.e.,
if
\begin{align}
  \left\| y - K^f_{1:N,N+1} \frac{\Gamma_f}{\sqrt{K^f_{N+1,N+1}}} \right\|_{(K^w_{1:N,1:N})^{-1}}^2 \leq \Gamma_w^2,\notag
\end{align}
as given in~\eqref{eq:constraint_proposition1}).

\paragraph{Connection to relaxed solution}
For \mbox{$\sigma \rightarrow \infty$}, for the relaxed solution~$\overline{f}^\sigma(x_{N+1})$ in \cref{sec:app_relaxed_proof}, it holds that
\begin{align}
  \lim_{\sigma \rightarrow \infty} \fmu &= \lim_{\sigma \rightarrow \infty} K^f_{N+1,1:N} \left( K^f_{1:N,1:N} + \sigma^2 K^w_{1:N,1:N} \right)^{-1} y \notag \\
  &= 0, \notag \\
  \lim_{\sigma \rightarrow \infty} \beta_\sigma &= \lim_{\sigma \rightarrow \infty} \sqrt{\Gamma_f^2 + \frac{\Gamma_w^2}{\sigma^2} - \| \gmu \|_{\Hksum}^2 } \notag \\
  &= \Gamma_f, \notag\\
  \lim_{\sigma \rightarrow \infty} \Sigma^f_\sigma(x_{N+1}) &= \lim_{\sigma \rightarrow \infty} K^f_{N+1,N+1} - K^f_{N+1,1:N} \left( K^f_{1:N,1:N} + \sigma^2 K^w_{1:N,1:N} \right)^{-1} K^f_{1:N,N+1} \notag \\
  &= K^f_{N+1,N+1}.\notag
\end{align}
Thus, the relaxed solution converges to the optimal solution for \mbox{$\sigma \rightarrow \infty$}, i.e.,
\begin{align}
  \lim_{\sigma \rightarrow \infty} \overline{f}^\sigma(x_{N+1}) &= \lim_{\sigma \rightarrow \infty} \fmu + \beta_\sigma \sqrt{\Sigma^f_\sigma(x_{N+1})} \notag \\
  &= \sqrt{K^f_{N+1,N+1}} \Gamma_f \notag \\
  &= \overline{f}_1(x_{N+1}). \notag
\end{align}

\subsection{Case 2: Function constraint inactive}\label{sec:case2}

We proceed by considering the case in which only~\eqref{eq:sup_findim_opt_app_rkhs_f} is active, proving the result given in~\cref{thm:optimal_solution_case2}.

\paragraph{Optimal solution} 
Problem~\eqref{eq:sup_findim_opt_app} under this active set is given as
\begin{subequations}
  \label{eq:sup_findim_opt_app_case2}
  \begin{align}
    \overline{f}_2(x_{N+1}) = \sup_{\substack{
      \theta \in \mathbb{R}^{r}
    }} 
    \quad & \Phi_{N+1} \theta \label{eq:sup_findim_opt_app_case2_cost} \\
    \mathrm{s.t.} \quad 
    & (y - \Phi_{1:N} \theta)^\top (K^{w}_{1:N,1:N})^{-1} (y - \Phi_{1:N} \theta) - \Gamma_w^2 \leq 0. \label{eq:sup_findim_opt_app_case2_rkhs_w}
  \end{align}
\end{subequations}
This optimization problem only has a finite optimal cost if the span of 
$\Phi_{N+1}^\top \in \mathbb{R}^{r\times 1}$ 
is contained in the span of 
\mbox{$\Phi_{1:N}^\top \in \mathbb{R}^{r \times N}$}, i.e., if 
\mbox{$\mathrm{span}(\Phi_{N+1}^\top) \subseteq \mathrm{span}(\Phi_{1:N}^\top)$}.
Otherwise, there would exist a direction \mbox{$d_1 \in \mathrm{span}(\Phi_{N+1}^\top)$} such that \mbox{$\Phi_{1:N} d_1 = 0$}: 
 the optimal solution to~\eqref{eq:sup_findim_opt_app_case2} would then be unbounded and thus would not satisfy the constraint~\eqref{eq:sup_findim_opt_app_rkhs_f} of the original problem. 
Hence, in the following, we focus on the case where 
\mbox{$\mathrm{span}(\Phi_{N+1}^\top) \subseteq \mathrm{span}(\Phi_{1:N}^\top)$}.

If \mbox{$\mathrm{span}(\Phi_{N+1}^\top) \subseteq \mathrm{span}(\Phi_{1:N}^\top)$}, we can write %
\mbox{$\Phi_{N+1}^\top$} as a linear combination of the column vectors of \mbox{$\Phi_{1:N}^\top$}, i.e., \mbox{$\Phi_{N+1}^\top = \Phi_{1:N}^\top \lambda$}, where \mbox{$\lambda \in \mathbb{R}^{N \times 1}$}. 
Since the 
$r$~feature
vectors in 
\begin{align}
  \Phi_{1:N+1}
  =
  \begin{bmatrix}
    \Phi_{1:N} \\ \Phi_{N+1}
  \end{bmatrix}
  = 
  \begin{bmatrix}
    I_N \\
    \lambda^\top
  \end{bmatrix}
  \Phi_{1:N}
\end{align}
are linearly independent,
$\Phi_{1:N}$ has full column rank.
As $\Phi_{1:N}^\top$ thus has full row rank, $\lambda$ can be determined as \mbox{$\lambda = \Phi_{1:N} (\Phi_{1:N}^\top \Phi_{1:N})^{-1} \Phi_{N+1}^\top$}.

To reformulate constraint~\eqref{eq:sup_findim_opt_app_case2_rkhs_w} as a norm-ball constraint,
we employ a QR decomposition.
Recalling the upper-triangular Cholesky factor \mbox{$R^w_{1:N,1:N} (R^w_{1:N,1:N})^\top = K^w_{1:N,1:N}$} of the noise covariance matrix from~\eqref{eq:noise_covar_chol},
we factor the matrix
\begin{align}
  (R^w_{1:N,1:N})^{-1} \Phi_{1:N} = 
  \underbrace{%
  \begin{bmatrix}
    \tilde{Q}_{11} & \tilde{Q}_{12} \\
    \tilde{Q}_{21} & \tilde{Q}_{22}
  \end{bmatrix}
  }_{\doteq \tilde{Q}}
  \begin{bmatrix}
    \tilde{R} \\
    0 
  \end{bmatrix},
  \label{eq:sup_findim_opt_app_case2_qr}
\end{align}
to obtain an orthonormal matrix \mbox{$\tilde{Q} \in \mathbb{R}^{N \times N}$},
with \mbox{$\tilde{Q}^\top \tilde{Q} = I$},
and an upper-triangular matrix \mbox{$\tilde{R} \in \mathbb{R}^{r \times r}$}.
The QR factorization implies
the following 
relations
required for the proof:
\begin{subequations}
\begin{align}
  \tilde{Q} \tilde{Q}^\top = I 
  \label{eq:sup_findim_opt_app_case2_qtq}
  \\
  \begin{bmatrix}
    \tilde{Q}_{11} \\
    \tilde{Q}_{21}
  \end{bmatrix}
  &= 
  (R^w_{1:N,1:N})^{-1} 
  \Phi_{1:N}
  \tilde{R}^{-1},
  \label{eq:sup_findim_opt_app_case2_q1121}
  \\
  \tilde{R}^\top \tilde{R} &= \tilde{R}^\top \tilde{Q}^\top \tilde{Q} \tilde{R} = \Phi_{1:N}^\top (K^w_{1:N,1:N})^{-1} \Phi_{1:N}, 
  \label{eq:sup_findim_opt_app_case2_rtr}
  \\
  \begin{bmatrix}
    \tilde{Q}_{12} \\ \tilde{Q}_{22}
  \end{bmatrix}
  \begin{bmatrix}
    \tilde{Q}_{12}^\top & \tilde{Q}_{22}^\top 
  \end{bmatrix}
  &= 
  I - 
  \begin{bmatrix}
    \tilde{Q}_{11} \\ \tilde{Q}_{21}
  \end{bmatrix}
  \begin{bmatrix}
    \tilde{Q}_{11}^\top & \tilde{Q}_{21}^\top 
  \end{bmatrix}.
  \label{eq:sup_findim_opt_app_case2_qouter}
\end{align}
\end{subequations}

This
allows to write the 
constraint~\eqref{eq:sup_findim_opt_app_case2_rkhs_w} as
\begin{align*}
  \Gamma_w^2
  &\geq 
  \left\| (R^w_{1:N,1:N})^{-1} (y - \Phi_{1:N} \theta) \right\|_2^2 \\
  &\overset{\eqref{eq:sup_findim_opt_app_case2_qr}}{=} 
  \left\| 
  (R^w_{1:N,1:N})^{-1} y
  - 
  \tilde{Q}
  \begin{bmatrix}
    \tilde{R} \\
    0 
  \end{bmatrix}
  \theta
  \right\|_2^2 \\
  &\overset{\eqref{eq:sup_findim_opt_app_case2_qtq}}{=} 
  \left\| 
  \tilde{Q}^\top
  (R^w_{1:N,1:N})^{-1} y
  - 
  \begin{bmatrix}
    \tilde{R} \\
    0 
  \end{bmatrix}
  \theta
  \right\|_2^2 \\
  &\overset{\eqref{eq:sup_findim_opt_app_case2_qouter}}{=} 
  \left\|
  \begin{bmatrix}
    \tilde{Q}_{11}^\top & \tilde{Q}_{21}^\top 
  \end{bmatrix}
  (R^w_{1:N,1:N})^{-1} y
  -
  \tilde{R} 
  \theta
  \right\|_2^2 
  \\
  &\phantom{=}
  +
  \left\| (R^w_{1:N,1:N})^{-1} y \right\|_2^2
  - 
  \left\|
    \begin{bmatrix}
      \tilde{Q}_{11}^\top & \tilde{Q}_{21}^\top 
    \end{bmatrix}
    (R^w_{1:N,1:N})^{-1} y
  \right\|_2^2 
  \\
  &\overset{\eqref{eq:sup_findim_opt_app_case2_q1121}}{=} 
  \left\|
  \tilde{R}^{-\top} \Phi_{1:N}^\top 
  (K^w_{1:N,1:N})^{-1} y
  -
  \tilde{R} 
  \theta
  \right\|_2^2 
  \\
  &\phantom{=}
  +
  \left\| (R^w_{1:N,1:N})^{-1} y \right\|_2^2
  - 
  \left\|
    \tilde{R}^{-\top} \Phi_{1:N}^\top 
    (K^w_{1:N,1:N})^{-1} y
  \right\|_2^2 
  \\
  &\overset{\eqref{eq:sup_findim_opt_app_case2_rtr}}{=}
    \| z_1 \|_2^2 + \tilde{y}^\top \left( K^w_{1:N,1:N} - M \right) \tilde{y},
\end{align*}
where we used the following definitions in the last line:
\begin{align}
  z_1 &\doteq
  \tilde{R}^{-\top} \Phi_{1:N}^\top 
  (K^w_{1:N,1:N})^{-1} y
  -
  \tilde{R} 
  \theta, 
  \label{eq:sup_findim_opt_app_case2_z1}
  \\
  \tilde{y} &\doteq (K^{w}_{1:N,1:N})^{-1} y, 
  \label{eq:sup_findim_opt_app_case2_definitions_ytilde}
  \\
  M &\doteq \Phi_{1:N} \left( \Phi_{1:N}^\top (K^w_{1:N,1:N})^{-1} \Phi_{1:N} \right)^{-1} \Phi_{1:N}^\top.
  \label{eq:sup_findim_opt_app_case2_definitions_M}
\end{align}
Using the coordinate transformation~\eqref{eq:sup_findim_opt_app_case2_z1} and \mbox{$\Phi_{N+1}^\top = \Phi_{1:N}^\top \lambda$}, the cost~\eqref{eq:sup_findim_opt_app_case2_cost} is rewritten as
\begin{align}
  \lambda^\top \Phi_{1:N} \theta &= 
  \lambda^\top \Phi_{1:N}
  \tilde{R}^{-1} 
  \left( 
    \tilde{R}^{-\top} \Phi_{1:N}^\top 
  (K^w_{1:N,1:N})^{-1} y - z_1 
  \right) \notag \\
  &= \lambda^\top M \tilde{y} - 
  \lambda^\top 
  \Phi_{1:N}
  \tilde{R}^{-1} z_1,
\end{align}
leading to the formulation of~\eqref{eq:sup_findim_opt_app_case2} in the transformed coordinates:
\begin{subequations}
  \label{eq:sup_findim_opt_app_case2_z}
  \begin{align}
    \overline{f}_2(x_{N+1}) = \sup_{\substack{
      z_1 \in \mathbb{R}^{r}
    }} 
    \quad &
    \lambda^\top M \tilde{y} - \lambda^\top \Phi_{1:N} \tilde{R}^{-1} z_1
    \\
    \mathrm{s.t.} \quad 
    & \| z_1 \|_2^2 \leq \left( \Gamma_w^2
    - \tilde{y}^\top \left( K^w_{1:N,1:N} - M \right) \tilde{y} \right).
    \label{eq:sup_findim_opt_app_case2_z_rkhs_w}
  \end{align}
\end{subequations}
Noting that, by \cref{ass:RKHS_norm_fw}, the right-hand side of the constraint~\eqref{eq:sup_findim_opt_app_case2_z_rkhs_w} is non-negative, the optimal solution of the above problem is given as
\begin{align}
  z_1^\star &= 
  - 
  \frac{
    \tilde{R}^{-\top} \Phi_{1:N}^\top \lambda
  }{
    \|
    \tilde{R}^{-\top} \Phi_{1:N}^\top \lambda
    \|_2
  }
  \sqrt{
    \Gamma_w^2 - \tilde{y}^\top \left( K^w_{1:N,1:N} - M \right) \tilde{y}
  },
  \label{eq:sup_findim_opt_app_case2_zstar}
\end{align}
leading to the corresponding optimal cost~\eqref{eq:optimal_solution_case2_general}:
\begin{align}
  \overline{f}_2(x_{N+1}) &= \sqrt{\lambda^\top M \lambda} \sqrt{
    \Gamma_w^2 - \tilde{y}^\top \left( K^w_{1:N,1:N} - M \right) \tilde{y}
  } + \lambda^\top M \tilde{y},
  \label{eq:sup_findim_opt_app_case2_opt_cost} 
  \\
  &\stackrel{\eqref{eq:sup_findim_opt_app_case2_rtr},\eqref{eq:sup_findim_opt_app_case2_definitions_M}}{=} \| P \Phi_{N+1}^\top \|_{P^{-1}} \sqrt{
    \Gamma_w^2 - y^\top \left( K^w_{1:N,1:N} \right)^{-1} y + \| \theta^\mu \|_{P^{-1}}^2} + \Phi_{N+1} \theta^\mu,
  \label{eq:sup_findim_opt_app_case2_opt_cost_fogel} 
\end{align}
where \eqref{eq:sup_findim_opt_app_case2_opt_cost_fogel} follows by
utilizing that 
\mbox{$\Phi_{1:N}^\top \lambda = \Phi_{N+1}^\top$}
as well as by 
defining 
the weighting matrix
\begin{align*}
  P &\doteq \left( \Phi_{1:N}^\top \left( K^w_{1:N,1:N} \right)^{-1} \Phi_{1:N} \right)^{-1}
\end{align*}
and 
the least-squares estimator for the unknown parameters
\begin{align*}
  \theta^\mu &\doteq P \Phi_{1:N}^\top \left( K^w_{1:N,1:N} \right)^{-1} y.
\end{align*}

\paragraph{Feasibility check} 
In the original coordinates, the optimal solution is given as
\begin{align}
  \label{eq:sup_findim_opt_app_case2_opt_v1}
  \theta^{\star,2} &= \tilde{R}^{-1} \tilde{R}^{-\top} \Phi_{1:N}^\top 
  (K^w_{1:N,1:N})^{-1} y - \tilde{R}^{-1} z_1^\star \notag \\
  &\stackrel{\eqref{eq:sup_findim_opt_app_case2_rtr}}{=} \left( \Phi_{1:N}^\top (K^w_{1:N,1:N})^{-1} \Phi_{1:N} \right)^{-1} \Phi_{1:N}^\top \left( \tilde{y} 
  + 
  \lambda \frac{\sqrt{
    \Gamma_w^2 - \tilde{y}^\top \left( K^w_{1:N,1:N} - M \right) \tilde{y}
  }}{\sqrt{\lambda^\top M \lambda}} \right) \notag \\
  &= \thetamu[] + \frac{P \Phi_{N+1}^\top}{\| P \Phi_{N+1}^\top \|_{P^{-1}}} \sqrt{\Gamma_w^2 - y^\top \left( K^w_{1:N,1:N} \right)^{-1} y + \| \theta^\mu \|_{P^{-1}}^2};
\end{align}
the point $\theta^{\star,2}$ is feasible for the original problem~\eqref{eq:sup_findim_opt_app}
if it satisfies the neglected constraint~\eqref{eq:sup_findim_opt_app_rkhs_f}, i.e., if
\mbox{$\| \theta^{\star,2} \|_2^2 \leq \Gamma_f^2$}, retrieving~\eqref{eq:optimal_solution_case2_feasibility_b}.

\paragraph{Connection to relaxed solution}
The quantities in the optimal cost~\eqref{eq:sup_findim_opt_app_case2_opt_cost} 
can be expressed as limiting values related to the relaxed solution for \mbox{$\sigma \rightarrow 0$}. 
For the matrix $M$ in~\eqref{eq:sup_findim_opt_app_case2_definitions_M}, it holds that
\begin{align}
  M &= \Phi_{1:N} \left( \Phi_{1:N}^\top (K^w_{1:N,1:N})^{-1} \Phi_{1:N} \right)^{-1} \Phi_{1:N}^\top \notag \\
  &= \lim_{\sigma \rightarrow 0} \Phi_{1:N} \left( \sigma^2 I_r + \Phi_{1:N}^\top (K^w_{1:N,1:N})^{-1} \Phi_{1:N} \right)^{-1} \Phi_{1:N}^\top \notag \\
  &= \lim_{\sigma \rightarrow 0} \frac{1}{\sigma^2} \Phi_{1:N} \left( I_r - \Phi_{1:N}^\top \left( (\sigma^2 K^w_{1:N,1:N}) + \Phi_{1:N} \Phi_{1:N}^\top \right)^{-1} \Phi_{1:N}^\top \right) \Phi_{1:N}^\top \notag \\
  &= \lim_{\sigma \rightarrow 0} \frac{1}{\sigma^2} \left( \Phi_{1:N} \Phi_{1:N}^\top - \Phi_{1:N} \Phi_{1:N}^\top \left( (\sigma^2 K^w_{1:N,1:N}) + \Phi_{1:N} \Phi_{1:N}^\top \right)^{-1} \Phi_{1:N} \Phi_{1:N}^\top \right). \notag 
\end{align}
With \mbox{$\lambda^\top \Phi_{1:N} = \Phi_{N+1}$}, this results in
\begin{align}
  \lambda^\top M \lambda &= \lim_{\sigma \rightarrow 0} \frac{1}{\sigma^2} \left(  \Phi_{N+1} \Phi_{N+1}^\top - \Phi_{N+1} \Phi_{1:N}^\top  \left( (\sigma^2 K^w_{1:N,1:N}) + \Phi_{1:N} \Phi_{1:N}^\top \right)^{-1} \Phi_{1:N} \Phi_{N+1}^\top \right) \notag \\
  &= \lim_{\sigma \rightarrow 0} \frac{1}{\sigma^2} \left( K^f_{N+1,N+1} - K^f_{N+1,1:N} \left( (\sigma^2 K^w_{1:N,1:N}) + K^f_{1:N,1:N} \right)^{-1} K^f_{1:N,N+1} \right) \notag \\
  & \stackrel{\eqref{eq:gp_covariance}}{=} \lim_{\sigma \rightarrow 0} \frac{1}{\sigma^2} \Sigma^f_\sigma(x_{N+1}). 
\end{align}
The offset \mbox{$\lambda^\top M \tilde{y}$} in \eqref{eq:sup_findim_opt_app_case2_opt_cost} is equivalent to
the offset~$\fmu(x_{N+1})$ in the optimal cost~\eqref{eq:relaxed_proof_opt_cost} for the relaxed problem:
\begin{align}
  \lambda^\top M \tilde{y} &= \lim_{\sigma \rightarrow 0} \frac{1}{\sigma^2} \left(  \Phi_{N+1} \Phi_{1:N}^\top  - \Phi_{N+1} \Phi_{1:N}^\top  \left( (\sigma^2 K^w_{1:N,1:N}) + \Phi_{1:N} \Phi_{1:N}^\top \right)^{-1} \Phi_{1:N} \Phi_{1:N}^\top \right) (K^w_{1:N,1:N})^{-1} y \notag \\
  &= \lim_{\sigma \rightarrow 0} {\Phi_{N+1} \Phi_{1:N}^\top  \left( (\sigma^2 K^w_{1:N,1:N}) + \Phi_{1:N} \Phi_{1:N}^\top \right)^{-1} \left( \Phi_{1:N} \Phi_{1:N}^\top - \Phi_{1:N} \Phi_{1:N}^\top + \sigma^2 K^w_{1:N,1:N} \right) (\sigma^2 K^w_{1:N,1:N})^{-1} y } \notag \\
  &= \lim_{\sigma \rightarrow 0} K^f_{N+1,1:N} \left( (\sigma^2 K^w_{1:N,1:N}) + K^f_{1:N,1:N} \right)^{-1} y \notag \\
  &\stackrel{\eqref{eq:gp_mean}}{=} \lim_{\sigma \rightarrow 0} \fmu(x_{N+1})
\end{align}

For the last term, \mbox{$\Gamma_w^2 - \tilde{y}^\top \left( K^w_{1:N,1:N} - M \right) \tilde{y}$}, 
we have that
\begin{align}
  \Gamma_w^2 - \tilde{y}^\top \left( K^w_{1:N,1:N} - M \right) \tilde{y} &= \lim_{\sigma \rightarrow 0} \sigma^2 \Gamma_f^2 + \Gamma_w^2 - \tilde{y}^\top \left( K^w_{1:N,1:N} - M \right) \tilde{y} \notag \\
  &= \lim_{\sigma \rightarrow 0} \sigma^2 \left( \Gamma_f^2 + \frac{\Gamma_w^2}{\sigma^2} - \frac{1}{\sigma^2} \tilde{y}^\top \left( K^w_{1:N,1:N} - M \right) \tilde{y} \right) \notag \\
  &= \lim_{\sigma \rightarrow 0} \sigma^2 \left( \Gamma_f^2 + \frac{\Gamma_w^2}{\sigma^2} - y^\top \left( W^{-1} - W^{-1} (\sigma^2 M) W^{-1} \right) y \right), \label{eq:proof_case2_last_term_cost}
\end{align}
where we have introduced the abbreviation \mbox{$W \doteq \sigma^2 K^w_{1:N,1:N}$} for notational simplicity;
similarly, we abbreviate \mbox{$F \doteq K^f_{1:N,1:N}$}.
Recalling that \mbox{$M = \frac{1}{\sigma^2}(F - F(W+F)^{-1}F)$} and by using~\cite[Exercise 16.(d), Chapter~5]{searle_matrix_2017},
the data-dependent term in the above expression can be simplified as follows:
\begin{align}
   &  W^{-1} - W^{-1} \left( F - F \left( W + F \right)^{-1} F \right) W^{-1} \notag \\
  = &  W^{-1} - W^{-1} F W^{-1} + W^{-1} F W^{-1} \left( W^{-1} + W^{-1} F W^{-1} \right)^{-1} W^{-1} F W^{-1} \notag \\
  = & W^{-1} - (W^{-1} - W^{-1}(W^{-1}FW^{-1} + W^{-1})^{-1}W^{-1}) %
  \notag \\
  = &  W^{-1}(W^{-1} + W^{-1}FW^{-1})
^{-1}W^{-1} = (W+F)^{-1}.\notag \end{align}

To summarize, it holds that
\begin{align}
  \Gamma_w^2 - \tilde{y}^\top \left( K^w_{1:N,1:N} - M \right) \tilde{y} &= \lim_{\sigma \rightarrow 0} \sigma^2 \Gamma_f^2 + \Gamma_w^2 - \tilde{y}^\top \left( K^w_{1:N,1:N} - M \right) \tilde{y} \notag \\
  &= \lim_{\sigma \rightarrow 0} \sigma^2 \left( \Gamma_f^2 + \frac{\Gamma_w^2}{\sigma^2} - y^\top \left( K^f_{1:N,1:N} + (\sigma^2 K^w_{1:N,1:N}) \right)^{-1} y \right) \notag \\
  &= \lim_{\sigma \rightarrow 0} \sigma^2 \left( \Gamma_f^2 + \frac{\Gamma_w^2}{\sigma^2} - \| \gmu \|_{\Hksum}^2 \right) \notag
\end{align}
and the total bound for Case 2 is given as
\begin{align}
  \overline{f}_2(x_{N+1}) &= \sqrt{\lambda^\top M \lambda} \sqrt{
    \Gamma_w^2 - \tilde{y}^\top \left( K^w_{1:N,1:N} - M \right) \tilde{y}
  } + \lambda^\top M \tilde{y} \notag \\
  &= \sqrt{ \lim_{\sigma \rightarrow 0} \frac{1}{\sigma^2} \Sigma^f_\sigma(x_{N+1}) } \sqrt{ \lim_{\sigma \rightarrow 0} \sigma^2 \left( \Gamma_f^2 + \frac{\Gamma_w^2}{\sigma^2} - \| \gmu \|_{\Hksum}^2 \right)} + \lim_{\sigma \rightarrow 0} \fmu \notag \\
  &= \lim_{\sigma \rightarrow 0} \left( \sqrt{\Sigma^f_\sigma(x_{N+1})} \sqrt{\Gamma_f^2 + \frac{\Gamma_w^2}{\sigma^2} - \| \gmu \|_{\Hksum}^2} + \fmu \right) \notag \\
  &= \lim_{\sigma \rightarrow 0} \overline{f}^\sigma(x_{N+1}).\notag
\end{align}

\subsubsection*{Proof of \cref{thm:optimal_solution_case2_corollary_testtrain}}

We now prove 
\cref{thm:optimal_solution_case2_corollary_testtrain},
which 
simplifies the general result of \cref{thm:optimal_solution_case2}
under the assumptions that
the kernel matrix $K^f_{1:N,1:N}$ 
is
invertible
and
the test point 
is 
equal to the $k$-th training point, i.e., \mbox{$x_{N+1} = x_k$}, for some \mbox{$k \in \{1, \ldots, N\}$}. 
In this case, 
the $k$-th and $(N+1)$-th row of the kernel matrix \mbox{$K^f_{1:N+1,1:N+1}$}
are identical.
In terms of the singular value decomposition, by \eqref{eq:svd_kf} this implies that
\begin{align*}
    K^f_{k,1:N} &= e_k^\top \Phi_{1:N} \Phi_{1:N}^\top
    = \Phi_{N+1} \Phi_{1:N}^\top 
    = K^f_{N+1,1:N},
\end{align*}
i.e.,
the relation 
\mbox{$\Phi_{N+1}^\top = \Phi_{1:N}^\top \lambda$} 
holds for 
\mbox{$\lambda = e_k$},
with $e_k$ being the $k$-th unit vector.
Since \mbox{$\Phi_{1:N} \in \mathbb{R}^{r \times N}$} has rank
\mbox{$r = N$}, 
it 
is 
invertible.
This allows to simplify the expression for the optimal cost using that \mbox{$M \tilde{y} = y$},
with $M$ and $\tilde{y}$ defined as in \cref{eq:sup_findim_opt_app_case2_definitions_ytilde,eq:sup_findim_opt_app_case2_definitions_M}:
\begin{align}
  \overline{f}_2(x_{k}) &= \sqrt{e_k^\top M e_k} \sqrt{
    \Gamma_w^2
  } + e_k^\top y \notag \\
  &= \sqrt{e_k^\top \Phi_{1:N} \left( \Phi_{1:N}^\top (K^w_{1:N,1:N})^{-1} \Phi_{1:N} \right)^{-1} \Phi_{1:N}^\top e_k} \Gamma_w + e_k^\top y \notag \\
  &= \sqrt{e_k^\top K^w_{1:N,1:N} e_k} \Gamma_w + e_k^\top y \notag \\
  &= \sqrt{K^w_{k,k}} \Gamma_w + y_k, \notag
\end{align}
which is the optimal cost as given in~\eqref{eq:constraint_proposition1}.
By inserting the simplified expressions into~\eqref{eq:sup_findim_opt_app_case2_opt_v1}, the optimal \mbox{$\theta$} becomes
\begin{align}
  \theta^{\star,2} 
  &= \left( \Phi_{1:N}^\top (K^w_{1:N,1:N})^{-1} \Phi_{1:N} \right)^{-1} \Phi_{1:N}^\top \left( \tilde{y} 
  +
  e_k \frac{\sqrt{
    \Gamma_w^2
  }}{\sqrt{e_k^\top M e_k}} \right) \notag \\
  &= \Phi_{1:N}^{-1} \left( y 
  + 
  K^w_{1:N,k}
  \frac{\Gamma_w}{\sqrt{K^w_{k,k}}} \right). \notag
\end{align}
Recalling the 
low-rank factorization of~$K^f_{1:N,1:N}$ in \eqref{eq:svd_kf_phi},
the feasibility condition based on the neglected constraint~\eqref{eq:sup_findim_opt_app_rkhs_f} reduces to
\begin{align}
  \| \theta^{\star,2} \|_2^2 
  &= 
  \left\| 
  \left(
  y 
  + 
  K^w_{1:N,k} 
  \frac{\Gamma_w}{\sqrt{K^w_{k,k}}} 
  \right)
  \right\|%
  _{(K^f_{1:N,1:N})^{-1}}^2 \leq \Gamma_f^2, \notag
\end{align}
as presented in~\eqref{eq:optimal_solution_case2_feasibility_a}.

\begin{remark}
    \cref{thm:optimal_solution_case2_corollary_testtrain} has been derived as a particular case of~\cref{thm:optimal_solution_case2}, which provides the analytic solution for the case $\sigma\to 0$ occurring when the kernel matrix is rank-deficient. From this perspective, the scenario in which the test-input belongs to the training data-set is one of the particular situations leading to a drop in the rank of the kernel matrix. However, the proof of~\cref{thm:optimal_solution_case2_corollary_testtrain} could be alternatively carried out following the steps of the one of~\cref{thm:optimal_solution_case1}.  
\end{remark}

\subsection{Case 3: Both constraints active}\label{sec:case3}

Next, we consider the case when both constraints~\eqref{eq:sup_findim_opt_app_rkhs_f} and~~\eqref{eq:sup_findim_opt_app_rkhs_w} are active.

\paragraph{Optimal solution}
We first show that strong duality holds for both the relaxed problem~\eqref{eq:sup_findim_opt_app_alpha_relax} as well as the original problem~\eqref{eq:sup_findim_opt_app}.
Afterwards, we establish that there exists a value \mbox{$\sigma \in (0,\infty)$}, such that the primal optimizer of the relaxed problem is a primal optimizer for the original problem.

For the original problem~\eqref{eq:sup_findim_opt_app}, we show that a strictly feasible solution can be constructed using the true latent function and noise process.
Let \mbox{$\ftrint \in \Hkf$} be the minimum-norm interpolant of the latent function at the 
test and 
training input locations, i.e.,
\begin{subequations}
  \begin{align}
    \ftrint = \underset{f \in \Hkf}{\arg \min} \quad & \| f \|_{\Hkf}^2 \\
    \mathrm{s.t.} \quad & f(x_i) = \ftr(x_i), \> i=1,\ldots,N+1.
  \end{align}
\end{subequations}
Similarly, let \mbox{$\wtrint \in \Hkw$} be the minimum-norm interpolant of the noise-generating process at the training input locations (excluding the test point),
\begin{subequations}
  \begin{align}
    \wtrint = \underset{w \in \Hkw}{\arg \min} \quad & \| w \|_{\Hkw}^2 \\
    \mathrm{s.t.} \quad & w(x_i) = \wtr(x_i), \> i=1,\ldots,N.
  \end{align}
\end{subequations}
The representer theorem~\citep{kimeldorf_results_1971} establishes that the solutions to the above optimization problems is finite-dimensional and given by
\begin{align}
  \ftrint(\cdot) = \sum_{i=1}^{N+1} \kf(\cdot, x_i) \alpha^{f,\mathrm{tr}}_i, \quad 
  \wtrint(\cdot) = \sum_{i=1}^{N} \kw(\cdot, x_i) \alpha^{w,\mathrm{tr}}_i.\notag
\end{align}
By design, the sum of both functions interpolates the training data, i.e., \mbox{$\ftrint(x_i) + \wtrint(x_i) = y_i$} for \mbox{$i = 1, \ldots, N$}. 
Additionally, by \cref{ass:RKHS_norm_fw}, it holds that $\ftrint$ and $\wtrint$ satisfy their corresponding RKHS-norm bound, i.e., \mbox{$\|\ftrint\|_{\Hkf}^2\leq \| \ftr \|_{\Hkf}^2 < \Gamma_f^2$} and \mbox{$\|\wtrint\|_{\Hkw}^2\leq \| \wtr \|_{\Hkw}^2 < \Gamma_w^2$}.
Thus, the corresponding coefficient vector 
\mbox{$\thetatr \stackrel{\eqref{eq:svd_kf_phi}}{=} S_r^{1/2} v_1^{\mathrm{tr}} = S_r^{1/2} \begin{bmatrix}
  V_{11}^\top & V_{12}^\top
\end{bmatrix} \alpha^{f,\mathrm{tr}}$}
constitutes a strictly feasible solution of the finite-dimensional problem formulation~\eqref{eq:sup_findim_opt_app}.
This implies that Slater's condition is satisfied for the convex program~\eqref{eq:sup_findim_opt_app},
which
implies that strong duality holds.
Since every strictly feasible solution for the original problem~\eqref{eq:sup_findim_opt_app} is also strictly feasible for 
the relaxed problem~\eqref{eq:sup_findim_relax_app_delta}, 
similarly, 
Slater's condition and strong duality hold for the relaxed problem.

Due to strong duality, 
the point 
$\theta^{\star,\sigma}$
is the unique minimizer of the relaxed problem~\eqref{eq:sup_findim_opt_app_alpha_relax}
if and only if 
the primal-dual pair 
$(\theta^{\star,\sigma}, \lambda_g^{\star,\sigma})$
satisfies the KKT conditions
\begin{subequations}
\begin{align}
  \nabla_{\theta} \mathcal{L}_\sigma(\theta^{\star,\sigma}, \lambda_g^{\star,\sigma}) &= 0, \\
  \left( \| \theta^{\star,\sigma} \|_2^2 - \Gamma_f^2 \right) + \frac{1}{\sigma^2} \left( \| y - \Phi_{1:N} \theta^{\star,\sigma} \|_{(K^{w}_{1:N,1:N})^{-1}}^2 - \Gamma_w^2 \right) &\leq 0, \\
  \lambda_g^{\star,\sigma} \left( \left( \| \theta^{\star,\sigma} \|_2^2 - \Gamma_f^2 \right) + \frac{1}{\sigma^2} \left( \| y - \Phi_{1:N} \theta^{\star,\sigma} \|_{(K^{w}_{1:N,1:N})^{-1}}^2 - \Gamma_w^2 \right) \right) &= 0, \\
  \lambda_g^{\star,\sigma} &\geq 0,
\end{align}
\label{eq:KKT_relaxed}
\end{subequations}
with the corresponding Lagrangian
\begin{align*}
  \mathcal{L}_\sigma(\theta, \lambda_g) &= \Phi_{N+1} \theta - \lambda_g \left( \| \theta \|_2^2 - \Gamma_f^2 + \frac{1}{\sigma^2} \left( \| y - \Phi_{1:N} \theta \|_{(K^{w}_{1:N,1:N})^{-1}}^2 - \Gamma_w^2 \right) \right) \\
  &= \Phi_{N+1} \theta - \lambda_g \left( \| \theta \|_2^2 - \Gamma_f^2 \right)  - \frac{\lambda_g}{\sigma^2} \left( \| y - \Phi_{1:N} \theta \|_{(K^{w}_{1:N,1:N})^{-1}}^2 - \Gamma_w^2 \right).
\end{align*}

Similarly, due to strong duality, the point 
$\theta^{\star}$
is the unique minimizer if the original problem~\eqref{eq:sup_findim_opt_app} if and only if the primal-dual pair 
\mbox{$(\theta^\star, \lambda_f^\star, \lambda_w^{\star,\sigma})$} 
satisfies the KKT conditions
\begin{align*}
  \nabla_{\theta} \mathcal{L}(\theta^\star, \lambda_f^\star, \lambda_w^{\star,\sigma}) &= 0 \\
  \| \theta^\star \|_2^2 - \Gamma_f^2 &\leq 0 \\
 \| y - \Phi_{1:N} \theta^\star \|_{(K^{w}_{1:N,1:N})^{-1}}^2 - \Gamma_w^2 &\leq 0 \\
  \lambda_f^\star \left( \| \theta^\star \|_2^2 - \Gamma_f^2 \right) &= 0 \\
  \lambda_w^{\star,\sigma} \left( \| y - \Phi_{1:N} \theta^\star \|_{(K^{w}_{1:N,1:N})^{-1}}^2 - \Gamma_w^2 \right) &= 0 \\
  \lambda_f^\star, \lambda_w^{\star,\sigma} &\geq 0
\end{align*}
with corresponding Lagrangian
\begin{align*}
  \mathcal{L}(\theta, \lambda_f, \lambda_w) &= \Phi_{N+1} \theta - \lambda_f \left( \| \theta \|_2^2 - \Gamma_f^2 \right) - \lambda_w \left( \| y - \Phi_{1:N} \theta \|_{(K^{w}_{1:N,1:N})^{-1}} - \Gamma_w^2 \right).
\end{align*}
Let 
$(\theta^{\star,3}, \lambda_f^{\star,3}, \lambda_w^{\star,3})$ 
be the 
optimal primal-dual 
solution satisfying the KKT conditions of the original problem~\eqref{eq:sup_findim_opt_app}
under the imposed active set.
Since 
the 
constraints~\eqref{eq:sup_findim_opt_app_rkhs_f} and~\eqref{eq:sup_findim_opt_app_rkhs_w} 
are
active, 
it holds that \mbox{$\lambda_f^{\star,3}, \lambda_w^{\star,3} > 0$}. 
Now, let \mbox{$(\sigma^\star)^2 = \frac{\lambda_f^{\star,3}}{\lambda_w^{\star,3}}$}.
Then, \mbox{$\lambda_f^{\star,3} = (\sigma^\star)^2 \lambda_w^{\star,3}$} and
the primal-dual pair 
\mbox{$(\theta^{\star,3}, \lambda_f^{\star,3})$} 
satisfy the KKT conditions~\eqref{eq:KKT_relaxed} of the relaxed problem: 
\begin{enumerate}
  \item Since 
  \mbox{$\nabla_{\theta} \mathcal{L}_\sigma(\theta^{\star,3}, \lambda_f^{\star,3}) = \nabla_{\theta} \mathcal{L}(\theta^{\star,3}, \lambda_f^{\star,3}, \lambda_w^{\star,3}) = 0$}, 
  the stationarity condition is fulfilled.
  \item As both constraints~\eqref{eq:sup_findim_opt_app_rkhs_f} and~\eqref{eq:sup_findim_opt_app_rkhs_w} are active,
  \begin{align}
    \| \theta^{\star,3} \|_2^2 - \Gamma_f^2 = -\left( \| y - \Phi_{1:N} \theta^{\star,3} \|_{(K^{w}_{1:N,1:N})^{-1}}^2 - \Gamma_w^2 \right) = 0,\notag 
  \end{align} 
  i.e., primal feasibility and complementarity slackness are fulfilled.
  \item The optimal multiplier \mbox{$\lambda_g^{\star,\sigma} = \lambda_f^{\star,3} > 0$} for the relaxed problem is positive. 
\end{enumerate}
Hence, 
due to strong duality
since both constraints~\eqref{eq:sup_findim_opt_app_rkhs_f} and~\eqref{eq:sup_findim_opt_app_rkhs_w} are active, 
\mbox{$(\theta^{\star,3}, \lambda_f^{\star,3})$} 
is the optimal primal-dual solution for the relaxed problem~\eqref{eq:sup_findim_opt_app_alpha_relax} with \mbox{$\sigma = \sigma^\star = \sqrt{\lambda_f^{\star,3} / \lambda_w^{\star,3}}$} 
if and only if 
\mbox{$(\theta^{\star,3}, \lambda_f^{\star,3}, \lambda_w^{\star,3})$} 
is the optimal primal-dual solution for the original problem~\eqref{eq:sup_findim_opt_app}.

\paragraph{Feasibility check} 

The solution is feasible by definition.

\paragraph{Connection to relaxed solution}

As shown above, the optimal cost can be recovered by the cost of the relaxed problem for a specific choice of noise parameter $\sigma = \sigma^\star$, i.e.,
\begin{align}
  \overline{f}_3(x_{N+1}) = \overline{f}^{\sigma^\star}(x_{N+1}).
\end{align}

\subsection{Case 4: Both constraints inactive}
\label{sec:case4}
Last, we investigate the case when both constraints~\eqref{eq:sup_findim_opt_app_rkhs_f} and \eqref{eq:sup_findim_opt_app_rkhs_w} are inactive. 

\paragraph{Optimal solution}

The optimal solution to the unconstrained linear program
is given by case distinction:
\begin{align}
  \overline{f}_4(x_{N+1}) &= 
  \sup_{\substack{
      \theta \in \mathbb{R}^{r}
    }} 
    \quad \Phi_{N+1} \theta 
    \label{eq:sup_findim_opt_app_case4}
    \\
  &= \begin{cases}
    0, &\text{if $\Phi_{N+1} = 0$,} \\
    \infty, &\text{if $\Phi_{N+1} \neq 0$.}
  \end{cases}
  \notag
\end{align}

\paragraph{Feasibility check}

For $\Phi_{N+1} \neq 0$, as the optimal solution is unbounded,
it is infeasible 
for the original problem~\eqref{eq:sup_findim_opt_app}, 
whose feasible set is compact, in particular due to constraint~\eqref{eq:sup_findim_opt_app_rkhs_f}. 
Hence, this case never corresponds to the optimal solution. 

If $\Phi_{N+1} = 0$, 
any $\theta^{\star,4} \in \mathbb{R}^{r}$ is optimal. 
Thus, 
any
feasible solution satisfying constraints~\eqref{eq:sup_findim_opt_app_rkhs_f} and~\eqref{eq:sup_findim_opt_app_rkhs_w} is also optimal.

\paragraph{Connection to relaxed solution}

If $\Phi_{N+1} = 0$, the solution of the relaxed problem~\eqref{eq:sup_findim_opt_app_alpha_relax} is given by $\overline{f}^\sigma(x_{N+1}) = 0$, 
i.e.,
it recovers the solution of the original problem~\eqref{eq:sup_findim_opt_app_case4} for any $\sigma \in (0, \infty)$.

\subsection{Finding the correct active set}\label{sec:thm1_active_set}

Let \mbox{$\theta^{\star,i}$} denote the primal solution corresponding to Case \mbox{$i$}, with \mbox{$i=1,\dots,4$}. 
The active set for the optimal solution is given by the one for which the corresponding primal solution $\theta^{\star,i}$ is feasible for the original problem and leads to the maximum cost among all feasible optimizers $\theta^{\star,i}$ for a specific active set, i.e., 
\begin{subequations}
\begin{align}
  \overline{f}(x_{N+1}) = \max_{ i \in \{ 1,2,3,4 \} } \quad & \Phi_{N+1} \theta^{\star,i} \\
  \mathrm{s.t.} \quad & \| \theta^{\star,i} \|_2^2 \leq \Gamma_f^2, \\
  & \| y - \Phi_{1:N} \theta^{\star,i} \|_{(K^w_{1:N,1:N})^{-1}}^2 \leq \Gamma_w^2. 
\end{align}
\end{subequations}
The solution to the above problem can be obtained by case distinction.
The optimal cost for 
a subset of active constraints
lower-bounds
the optimal cost for 
a superset,
i.e., \mbox{$\Phi_{N+1} \theta^{\star,4} \geq \Phi_{N+1} \theta^{\star,j} \geq \Phi_{N+1} \theta^{\star,3}$} for \mbox{$j \in \{1,2\}$}. Thus, if $\theta^{\star,4}$ is feasible, it will be optimal. 
Otherwise, if either $\theta^{\star,1}$ or $\theta^{\star,2}$ is feasible, it will be optimal.
If neither of the other cases is feasible, $\theta^{\star,3}$ is the optimal solution.

Now, we compare the optimal cost if
both $\theta^{\star,1}$ and $\theta^{\star,2}$ are feasible.
If $\theta^{\star,1}$ is a feasible solution of~\eqref{eq:sup_findim_opt_app}, then it holds that the neglected constraint~\eqref{eq:sup_findim_opt_app_rkhs_w} 
does not change the optimal solution of~\eqref{eq:sup_findim_opt_app}.
Similarly, if $\theta^{\star,1}$ is a feasible solution of~\eqref{eq:sup_findim_opt_app}, then it holds that the neglected constraint~\eqref{eq:sup_findim_opt_app_rkhs_f} 
does not change the optimal solution of~\eqref{eq:sup_findim_opt_app}.
Combining both facts, it holds that
\begin{align}
  \Phi_{N+1} \theta^{\star,1} = \Phi_{N+1} \theta^{\star} = \Phi_{N+1} \theta^{\star,2},\notag
\end{align}
i.e., the optimal cost in Case 1 and Case 2 is equal, \mbox{$\overline{f}_1(x_{N+1}) = \overline{f}_2(x_{N+1})$}.
Finally, we note that in Case 4 it holds that $\Phi_{N+1} \theta^{\star} = \Phi_{N+1} \theta^{\star,4} = 0 = \Phi_{N+1} \theta^{\star,j}$, $j \in \{1, 2, 3\}$, i.e., the optimal cost in all four cases is equal. Therefore, it does not need to be considered explicitly.

To summarize, the optimal cost 
is determined as follows:
\begin{align}
  \overline{f}(x_{N+1}) =
  \begin{cases}
    \sqrt{K^f_{N+1,N+1}} \Gamma_f, &\text{%
      in Cases~1 and 4,
    } \\
  \Phi_{N+1} \theta^\mu + \| P \Phi_{N+1}^\top \|_{P^{-1}} \sqrt{
    \Gamma_w^2 - y^\top \left( K^w_{1:N,1:N} \right)^{-1} y + \| \theta^\mu \|_{P^{-1}}^2}
  &\text{%
    in Cases 2  and 4,
  } \\
    \fmu[\sigma^\star](x_{N+1}) + \sqrt{\Gamma_f^2 + \frac{\Gamma_w^2}{(\sigma^\star)^2} - \| \gmu[\sigma^\star] \|_{\mathcal{H}_{\kf + k^{\sigma^\star}}}^2} \sqrt{\Sigma^f_{\sigma^\star}(x_{N+1})},
    &\text{in Cases 3 and 4.} 
  \end{cases} \notag \\
  =
  \begin{cases}
    \lim_{\sigma \rightarrow \infty} \overline{f}^\sigma(x_{N+1}), &\text{if 
    \mbox{$\left\| y - K^f_{1:N,N+1} \frac{\Gamma_f}{\sqrt{K^f_{N+1,N+1}}} \right\|_{(K^w_{1:N,1:N})^{-1}}^2 \leq \Gamma_w^2$}
    }, \\
    \lim_{\sigma \rightarrow 0} \overline{f}^\sigma(x_{N+1}), &\text{if 
    \mbox{$\left\| \thetamu[] + \frac{P \Phi_{N+1}^\top}{\| P \Phi_{N+1}^\top \|_{P^{-1}}} \sqrt{\Gamma_w^2 - y^\top \left( K^w_{1:N,1:N} \right)^{-1} y + \| \theta^\mu \|_{P^{-1}}^2} \right\|_2^2 \leq \Gamma_f^2$},
    } \\
    \inf_{\sigma \in (0,\infty)} \overline{f}^\sigma(x_{N+1}), &\text{otherwise.}
  \end{cases}
  \notag
\end{align}

Finally, 
we show that the analytical solutions in all cases can be reduced to a single expression:
\begin{enumerate}
  \item 
    Let
    Case 1 
    be
    feasible, i.e., 
    \mbox{$\overline{f}(x_{N+1}) = \overline{f}_1(x_{N+1}) = \lim_{\sigma \rightarrow \infty} \overline{f}^\sigma(x_{N+1})$}. Since \mbox{$\overline{f}_1(x_{N+1}) \leq \overline{f}_3(x_{N+1})$}, it holds that 
    \mbox{$\lim_{\sigma \rightarrow \infty} \overline{f}^\sigma(x_{N+1}) \leq \inf_{\sigma \in (0,\infty)} \overline{f}^\sigma(x_{N+1})$}.
    However, for the infimum it also holds that \mbox{$\inf_{\sigma \in (0,\infty)} \overline{f}^\sigma(x_{N+1}) \leq \lim_{\sigma \rightarrow \infty} \overline{f}^\sigma(x_{N+1})$}. Therefore, it holds that \mbox{$\overline{f}(x_{N+1}) = \lim_{\sigma \rightarrow \infty} \overline{f}^\sigma(x_{N+1}) = \inf_{\sigma \in (0,\infty)} \overline{f}^\sigma(x_{N+1})$}.
  \item 
    Let
    Case 2 
    be
    feasible, i.e., 
    \mbox{$\overline{f}(x_{N+1}) = \overline{f}_2(x_{N+1}) = \lim_{\sigma \rightarrow 0} \overline{f}^\sigma(x_{N+1})$}. Analogously as above, since 
    \mbox{$\overline{f}_2(x_{N+1}) \leq \overline{f}_3(x_{N+1})$}
    and
    \mbox{$\inf_{\sigma \in (0,\infty)} \overline{f}^\sigma(x_{N+1}) \leq \lim_{\sigma \rightarrow 0} \overline{f}^\sigma(x_{N+1})$},
    it holds that \mbox{$\overline{f}(x_{N+1}) = \lim_{\sigma \rightarrow 0} \overline{f}^\sigma(x_{N+1}) = \inf_{\sigma \in (0,\infty)} \overline{f}^\sigma(x_{N+1})$}.
  \item 
    In Case~3, since it holds that $\overline{f}_3(x_{N+1}) = \overline{f}^{\sigma}(x_{N+1})$ for a specific value of $\sigma = \sigma^\star \in (0, \infty)$,
    this implies
    that \mbox{$\inf_{\sigma \in (0,\infty)} \overline{f}^\sigma(x_{N+1}) \leq \overline{f}_3(x_{N+1})$}.
    However, 
    since any feasible solution \mbox{$\theta \in \mathbb{R}^r$} of the original problem~\eqref{eq:sup_findim_opt_app} is also feasible for the relaxed problem~\eqref{eq:sup_findim_opt_app_alpha_relax} for any \mbox{$\sigma \in (0, \infty)$},
    the cost of the original problem is upper-bounded by the cost of the relaxed problem, 
    i.e., it also holds that \mbox{$\overline{f}_3(x_{N+1}) \leq \inf_{\sigma \in (0,\infty)} \overline{f}^\sigma(x_{N+1})$}.
    Combining both inequalities, it 
    thus 
    follows 
    that 
    \mbox{$\overline{f}_3(x_{N+1}) = \inf_{\sigma \in (0,\infty)} \overline{f}^\sigma(x_{N+1})$}.
\end{enumerate}
Therefore, 
we have shown that
\begin{align}
  \overline{f}(x_{N+1}) &= \inf_{\sigma \in (0, \infty)} \overline{f}^\sigma(x_{N+1}),\notag
\end{align}
as claimed in~\eqref{eq:optimal_solution_inf}.

\subsection{Lower bound}
\label{sec:thm1_lower_bound} 

The lower bound corresponding to \cref{thm:optimal_solution} 
is obtained by the same steps as the upper bound, 
replacing ``$\sup$'' with ``$\inf$''.
In Cases 1 and 2, 
this leads to a flipped sign 
for the solutions of the free components in \cref{eq:eq:sup_findim_opt_app_case1_thetaopt,eq:sup_findim_opt_app_case2_zstar},
affecting the feasibility checks. 
Overall, the optimal lower bound is also obtained by case distinction: 
\begin{align}
  \underline{f}(x_{N+1}) =
  \begin{cases}
    -\sqrt{K^f_{N+1,N+1}} \Gamma_f, &\text{%
      in Cases~1 and 4,
    } \\
  \Phi_{N+1} \theta^\mu - \| P \Phi_{N+1}^\top \|_{P^{-1}} \sqrt{
    \Gamma_w^2 - y^\top \left( K^w_{1:N,1:N} \right)^{-1} y + \| \theta^\mu \|_{P^{-1}}^2}
  &\text{%
    in Cases 2  and 4,
  } \\
    \fmu[\sigma^\star](x_{N+1}) - \sqrt{\Gamma_f^2 + \frac{\Gamma_w^2}{(\sigma^\star)^2} - \| \gmu[\sigma^\star] \|_{\mathcal{H}_{\kf + k^{\sigma^\star}}}^2} \sqrt{\Sigma^f_{\sigma^\star}(x_{N+1})},
    &\text{in Cases 3 and 4.} 
  \end{cases} \notag \\
  =
  \begin{cases}
    \lim_{\sigma \rightarrow \infty} \underline{f}^\sigma(x_{N+1}), &\text{if 
    \mbox{$\left\| y + K^f_{1:N,N+1} \frac{\Gamma_f}{\sqrt{K^f_{N+1,N+1}}} \right\|_{(K^w_{1:N,1:N})^{-1}}^2 \leq \Gamma_w^2$}
    }, \\
    \lim_{\sigma \rightarrow 0} \underline{f}^\sigma(x_{N+1}), &\text{if 
    \mbox{$\left\| \thetamu[] - \frac{P \Phi_{N+1}^\top}{\| P \Phi_{N+1}^\top \|_{P^{-1}}} \sqrt{\Gamma_w^2 - y^\top \left( K^w_{1:N,1:N} \right)^{-1} y + \| \theta^\mu \|_{P^{-1}}^2} \right\|_2^2 \leq \Gamma_f^2$},
    } \\
    \sup_{\sigma \in (0,\infty)} \underline{f}^\sigma(x_{N+1}), &\text{otherwise.}
  \end{cases}
  \notag
\end{align}
Analogous to \cref{sec:thm1_active_set}, 
by replacing ``$\inf$'' with ``$\sup$'' and flipping the corresponding inequalities, 
the optimal solution is shown to be given as
\begin{align}
  \underline{f}(x_{N+1}) = \sup_{\sigma \in (0,\infty)} \underline{f}^\sigma(x_{N+1}).
\end{align} 
Note that this bound is \emph{not symmetric}, as the supremum and infimum can be attained for different values of the noise parameter~$\sigma$.

\newpage

\section{Numerical example on safe control for uncertain nonlinear systems: Implementation details}
\label{sec:appendix_CBF_example}

In the following, we describe the setup of the optimization problem solved in the numerical example ``Safe control for uncertain nonlinear systems'' in \cref{sec:discussion_stochastic}.
Inserting the system dynamics~\eqref{eq:CBF_dynamics} into the safety constraint \mbox{$x(k+1) \geq (1 - \gamma) x(k)$}, 
the condition \mbox{$f^{\mathrm{known}}(x(k), u(k)) + \ftr(x(k), u(k)) \geq (1 - \gamma) x(k)$} can be enforced robustly by utilizing the lower uncertainty bound
$\fmu(x,u) - \beta_\sigma \sqrt{\Sigma_\sigma(x,u)} \leq \ftr(x,u)$,
leading to the tightened constraint
\begin{align*}
  (1 - \gamma) x(k) 
  &\leq 
    f^{\mathrm{known}}(x(k), u(k)) + \fmu(x(k),u(k))
  - \beta_\sigma \sqrt{\Sigma_\sigma(x(k),u(k))}.
\end{align*}
Minimization of the user-defined cost $c(x(k),u(k))$ subject to the above 
constraint
can be achieved by solving the following optimization problem:
\begin{align*}
  \min_{x_0, x_1, u_0, \sigma, s} \quad & 
  c(x_0, u_0) + \omega s \\
  \mathrm{s.t.} \quad 
  & x_0 = x(k), \\
  & x_1 = f^{\mathrm{known}}(x_0, u_0) + \fmu(x_0, u_0), \\
  & x_1 \geq (1 - \gamma) x_0 + \beta_\sigma \sqrt{\Sigma_\sigma(x_0,u_0)} - s, \\
  & s \geq 0, \\
  & u_{\max} \geq u_0 \geq u_{\min}.
\end{align*}
Therein, the added slack variable~$s$ allows the optimizer to converge even in case the decrease condition is impossible to satisfy given the bounds~$u_{\min}, u_{\max}$ on the control input.
A large linear penalty~$\omega > 0$ 
thereby incentivizes $s = 0$, i.e., constraint satisfaction;
a solution is classified as feasible if the optimal slack value~$s^\star$ satisfies $s^\star \leq 10^{-6}$.
The optimization problems are implemented in \texttt{CasADi}~\citep{andersson_casadi_2019} and solved using the interior-point optimizer \texttt{IPOPT}~\citep{wachter_implementation_2006} on an \mbox{Intel i9-7940X CPU}.
\cref{tab:app_cbf_params} provides all parameters and expressions used for the implementation.

\begin{table}[h!]
  \centering
  \begin{tabular}{r|c}
    Parameter & Value \\
    \hline
    $f^{\mathrm{known}}(x,u)$ & $0.5x + u - 1$ \\
    $c(x,u)$ & $(f^{\mathrm{known}}(x,u) + \fmu(x, u))^2 + u^2$ \\
    $u_{\min}$ & $-2$ \\
    $u_{\max}$ & $2$ \\
    $\gamma$ & $0.95$ \\
    $\omega$ & $10^4$
  \end{tabular}
  \vspace{1ex}
  \caption{Parameters and expressions for the CBF example}
  \label{tab:app_cbf_params}
\end{table}

Additional implementation details can be found in the published source code at \url{https://gitlab.ethz.ch/ics/bounded-energy-rkhs-bounds} and at \url{https://doi.org/10.3929/ethz-c-000785083}.

\fi

\ifchecklist

\newpage
\section*{NeurIPS Paper Checklist}

\begin{enumerate}

\item {\bf Claims}
    \item[] Question: Do the main claims made in the abstract and introduction accurately reflect the paper's contributions and scope?
    \item[] Answer: \answerYes{} %
    \item[] Justification: Yes, the abstract claims tight error bounds for norm-bounded noise. The optimal error bound is derived in \cref{thm:optimal_solution} in the setting of disturbances with bounded RKHS norm. 

\item {\bf Limitations}
    \item[] Question: Does the paper discuss the limitations of the work performed by the authors?
    \item[] Answer: \answerYes{} %
    \item[] Justification: In \cref{sec:limitations}, the work discusses limitations in terms of dealing with kernel mis-specification, as well as regarding the estimation of the bound on the RKHS norm of the unknown function and noise. 

\item {\bf Theory assumptions and proofs}
    \item[] Question: For each theoretical result, does the paper provide the full set of assumptions and a complete (and correct) proof?
    \item[] Answer: \answerYes{} %
    \item[] Justification: The complete set of assumptions (Assumption 1 and 2) is stated. The paper provides a sketch of the proof and the full technical proof is contained in the appendix.

    \item {\bf Experimental result reproducibility}
    \item[] Question: Does the paper fully disclose all the information needed to reproduce the main experimental results of the paper to the extent that it affects the main claims and/or conclusions of the paper (regardless of whether the code and data are provided or not)?
    \item[] Answer: \answerYes{} %
    \item[] Justification: For the numerical comparison, all relevant constants are stated to compute the bounds. In addition, code details can be checked in the published code at \url{https://gitlab.ethz.ch/ics/bounded-energy-rkhs-bounds} and at \url{https://doi.org/10.3929/ethz-c-000785083}.

\item {\bf Open access to data and code}
    \item[] Question: Does the paper provide open access to the data and code, with sufficient instructions to faithfully reproduce the main experimental results, as described in supplemental material?
    \item[] Answer: \answerYes{} %
    \item[] Justification: The full code to reproduce the experiments and generated figures is published online at \url{https://gitlab.ethz.ch/ics/bounded-energy-rkhs-bounds} and at \url{https://doi.org/10.3929/ethz-c-000785083}. It contains a README file that details the install instructions.

\item {\bf Experimental setting/details}
    \item[] Question: Does the paper specify all the training and test details (e.g., data splits, hyperparameters, how they were chosen, type of optimizer, etc.) necessary to understand the results?
    \item[] Answer: \answerYes{} %
    \item[] Justification: For the numerical experiments, all parameters used for the bounds are specified in the main text and in the appendix in \cref{sec:appendix_CBF_example}. Additional details can be found in the published code at \url{https://gitlab.ethz.ch/ics/bounded-energy-rkhs-bounds} and at \url{https://doi.org/10.3929/ethz-c-000785083}.

\item {\bf Experiment statistical significance}
    \item[] Question: Does the paper report error bars suitably and correctly defined or other appropriate information about the statistical significance of the experiments?
    \item[] Answer: \answerYes{} %
    \item[] Justification: The numerical experiments in \cref{sec:discussion_stochastic} are averaged over $1000$ and $20$ runs, respectively. The plots include the corresponding $\{5\%, 95\%\}$-percentiles.

\item {\bf Experiments compute resources}
    \item[] Question: For each experiment, does the paper provide sufficient information on the computer resources (type of compute workers, memory, time of execution) needed to reproduce the experiments?
    \item[] Answer: \answerYes{} %
    \item[] Justification: In \cref{sec:discussion_stochastic}, the paper provides a detailed analysis of the runtime for the second numerical example, with the CPU model described in~\cref{sec:appendix_CBF_example}. Further information regarding the overall runtime of both experiments can be found in the README file of the code available online at \url{https://gitlab.ethz.ch/ics/bounded-energy-rkhs-bounds} and at \url{https://doi.org/10.3929/ethz-c-000785083}. The memory requirements are negligible (\mbox{$<4$}GB) and therefore not provided.
    
\item {\bf Code of ethics}
    \item[] Question: Does the research conducted in the paper conform, in every respect, with the NeurIPS Code of Ethics \url{https://neurips.cc/public/EthicsGuidelines}?
    \item[] Answer: \answerYes{} %
    \item[] Justification: No potential conflicts with the NeurIPS Code of Ethics could be identified by the authors.

\item {\bf Broader impacts}
    \item[] Question: Does the paper discuss both potential positive societal impacts and negative societal impacts of the work performed?
    \item[] Answer: \answerNA{} %
    \item[] Justification: The work is fundamental mathematical work in kernel-based estimation and does not discuss potential societal impacts. While the societal impact of fundamental research is hard to evaluate, no direct potentially harmful societal impact could be identified by the authors.
    
\item {\bf Safeguards}
    \item[] Question: Does the paper describe safeguards that have been put in place for responsible release of data or models that have a high risk for misuse (e.g., pretrained language models, image generators, or scraped datasets)?
    \item[] Answer: \answerNA{} %
    \item[] Justification: The work is of purely theoretical and uses academic examples to illustrate and compare against existing work. 

\item {\bf Licenses for existing assets}
    \item[] Question: Are the creators or original owners of assets (e.g., code, data, models), used in the paper, properly credited and are the license and terms of use explicitly mentioned and properly respected?
    \item[] Answer: \answerNA{} %
    \item[] Justification: The code submitetd as part of this work is original work by the authors; beyond software code, no further assets are used.

\item {\bf New assets}
    \item[] Question: Are new assets introduced in the paper well documented and is the documentation provided alongside the assets?
    \item[] Answer: \answerNA{} %
    \item[] Justification: The paper does not release new assets.

\item {\bf Crowdsourcing and research with human subjects}
    \item[] Question: For crowdsourcing experiments and research with human subjects, does the paper include the full text of instructions given to participants and screenshots, if applicable, as well as details about compensation (if any)? 
    \item[] Answer: \answerNA{} %
    \item[] Justification: The paper does not involve crowdsourcing nor research with human subjects.

\item {\bf Institutional review board (IRB) approvals or equivalent for research with human subjects}
    \item[] Question: Does the paper describe potential risks incurred by study participants, whether such risks were disclosed to the subjects, and whether Institutional Review Board (IRB) approvals (or an equivalent approval/review based on the requirements of your country or institution) were obtained?
    \item[] Answer: \answerNA{} %
    \item[] Justification: The paper does not involve crowdsourcing nor research with human subjects.

\item {\bf Declaration of LLM usage}
    \item[] Question: Does the paper describe the usage of LLMs if it is an important, original, or non-standard component of the core methods in this research? Note that if the LLM is used only for writing, editing, or formatting purposes and does not impact the core methodology, scientific rigorousness, or originality of the research, declaration is not required.
    \item[] Answer: \answerNA{} %
    \item[] Justification: LLM technology does not impact the core research of this work and is thus not declared.

\end{enumerate}

\fi

\end{document}